\newtheorem*{unnumberedlemma}{Lemma}
\crefname{equation}{Eq.}{Eqs.} 
\Crefname{equation}{Eq.}{Eqs.} 
\renewcommand{\textcolor}[2]{#2}
\newcommand*{\rom}[1]{\expandafter\@slowromancap\romannumeral #1@}
\title{\huge Graph Feedback Bandits on Similar Arms:  With and Without Graph Structures}
\author
{
    Han Qi\thanks{Xi'an Jiaotong University.  Email: \{{\tt qihan19@stu.xjtu.edu.cn,co.fly@stu.xjtu.edu.cn,zhuli@xjtu.edu.cn}\}}
    \thanks{Shanghai Artificial Intelligence Laboratory. Email: {\tt zhangqiaosheng@pjlab.org.cn}}
    \and
    Fei Guo$^{*}$
	\and
	Li Zhu$^{*}$
    \and
    Qiaosheng Zhang$^\dagger$
}
\date{}
\begin{document}

\maketitle

\begin{abstract}
In this paper, we study the stochastic multi-armed bandit problem with graph feedback. Motivated by applications in clinical trials and recommendation systems, we assume that two arms are connected if and only if they are  similar (i.e., their means are close).  
We establish a regret lower bound for this problem under the novel feedback structure and introduce two upper confidence bound (UCB)-based algorithms: Double-UCB and Conservative-UCB. The regret upper bounds of both algorithms match the  lower bound up to a logarithmic factor. 
Leveraging the similarity structure, we further explore a scenario  in which the number of arms increases over time (referred to as the \emph{ballooning setting}). Practical applications of this scenario include Q\&A platforms (e.g., Reddit, Stack Overflow, Quora) and product reviews on platforms like  Amazon and Flipkart, where answers (or reviews) continually appear, and the goal is to display the best ones at the top. 
We extend our two UCB-based algorithms to the ballooning setting. Under mild assumptions, we derive regret upper bounds for both algorithms and discuss their sublinearity.  Furthermore, we propose new variants of the corresponding algorithms that do not rely on prior knowledge of the graph  structural and provide  regret upper bounds. 
Finally, we conduct experiments to validate the theoretical results.  
\end{abstract}

\section{Introduction}
The multi-armed bandit is a classical reinforcement learning problem. At each time step, the learner selects an arm, receiving a reward drawn from an unknown probability distribution. The learner's goal is to maximize the cumulative reward over a period of time steps. This problem has attracted significant attention from the online learning community because of its effective balance between exploration (trying out as many arms as possible) and exploitation (utilizing the arm with the best current performance). A number of applications of multi-armed bandit can be found in online sequential decision problems, such as online recommendation systems \citep{li2011unbiased}, online advertisement campaign \citep{schwartz2017customer} and clinical trials \citep{villar2015multi,aziz2021multi}. 

In the standard multi-armed bandit problem, the learner can only observe the reward of the chosen arm. Meanwhile, existing research \citep{mannor2011bandits,caron2012leveraging,hu2020problem,lykouris2020feedback} has also considered   the bandit problem with \emph{side observations}, wherein the learner can observe information about arms other than the selected one. This observation structure can be encoded as a graph, where each node represents an arm. Node $i$ is linked to node $j$ if selecting $i$ provides information about the reward of $j$. 

In this paper, we study a new feedback model: if two arms are $\epsilon$-similar, i.e., the absolute value of the difference between the means of the two arms does not exceed $\epsilon$, an edge  forms between them. This means that after observing the reward of one arm, the decision-maker simultaneously knows the rewards of arms similar to it. If $\epsilon=0$, this feedback model reduces to the standard multi-armed bandit problem. If $\epsilon$ is greater than the maximum difference between the means, this feedback model becomes equivalent to the full information bandit problem.

As a motivating example, consider the recommendation problem on Spotify or Apple Music. After a recommender suggests a song to a user, it can observe not only whether the user liked or saved the song (reward), but also infer that the user is likely to like or save another song that is very similar. Similarity may be based on factors such as the artist, songwriter, genre, and more.
As another motivating example, consider the problem of medicine clinical trials. Each arm represents a different medical treatment plan, and these plans may have some similarities such as dosage, formulation, etc.  When a researcher selects a plan,  they not only observe the reward of that treatment plan but also learn the effects of others similar to the selected one. The treatment effect (reward)  can be either some summary information or a relative effect, such as positive or negative.
Similar examples also appear in chemistry molecular simulations \citep{perez2020adaptivebandit}.

Specifically, this paper considers two bandit models: (i) the standard  graph feedback bandit model and (ii) the bandit problem with an increasing number of arms. The latter is a more challenging setting than the standard one. Relevant applications for this scenario encompass Q\&A platforms such as Reddit, Stack Overflow, and Quora, as well as product reviews on platforms like Amazon and Flipkart. The continuous addition of answers or product reviews on these platforms means that the number of arms increases over time. The goal is to display the best answers or product reviews at the top.
This problem has been previously studied and referred to as ``ballooning multi-armed bandit" by \citet{ghalme2021ballooning}. However, they require that  the best arm is more likely to arrive in the early rounds, while we do not make such assumption.
Our contributions are as follows: 
\begin{enumerate}
     
\item We propose a new feedback model, where an edge is formed if the means of two arms are less than a constant $\epsilon$. We  first analyze the underlying graph $G$ of this feedback model and establish that  the \emph{domination  number} $\gamma(G)$ is equal to the \emph{independent domination number} $i(G)$, while the \emph{independence number} $\alpha(G)$ is no greater than twice the domination number $\gamma(G)$, i.e., $\gamma(G)=i(G) \geq \alpha(G)/2$. This result helps us obtain regret bounds  based on  the domination number without explicitly using the feedback graph to guide exploration. This cannot be achieved  in the standard graph feedback bandit problem \citep{lykouris2020feedback}.
\textcolor{blue}{We also establish a lower bound of order $\Omega(\sqrt{T})$ that is independent of the graph structure, which distinguishes our result from previous works \citep{mannor2011bandits, cohen2016online}  where the lower bound depends on the independence number. }

\item Then, we introduce two algorithms tailored to this specific feedback structure: Double-UCB, which utilizes two UCB algorithms sequentially, and Conservative-UCB, which adopts a more conservative strategy. \textcolor{blue}{ The  regret upper bounds of both algorithms are of order $O(\sqrt{T}\log T)$, matching the lower bound up to a logarithmic factor. }
Additionally, we analyze the regret bounds of  UCB-N \citep{lykouris2020feedback}  applied to our proposed setting  and prove that  its regret upper bound is of the same order as that of Double-UCB and Conservative-UCB.

\item We extend these two algorithms to the ballooning setting  where the number of arms increases over time, and refer to the extended versions as Double-UCB-BL and Conservative-UCB-BL. Our algorithms do not require the best arm to arrive early (as in  \citep{ghalme2021ballooning}) but instead assumes that the means of each arriving arm are independently sampled from some distribution, \textcolor{blue}{or that the number of changes in the optimal arm is $o(\sqrt{T})$}, both of which are more realistic. 
We provide  regret upper bounds for both algorithms, along with a simple regret lower bound for Double-UCB-BL. The lower bound of Double-UCB-BL shows that it achieves sublinear regret only when the means are drawn from a normal-like distribution, while it suffers linear regret when the means are drawn from a uniform distribution. In contrast, Conservative-UCB-BL  achieves a sublinear regret upper bound regardless  of the distribution of the means. 
\item Furthermore, these two algorithms require knowledge of the graph structure in ballooning setting, which is often difficult to obtain in practice. Therefore, we design refined versions of these two algorithms that do not rely on  graph  information and also establish regret upper bounds.
\end{enumerate} 
We list the regret upper bounds of the algorithms proposed in this paper in Table~\ref{tab:regret}.

\begin{table}[htbp]
  \centering
  \caption{\textcolor{blue}{Regret of different algorithms under two settings}}
  \label{tab:regret}
  \begin{tabular}{@{\hspace{2pt}}
                  >{\raggedright\arraybackslash}m{1.8cm}|%
                  >{\centering \arraybackslash}m{4.0cm}|%
                  c|c@{}}
    \toprule
    \multicolumn{1}{c|}{Setting} &
    \multicolumn{1}{c|}{Algorithm} &
    \multicolumn{2}{c}{Regret} \\ \midrule
    \multirow{3}{*}{ Stationary}
      & Double-UCB (without graph) & \multicolumn{2}{c}{$ O(\sqrt{T}\log T)$ } \\ \cmidrule{2-4}
      & Conservative-UCB (without graph) & \multicolumn{2}{c}{$ O(\log T/(\Delta_{2\epsilon})^2)$  } \\ \cmidrule{2-4}
      & Conservative-UCB (with graph)    & \multicolumn{2}{c}{ $O(\sqrt{T}\log T)$ } \\ \midrule
    \multirow{5}{*}{ Ballooning}
      &                       & Assumption~\ref{assumption0}: $\mathcal{P}$ is $\mathcal{N}(0,1)$  & Assumption~\ref{assumption0}: $\mathcal{P}$ is $U(0,1)$  \\ \cmidrule{2-4}
      & Double-UCB-BL (with graph)    & $O(\log T\sqrt{Te^{2\epsilon\sqrt{2\log T}}})$ & $\Omega(T)$ \\ \cmidrule{2-4}
      & Conservative-UCB-BL (with graph)    &$O(\sqrt{T\log T}\log T)$  &  $O(\sqrt{T\log T}\log T)$  \\ \cmidrule{2-4}
      & Double-UCB-BL (without graph) & $O(\log T\sqrt{Te^{2\epsilon\sqrt{2\log T}}})$ & $\Omega(T)$ \\ \cmidrule{2-4}
      & Conservative-UCB-BL (without graph) &    \multicolumn{2}{c}{Without Assumption~\ref{assumption0}: $O\left (\sqrt{\alpha(G_T)TH} + H\log(T)/(\Delta_{\min}^T)^2\right)$ } \\ \bottomrule
  \end{tabular}
\end{table}

\section{Related Works}

Bandits with side observations were first introduced by \citet{mannor2011bandits} for adversarial settings. They proposed two algorithms: ExpBan, a hybrid algorithm combining expert and bandit algorithms based on clique decomposition of the side observations graph; and ELP, an extension of the well-known EXP3 algorithm \citep{auer2002nonstochastic}. The works 
\citep{caron2012leveraging,hu2020problem}  considered stochastic bandits with side observations, proposing the UCB-N, UCB-NE, and TS-N algorithms, respectively. The regret upper bounds they obtain are of the form $ \sum_{c \in \mathcal{C}}\frac{\max_{i \in c}\Delta_i \ln(T)}{(\min_{i \in c}\Delta_i)^2} $, where $\mathcal{C}$ is the clique covering of the side observation graph, $\Delta_i$ is the gap between the expectation
of the optimal arm and the $i$-th arm, $T$ is the time horizon. 
 
There have been some works that employ techniques beyond clique partition. The works \citep{buccapatnam2014stochastic,buccapatnam2018reward} proposed an algorithm named UCB-LP, which combines a version of eliminating arms \citep{even2006action} suggested by
\citet{auer2010ucb} with linear programming to incorporate the graph structure. This algorithm has a regret guarantee of $\sum_{i\in D}\frac{\ln(T)}{\Delta_i}+K^2$, where $D$ is a particularly selected dominating set, $K$ is the number of arms. The work \citep{cohen2016online} used a method based on  elimination and provides  regret upper bound   $\tilde{O}(\sqrt{\alpha T}) $, where $\alpha$ is the independence number of the underlying graph. Another work \citep{lykouris2020feedback} utilized a hierarchical approach inspired by elimination to analyze the feedback graph, demonstrating that UCB-N and TS-N have regret bounds of order $\tilde{O}(\sum_{i \in I}\frac{1}{\Delta_i})$, where $I$ is an independent set of the graph.  There are also some works that consider the case where the feedback graph is a random graph \citep{alon2017nonstochastic,ghari2022online,esposito2022learning}.

Up to now, there is limited research considering scenarios where the number of arms can change. Mortal bandits \citep{chakrabarti2008mortal} was the first to explore this dynamic setting. Their model assumes that each arm has a lifetime budget, after which it automatically  disappear and is replaced  by a new arm. Since their algorithm needs to continuously explore newly available arms in this setting, they only provided an upper bound of the mean regret per time step. \citet{tekin2012online,karthik2022best,karthik2024optimal} studied the restless bandit problem, where the state of the arms changes but their number remains constant.
\citet{ghalme2021ballooning} considered the ``ballooning multi-armed bandits" where the number of arms keeps increasing. They show that  the regret grows linearly without any distributional assumptions on the arrival
of the arms' means. With the assumption that the optimal arm arrives early with high probability, their proposed algorithm BL-MOSS can achieve sublinear regret. 
In this paper, we also  consider the ``ballooning" setting without making assumptions about the optimal arm's arrival pattern but instead use the feedback graph model mentioned above.

Clustering bandits \citep{gentile2014online,li2016art,yang2024optimal,wang2024online} are also relevant to our work. Typically, these models assume that a set of arms (or items) can be clustered into several unknown groups. Within each group, the observations associated to  each of the arms follow the same distribution with the same mean. However, we do not employ a defined concept of clustering groups. Instead, we establish connections between arms by forming an edge only when their means are less than a threshold $\epsilon$, thereby creating a graph feedback structure. Correlated bandits \citep{gupta2021multi} is also relevant to our work, where the rewards between different arms are correlated. Choosing one arm allows observing the upper bound of another arm's reward. In contrast,  we assume that the underlying feedback follows a graph structure and consider the ballooning setting.

\section{Problem Formulation} 
\subsection{Graph Feedback with Similar Arms} \label{sec:feedback_graph}
We consider a stochastic $K$-armed bandit problem with an undirected feedback graph and time horizon $T$ (where $K \leq T$). At each round $t$, the learner selects an arm $i_t$, obtains a reward $X_t(i_t)$.  Without losing generality, we assume that the rewards are bounded in $[0,1]$ or $\frac{1}{2}$-subGaussian\footnote{This is simply to provide a unified description of both bounded rewards and subGaussian rewards. Our results can be easily extended to other subGaussian distributions.}.  The expectation of $X_t(i)$ is denoted as $ \mu(i):=\mathbb{E}[X_t(i)] $. 
Graph $G:=(V,E)$ denotes the underlying graph that captures all the feedback relationships over the arms set $V$. An edge $i \leftrightarrow j$ in $E$ means that $i$ and $j$ are $\epsilon$-similarty, i.e., 
 \[|\mu(i)-\mu(j)| < \epsilon,\] 
where $\epsilon$ is some constant greater than $0$. The learner can get a side observation of arm $i$ when pulling arm $j$, and vice versa. Let $N_i$ be a subset of $\{1,..,K\}$ and  denote the observation set of arm $i$ consisting of $i$ and its neighbors in $G$. Let $T_i(t)$ denote the total number of times arm $i$ is seletced up to time $t$ and  $O_t(i)$ denote the number of  observations of arm $i$ up to time $t$.  We assume that each node in graph $G$ contains a self-loop, i.e., the learner can observe the reward of the pulled arm.  

Let $i^{*}$ denote the expected reward of the optimal arm, i.e., $\mu(i^{*})=\max_{i \in \{1,...,K\}}\mu(i)$. The gap between the expectations of the optimal arm $i^*$ and arm $i$ is denoted as $\Delta_i:=\mu(i^{*})-\mu(i).$  The minimum and maximum gaps are denoted by  $\Delta_{\min}:=\mu(i^{*})-\max_{j \neq i^{*}}\mu(j)$ and $\Delta_{\max}:=\mu(i^{*})-\min_{j}\mu(j)$, respectively. 
A policy, denoted as $\pi$,  selects arm $i_t$ to play at time step $t$ based on the history plays and rewards.
The performance of the policy $\pi$ is measured by the cumulative regret
\begin{equation}
    R_T(\pi):= \mathbb{E} \bigg[ \sum_{t=1}^{T}\mu(i^{*})-\mu(i_t) \bigg]. 
\end{equation}

\subsection{The ballooning bandit setting}

This setting is the same as the graph feedback with similar arms described above, except that the number of arms  increases over time. Let $K(t)$ denote the set of  available arms at round $t$. We consider a tricky case where only one arm $a_t$  arrives at each round $t$. For each $t$, the total number of arms satisfies $|K(t)|=t$.  Let $G_t$ denote the underlying graph at round $t$.
The newly arrived arm may be connected to  previous arms, depending on whether their means satisfy the $\epsilon$-similarity. In this setting, the optimal arm may vary over time. Let $i_t^{*}$ denote  the optimal arm at round $t$, i.e., $\mu(i_t^{*})=\max_{i \in K(t)}\mu(i) $. The regret over $T$ rounds  is then given by 
\begin{equation}
    R_{T}(\pi):= \mathbb{E} \bigg[ \sum_{t=1}^{T}\mu(i_t^{*})-\mu(i_t) \bigg]. 
\end{equation}

\subsection{Notations}
We list the notations used in this paper in Table~\ref{tab:notation}.

\begin{table}[htbp]
\centering
\caption{\textcolor{blue}{Notation used in this paper}}
\label{tab:notation}
\renewcommand{\arraystretch}{1.2} 
\begin{tabular}{@{}cc@{}} 
\toprule
\textbf{Symbol} & \textbf{Description}  \\ 
\midrule
$i_t$ & index of the arm played at time $t$   \\
$i^{*},i_t^{*}$ & index of the optimal arm in stationary and ballooning settings    \\
$\bar{\mu}_t(i),\mu(i)$ &  empirical mean and expected  reward of arm $i$  \\
$T_i(t)$ & number of times arm $i$ is pulled up to time $t$  \\
$N_i$ & set of arm $i$ and its neighbors   \\
$O_t(i)$ & number of observations of arm $i$ till time $t$  \\
$G$ & underlying graph in stationary setting  \\
$G_T (G_T^{\mathcal{P}})$& underlying graph in ballooning setting at time $T$ (with Assumption~\ref{assumption0}) \\
$\gamma(G)$ & domination number of $G$ (Definition~\ref{def:domination})  \\
$\alpha(G) $ & independence number of graph $G$ (Definition~\ref{def:ind})  \\
$\Delta_i$ & reward gap between optimal arm and arm $i$  \\
$\Delta_{\max}$ & maximum gap between optimal arm and suboptimal arms  \\
$\Delta_{\max}^T$ & maximum gap over all pairs of arms in ballooning settings  \\
$\Delta_{\min}$ & maximum gap between optimal arm and suboptimal arms \\
$\Delta_{\min}^T$ & minimum gap over all pairs of arms  in ballooning settings  \\

$G_{2\epsilon}$ & subgraph with arms $\{i \in V: \mu(i^{*})-\mu(i) < 2\epsilon \}$ \\
$\Delta_{2\epsilon}$ & minimum gap among arms in $G_{2\epsilon}$  \\
$\mathcal{I}(N_i)$ & set of all independent dominating sets in $N_i$  \\
$\mathcal{I}(i^{*})$ & $\mathcal{I}(i^{*}) = \bigcup_{i \in N_{i^{*}}} \mathcal{I}(N_i)$  \\
$M$ & number of arms close to the optimal arms  \\
$H$ & number of changes in the optimal arm \\

\bottomrule
\end{tabular}
\end{table}

\section{Stationary Environments}
In this section, we consider the problem of graph feedback with similar arms in stationary environments, i.e., the number of arms remains constant. We first analyze the structure of the feedback graph. Then, we provide a problem-dependent regret lower bound. Following that, we introduce the Double-UCB and Conservative-UCB algorithms and provide their regret upper bounds respectively.

\begin{definition}[dominating set and domination number]
\label{def:domination}
    A dominating set \(S\) in a graph \(G\) is a set of vertices such that every vertex not in \(S\) is adjacent to a vertex in \(S\). The domination number of \(G\), denoted as \(\gamma(G)\), is the smallest size of a dominating set in $G$.
\end{definition}

\begin{definition}[independent set and independence number]
\label{def:ind}
    An independent  set contains vertices that are not adjacent to each other. The independence number of \(G\), denoted as \(\alpha(G)\), is the largest size of an independent set in \(G\).
\end{definition}

\begin{definition}[independent dominating set and independent domination number]
An independent dominating set in \(G\) is a set that is both dominating and  independent. The independent domination number of \(G\), denoted as \(i(G)\), is the smallest size of such a set. 
    
\end{definition}
    
For a general graph $G$, the following holds immediately:
\[ \gamma(G) \leq i(G) \leq \alpha(G) .\]


For a feedback graph that satisfies the construction rule in Section~\ref{sec:feedback_graph}, we have the following proposition. 
\begin{proposition}
    \label{pro1}
Let $G$ denote the feedback graph in the similar arms setting. We have $$\gamma(G)=i(G) \geq \frac{\alpha(G)}{2}.  $$ 
\end{proposition}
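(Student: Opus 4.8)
The plan is to exploit the key structural fact about this feedback graph: if $\mu(i) \le \mu(j) \le \mu(k)$ and both $i \leftrightarrow k$ and $i \leftrightarrow j$ hold — actually more to the point, if $i$ and $k$ are adjacent, then $|\mu(i)-\mu(k)| < \epsilon$, so \emph{every} arm $j$ with mean between $\mu(i)$ and $\mu(k)$ is adjacent to both $i$ and $k$. In other words, the neighborhoods are ``interval-like'' with respect to the ordering of means. This is essentially the statement that $G$ is an interval graph (indeed a unit/indifference interval graph), obtained by assigning to arm $i$ the interval $[\mu(i), \mu(i)+\epsilon)$ or by the threshold rule on the real line; I would state and prove this ordering observation first as the main lemma, since everything else follows from it.

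For $\gamma(G) = i(G)$: since always $\gamma(G) \le i(G)$, I only need $i(G) \le \gamma(G)$, i.e., from any minimum dominating set $S$ I must produce an \emph{independent} dominating set of size $\le |S|$. Here I would order the arms by mean, $\mu(v_1) \le \mu(v_2) \le \cdots$, and run a greedy sweep: repeatedly take the unprocessed arm of smallest mean that is not yet dominated, add it to the set being built, and mark it and all its neighbors as dominated. The resulting set $S'$ is dominating by construction, and it is independent because when a new vertex $u$ is added, every earlier-added vertex $w$ had $\mu(w) < \mu(u)$ and $u$ was not dominated by $w$, so $u \notin N_w$. It remains to show $|S'| \le \gamma(G)$; this is where the interval structure does the work — one argues that the greedy set picks at most one vertex from any ``clique region'' that a single dominator can cover, or equivalently runs a charging/exchange argument mapping each greedy pick to a distinct vertex of $S$. (Alternatively, one can cite that for interval graphs $\gamma = i$ is classical, but giving the self-contained greedy argument is cleaner and keeps the paper elementary.)

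For $\alpha(G) \le 2\gamma(G)$: let $I = \{u_1, \dots, u_m\}$ be a maximum independent set with $\mu(u_1) < \cdots < \mu(u_m)$ (strict, since two arms with equal means are adjacent). Between consecutive elements $u_k, u_{k+1}$ we have $\mu(u_{k+1}) - \mu(u_k) \ge \epsilon$ (else they'd be adjacent). Now take any dominating set $D$; each $u_k$ must be dominated, so there is $d_k \in D$ with $d_k \in N_{u_k}$, meaning $|\mu(d_k) - \mu(u_k)| < \epsilon$, i.e. $\mu(d_k) \in (\mu(u_k) - \epsilon, \mu(u_k) + \epsilon)$. These open intervals of length $2\epsilon$ centered at the $\mu(u_k)$'s: because consecutive centers are $\ge \epsilon$ apart, each point of $\mathbb{R}$ lies in at most two of them, so a single $d \in D$ can serve as $d_k$ for at most two distinct indices $k$. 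Hence $m = |I| \le 2|D|$, and taking $D$ minimum gives $\alpha(G) = m \le 2\gamma(G)$.

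The main obstacle is the middle step, $i(G) \le \gamma(G)$ — the other two are short counting arguments once the ordering lemma is in hand, but turning ``greedy gives an independent dominating set'' into ``$\le \gamma(G)$'' requires a genuine exchange/charging argument against an arbitrary minimum dominating set $S$. Concretely I would, processing arms in increasing mean order, maintain an injection from the greedy picks made so far into $S$: when greedy picks an undominated vertex $u$, it has some dominator $s \in S$ with $\mu(s) < \mu(u) + \epsilon$; I would argue this $s$ has not yet been assigned, using that all previously picked greedy vertices have smaller mean and were already dominated at the time $u$ was undominated, to rule out $s$ dominating two greedy picks. Getting this injectivity bookkeeping exactly right (especially handling ties in means and the boundary of the open $\epsilon$-intervals) is the part that needs care; everything else is routine given the interval-graph viewpoint.
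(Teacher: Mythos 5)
Your reduction of the problem to the ordering of means (the unit--interval / ``neighborhoods are intervals'' viewpoint) is sound, and your proof of $\alpha(G)\le 2\gamma(G)$ is correct and in fact slightly cleaner than the paper's: you charge each vertex of a maximum independent set to a dominator within distance $\epsilon$ of its mean and use the $\ge\epsilon$ spacing of the independent set to cap the multiplicity at two, which bounds $\alpha$ directly against an arbitrary dominating set. The paper instead proves $G$ is claw-free and double-counts against a minimum \emph{independent} dominating set; the two arguments are essentially equivalent.

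The genuine gap is in the middle step, $i(G)\le\gamma(G)$, and it is not just bookkeeping: the construction you propose fails. Take three arms with means $0$, $0.9$, $1.8$ and $\epsilon=1$, so $G$ is the path $0\leftrightarrow 0.9\leftrightarrow 1.8$. Your left-to-right greedy picks the undominated vertex of smallest mean, namely $0$, which dominates only $\{0,0.9\}$, and then must also pick $1.8$, returning an independent dominating set of size $2$, while $\gamma(G)=i(G)=1$ (the middle arm). The single dominator $0.9$ covers \emph{both} of your greedy picks, so no injective charging into a minimum dominating set exists; the claim that ``a single dominator can serve at most one greedy pick'' is exactly what this example refutes. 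Your fallback citation is also off: general interval graphs do \emph{not} satisfy $\gamma=i$ (attach two private pendants to each endpoint of an edge $AB$; then $\gamma=2$ but $i=3$, and this is an interval graph). What is true, and what the paper uses, is that $G$ is \emph{claw-free} (among any three neighbors of a vertex $a$, two lie on the same side of $\mu(a)$ and hence are within $\epsilon$ of each other), and $\gamma=i$ for claw-free graphs by the theorem of Allan and Laskar. To repair your argument you would either invoke claw-freeness, or give a direct exchange argument that converts a minimum dominating set into an independent one of the same size (e.g., repeatedly replace an adjacent pair by suitable private neighbors) --- but the left-to-right greedy as stated does not do this.
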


\begin{proof}[Proof Sketch]
    The first equation can be obtained by proving that $G$ is a claw-free graph and using the fact that $\gamma(G)=i(G)$ for any claw-free graph (see Lemma \ref{claw-free} in Appendix \ref{appendix_a}). The second inequality can be obtained by a double counting argument. The details are provided in  Appendix \ref{appendix_b}. 
\end{proof}

Proposition \ref{pro1} shows that $\gamma(G) \leq \alpha(G) \leq 2\gamma(G) $.  Once we obtain the  regret bounds related to the independence number, we  simultaneously obtain the regret bounds related to the domination number. Therefore, 
we can obtain regret bounds  based on the minimum dominating set without using the feedback graph to explicitly target exploration. This cannot be guaranteed in the standard graph feedback bandit problem \citep{lykouris2020feedback}.

\subsection{\textcolor{blue}{Lower Bounds}}
Before presenting our algorithms, we first investigate the regret lower bound of this problem. 
Previous works \citep{buccapatnam2014stochastic, li2020stochastic}  have provided instance-dependent lower bounds for the standard graph feedback problem. The proof idea is to construct two bandit instances: in the second instance, a specific suboptimal arm from the first problem is upgraded to have a slightly higher mean than the optimal arm, while the means of all other arms remain unchanged. Then, using a method similar to that in \citep{lai1985asymptotically}, an instance-dependent lower bound is derived.
However, this approach does not help us establish an instance-dependent lower bound. When the  means of arms in the second problem instance change, the feedback graph structure of the problem also changes. The two instances no longer share the same information structure, making KL-divergence decomposition infeasible. Therefore, we turn to proving a minimax lower bound instead.

When the feedback graph structures of two problem instances are the same, we have the following divergence decomposition. The proof is given in Appendix~\ref{appendix_b}. 
\begin{lemma}
\label{kldecom}
    Let $ v=(P_1,...,P_K)$ be the reward distributions associated with one $K$-armed bandit with feedback graph $G$, and let $v'=(P_1',...,P_K')$ be the reward distributions associated with another  $K$-armed bandit with feedback graph $G'$. Assume that $G=G'$.  Let $\mathbb{P}_{v,\pi}$ and  $\mathbb{P}_{v',\pi}$ be the probability measures induced by the $n$-round interconnection of $\pi$ and $v$ (respectively, $\pi$ and $v'$). The expectations under $\mathbb{P}_{v,\pi}$ is denoted as $\mathbb{E}_v$. Then, 
    \begin{equation}
        \mathrm{KL}(\mathbb{P}_{v,\pi},\mathbb{P}_{v',\pi}) = \sum_{i=1}^K \sum_{j \in N_i } \mathbb{E}_v[T_i(T)]\mathrm{KL}(P_j,P_j').
    \end{equation}
\end{lemma}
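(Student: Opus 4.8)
The plan is to adapt the standard divergence-decomposition lemma for multi-armed bandits (e.g., Lemma 15.1 in Lattimore--Szepesv\'ari) to the graph-feedback setting, where the key observation is that pulling arm $i$ reveals not one reward sample but an independent sample from $P_j$ for every $j \in N_i$. First I would set up the canonical probability space: a run of the policy $\pi$ against bandit $v$ over $T$ rounds produces a history $H_T = (I_1, (X_1(j))_{j \in N_{I_1}}, I_2, (X_2(j))_{j \in N_{I_2}}, \dots, I_T, (X_T(j))_{j \in N_{I_T}})$, i.e., at each round the learner picks $I_t$ and then observes the whole vector of rewards of arms in the observation set $N_{I_t}$. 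Because $G = G'$, the neighborhoods $N_i$ coincide under $v$ and $v'$, so the two measures $\mathbb{P}_{v,\pi}$ and $\mathbb{P}_{v',\pi}$ live on the \emph{same} sample space with the same measurable structure; this is exactly the place where the hypothesis $G=G'$ is used, and without it the chain-rule argument below breaks because the conditional densities are over different coordinates.

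Next I would write the Radon--Nikodym derivative of $\mathbb{P}_{v,\pi}$ with respect to $\mathbb{P}_{v',\pi}$ by the chain rule over rounds. The policy's conditional law of $I_t$ given the past is identical under $v$ and $v'$ (the policy does not depend on the environment), so those factors cancel; the only surviving factors are, at each round $t$, the conditional densities of the observed reward vector $(X_t(j))_{j \in N_{I_t}}$, which factor as $\prod_{j \in N_{I_t}} p_j(X_t(j))$ under $v$ versus $\prod_{j \in N_{I_t}} p_j'(X_t(j))$ under $v'$, using that the reward samples are independent across arms and across rounds. Taking logs and then $\mathbb{E}_v$ gives
\begin{equation}
\mathrm{KL}(\mathbb{P}_{v,\pi},\mathbb{P}_{v',\pi}) = \mathbb{E}_v\!\left[\sum_{t=1}^{T}\sum_{j \in N_{I_t}} \log\frac{p_j(X_t(j))}{p_j'(X_t(j))}\right].
\end{equation}
Then I would swap the order of summation to group by arm: $\sum_{t=1}^T \sum_{j \in N_{I_t}} (\cdot) = \sum_{i=1}^K \sum_{j \in N_i} \sum_{t=1}^T \mathbbm{1}\{I_t = i\}(\cdot)$, using the symmetry $j \in N_i \iff i \in N_j$ of the undirected graph.

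Finally I would apply the tower property: conditioning on the history up to round $t$, the event $\{I_t = i\}$ is measurable, while $X_t(j)$ is drawn fresh from $P_j$ independent of the past, so $\mathbb{E}_v[\mathbbm{1}\{I_t=i\}\log(p_j(X_t(j))/p_j'(X_t(j)))] = \mathbb{P}_v(I_t=i)\,\mathrm{KL}(P_j,P_j')$. Summing over $t$ turns $\sum_t \mathbb{P}_v(I_t=i)$ into $\mathbb{E}_v[T_i(T)]$, yielding the claimed identity $\mathrm{KL}(\mathbb{P}_{v,\pi},\mathbb{P}_{v',\pi}) = \sum_{i=1}^K \sum_{j \in N_i} \mathbb{E}_v[T_i(T)]\,\mathrm{KL}(P_j,P_j')$. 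The main subtlety to handle carefully is the measure-theoretic justification that the two path measures are mutually absolutely continuous on a common space and that the per-round density factorizes over $N_{I_t}$ --- this is precisely where $G=G'$ is indispensable; everything after that is the routine chain-rule/tower-property computation. A minor point worth a sentence is that if some $\mathrm{KL}(P_j,P_j')=\infty$ the identity still holds in $[0,\infty]$ provided the corresponding $\mathbb{E}_v[T_i(T)]$ for $i$ adjacent to $j$ is positive, which is the usual convention.
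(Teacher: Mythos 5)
Your proposal is correct and follows essentially the same route as the paper's proof: both adapt the standard divergence-decomposition argument (Lemma 15.1 of Lattimore--Szepesv\'ari) by writing the per-round Radon--Nikodym factor as a product of densities over the observation set $N_{I_t}$, cancelling the policy terms, regrouping the double sum by arm, and applying the tower property to obtain $\sum_i \sum_{j\in N_i}\mathbb{E}_v[T_i(T)]\,\mathrm{KL}(P_j,P_j')$. Your explicit remarks on where $G=G'$ is used and on the infinite-KL convention are sensible additions but do not change the substance of the argument.
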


\begin{theorem}
\label{lowerbound}
    Let $T > \frac{1}{4\epsilon^2}$. For any policy $\pi$, there exists a problem instance  such that
    \begin{equation}
        R_T(\pi) \geq \frac{1}{80}\sqrt{T}.
    \end{equation}
\end{theorem}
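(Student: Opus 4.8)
The plan is to prove the minimax lower bound by exhibiting two bandit instances that share the \emph{same} feedback graph — so that Lemma~\ref{kldecom} applies — and that are hard to distinguish, and then invoking a standard Bretagnolle--Huber / Le Cam two-point argument. The key design constraint is that altering the mean of the would-be optimal arm must not create or destroy any edge, since otherwise the divergence decomposition of Lemma~\ref{kldecom} is no longer valid. To achieve this I would work with a family of arms whose means are spread out in steps larger than $\epsilon$, so that the graph $G$ is a disjoint union of self-loops (an empty graph on $K$ vertices with self-loops), making side observations useless; then moving one mean by a tiny amount $\delta \ll \epsilon$ keeps every pairwise gap either $>\epsilon$ or $=0$, hence leaves $G$ unchanged. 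Concretely, take $K = \lceil 1/\epsilon \rceil$ or so arms with means $\mu(i) = \tfrac12 + (i-1)\cdot 2\epsilon$ suitably rescaled to stay in $[0,1]$ (or just use two arms with a large gap if one prefers the cleanest version); the precise number of arms will be chosen to make the arithmetic come out to the constant $1/80$.

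The main steps, in order, are: (i) Fix the base instance $v$ and, for each arm $a$ that is not the best in $v$, define a perturbed instance $v^{(a)}$ in which only $\mu(a)$ is increased to exceed the old optimum by $\delta$; verify that $G$ is preserved, so Lemma~\ref{kldecom} gives $\mathrm{KL}(\mathbb{P}_{v,\pi},\mathbb{P}_{v^{(a)},\pi}) = \mathbb{E}_v[T_a(T)]\,\mathrm{KL}(P_a,P_a')$ because the graph is edgeless apart from self-loops, and the only distribution that changes is $P_a$. (ii) Bound $\mathrm{KL}(P_a,P_a') \le 2\delta^2$ using the $\tfrac12$-subGaussian (Gaussian) parametrization. (iii) Apply Bretagnolle--Huber: for the event that arm $a$ is pulled at least $T/2$ times, $\mathbb{P}_{v,\pi}(\cdot) + \mathbb{P}_{v^{(a)},\pi}(\cdot^c) \ge \tfrac12 \exp(-\mathrm{KL})$. (iv) Turn this into a regret statement: under $v$, pulling a suboptimal arm costs $\Omega(\epsilon)$ each time, while under $v^{(a)}$, \emph{not} pulling $a$ enough costs $\Omega(\delta)$ each time; summing over the $\Theta(1/\epsilon)$ choices of $a$ and averaging, some instance forces regret $\gtrsim \min(\epsilon T \cdot \text{(stuff)}, \ldots)$. (v) Optimize $\delta$: the standard choice $\delta \asymp 1/\sqrt{T}$ (legitimate since $T > 1/(4\epsilon^2)$ ensures $\delta < \epsilon$, so the graph really is preserved) yields the $\sqrt{T}$ rate, and tracking constants gives $\tfrac{1}{80}\sqrt{T}$.

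I would organize step (iv) as a pigeonhole over the arms: let $p_a = \mathbb{E}_v[T_a(T)]$, so $\sum_a p_a \le T$ and hence some suboptimal arm $a^\star$ has $p_{a^\star} \le T/(K-1)$; if additionally the policy pulls $a^\star$ rarely under $v$ it does well on $v$ but then, by the two-point bound with small KL, it must also pull $a^\star$ rarely under $v^{(a^\star)}$, where $a^\star$ is optimal — incurring regret $\delta(T - \text{small})$. Balancing $\epsilon T/(K-1)$-type terms against $\delta T$-type terms and using $K-1 \ge 1/(2\epsilon)$ (say) is where the constant is pinned down. The technically delicate point — and the one I'd be most careful about — is (i): ensuring that the perturbation is genuinely graph-preserving, i.e. that no pairwise difference crosses the threshold $\epsilon$ in \emph{either} direction when we bump $\mu(a)$ by $\delta$, which is exactly why we need all the original gaps to be comfortably larger than $\epsilon$ (strictly $\ge 2\epsilon$ works with room to spare) and $\delta < \epsilon$; this is the step that makes Lemma~\ref{kldecom} usable and is what separates this argument from the failed instance-dependent approach described just before the theorem. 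Everything else is routine: KL bounds for Gaussians, Bretagnolle--Huber, and bookkeeping of constants.
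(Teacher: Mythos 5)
There is a genuine flaw in your construction, and it is precisely at the step you flag as "technically delicate." You cannot simultaneously (a) keep all pairwise gaps in the base instance at least $2\epsilon$ so that $G$ is edgeless, (b) perturb a suboptimal arm so that it \emph{becomes optimal}, and (c) preserve the graph. If arm $a$ starts at least $2\epsilon$ below the optimum, then "increasing $\mu(a)$ to exceed the old optimum by $\delta$" is not a tiny bump of size $\delta$ — it moves $\mu(a)$ by more than $2\epsilon$, and its new value $\mu(i^{*})+\delta$ lies within $\delta<\epsilon$ of $\mu(i^{*})$, so a new edge $a \leftrightarrow i^{*}$ appears and Lemma~\ref{kldecom} no longer applies. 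The only graph-preserving escape is to push $\mu(a)$ to more than $\epsilon$ \emph{above} the old optimum, but then the perturbation has constant magnitude $\Theta(\epsilon)$: the per-observation KL is $\Theta(\epsilon^2)$, any reasonable policy observes arm $a$ on the order of $\log T/\epsilon^2$ times, the exponent in Bretagnolle--Huber becomes $\Theta(\log T)$, and the two-point bound degenerates to something polynomially small in $T$. Conversely, a genuinely tiny, graph-preserving bump of size $\delta<\epsilon$ does not change which arm is optimal (the gaps are $\ge 2\epsilon$), so there is no tension between the two instances and no regret lower bound follows. This is exactly the obstruction the paper describes in the remark after the theorem: in the similarity model you cannot build arms that are statistically close ($O(1/\sqrt{T})$ apart) yet disconnected.

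The paper's proof goes the opposite way: it places \emph{all} means inside an interval of width $3\eta$ with $\eta=\Theta(1/\sqrt{T})<\epsilon/3$, so both instances have the \emph{complete} graph as their feedback graph (hence the same graph, and Lemma~\ref{kldecom} applies), and the perturbed arm's mean moves only by $O(\eta)$. The price is that every pull observes every arm, so the divergence decomposition gives $\mathrm{KL}=\sum_i \mathbb{E}_v[T_i(T)]\cdot O(\eta^2)=O(T\eta^2)$ with no pigeonhole savings over arms — which is fine, since $T\eta^2=O(1)$ for $\eta\asymp 1/\sqrt{T}$, and Bretagnolle--Huber then yields $\Omega(\sqrt{T})$ directly from a single pair of instances. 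If you want to repair your write-up, you should abandon the edgeless-graph design and adopt the clustered (complete-graph) design; the hypothesis $T>1/(4\epsilon^2)$ exists exactly to guarantee that the $\Theta(1/\sqrt{T})$-wide cluster fits below the similarity threshold $\epsilon$.
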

\begin{proof}
    Let $\eta \in (0,\frac{\epsilon}{3})$ be a constant to be chosen later. We construct two problem instances $v$ and $v'$ that are difficult to distinguish. The reward distributions of the arms of two bandit instances are both  $\frac{1}{2}$-subGaussian. The reward mean vectors are represented by $\mu$ and $\mu'$, respectively. For the first instance, 
    the first arm is the optimal arm with a mean of $2\eta$. The means of the other arms are between $0$ and $\eta$. The mean vectors can be denoted as 
    \[ \mu = (2\eta, \mu_2,...,\mu_k,...,\mu_K ) ,\]
    where $\mu_i \in (0,\eta), \forall i \neq 1$.
    Since $ 2\eta < \epsilon $,  the feedback graph for this problem instance is a complete graph, where all arms can observe each other. The  distribution induced by the environment and  $\pi$  is $\mathbb{P}_{v,\pi}$. The expectations under $\mathbb{P}_{v,\pi}$ is denoted by $\mathbb{E}_{v}$.

   In the second bandit environment, the mean of the $k$-th ($k \neq 1$) arm is $3\eta$, and the means of the other arms are the same as in the first environment, i.e.,
   \[ \mu'= (2\eta,\mu_2,...,3\eta,...,\mu_K ),\]
   Clearly, the feedback graph of $v'$ is also a complete graph. The optimal arm of $v'$ is the $k$-th arm, while the optimal arm of $v$ is the first arm. We have
   \begin{equation}
       R_T(\pi,v) \geq \mathbb{P}_{v,\pi}(T_1(T) \leq T/2) \frac{T\eta}{2} \text{\quad and \quad}  R_T(\pi,v') \geq \mathbb{P}_{v',\pi}(T_1(T) > T/2) \frac{T\eta}{2}
   \end{equation}
   Then use the Bretagnolle-Huber inequality,
   \begin{equation}
    \label{pinsker}
       \begin{aligned}
         R_T(\pi,v)+ R_T(\pi,v') &\geq   \frac{T\eta}{2} \big(\mathbb{P}_{v,\pi}(T_1(T) \leq T/2) + \mathbb{P}_{v',\pi}(T_1(T) > T/2) \big)\\
         &\geq \frac{T\eta}{4} \exp\big(-\mathrm{KL}(\mathbb{P}_{v,\pi},\mathbb{P}_{v',\pi} ) \big).
       \end{aligned}
   \end{equation}

   Now we bound the term $\mathrm{KL}(\mathbb{P}_{v,\pi},\mathbb{P}_{v',\pi} )$. 
   Note that $\mu_k \in (0,\eta)$, thus
   \[  \mathrm{KL}\big(\mathcal{N}(\mu_k,\frac{1}{4}),\mathcal{N}(3\eta,\frac{1}{4})\big) =2 (3\eta-\mu_k)^2 \leq 18\eta^2. \]
   Since the  two problem instances differ only in the reward distribution of the $k$-th arm, for any $i \in [K]$, we have 
   \[ \sum_{j \in N_i}  \mathrm{KL}(P_j,P_j') =  \mathrm{KL}\big(\mathcal{N}(\mu_k,\frac{1}{4}),\mathcal{N}(3\eta,\frac{1}{4})\big). \]
   Since the feedback graph of two bandit instances is complete graph, meeting the requirements of  Lemma~\ref{kldecom}, 
   we have 
   \begin{equation}
   \begin{aligned}
       \mathrm{KL}(\mathbb{P}_{v,\pi},\mathbb{P}_{v',\pi} ) &= \sum_{i=1}^K \sum_{j \in N_i} \mathbb{E}_v[T_i(T)] \mathrm{KL}(P_j,P_j')\\ 
       &=\sum_{i=1}^K \mathbb{E}_v[T_i(T)] \mathrm{KL}\big(\mathcal{N}(\mu_k,\frac{1}{4}),\mathcal{N}(3\eta,\frac{1}{4})\big) \\
       &\leq 18\sum_{i =1}^K \mathbb{E}_v[T_i(T)] \eta^2 \\
       &=18T\eta^2.
       \end{aligned}
   \end{equation}
Substituting into \cref{pinsker}, we obtain
\begin{equation}
     R_T(\pi,v)+ R_T(\pi,v') \geq \frac{T\eta}{4} \exp(-18T\eta^2).
\end{equation}

Now by choosing $\eta= \frac{1}{6\sqrt{T}}$ (since $T > \frac{1}{4\epsilon^2}$, this choice satisfies $\eta < \frac{\epsilon}{3}$), we have
\begin{equation}
2\max( R_T(\pi,v), R_T(\pi,v')) \geq  R_T(\pi,v)+ R_T(\pi,v')  \geq 
    \frac{T}{4}\frac{1}{6\sqrt{T}} \exp(-1/2) > \frac{1}{40}\sqrt{T}.
\end{equation}

\end{proof}

This lower bound does not include terms related to the graph structure. 
In our problem setting, this result is intuitive. Our proof considers only the case of a complete graph, where all arms are tightly connected to the optimal arm. These arms are hard to distinguish and therefore dominate the regret. In contrast, arms  that are not connected to the optimal arm have means at least $\epsilon$ smaller than that of the optimal arm. This constant gap makes them easily distinguishable and thus contributes little to the overall regret.

\begin{remark}
    Previous works \citep{cohen2016online, mannor2011bandits} have  derived the regret lower bound for graph bandit problems as $O(\sqrt{\alpha(G)T})$. 
\citet{cohen2016online} directly cites the results from \citep{mannor2011bandits}, where the setting is  adversarial bandits.   
Our proof  considers only the case of a complete graph, and one might think that the lower bounds in these earlier papers could be restricted  to complete graphs to recover the lower bound in our settings. However, the presence of similarity structures makes it difficult to directly apply the adversarial bandit results of \citep{mannor2011bandits}.  In fact, such an approach is not valid.
In their proof, they constructed arms with sufficiently close means that are not connected, which is not feasible in our similarity setting. Therefore, we provide a new proof.
\end{remark}


\subsection{Double-UCB}
\label{sec:DUCB}

This particular feedback structure inspires us to distinguish arms within the independent set first. This is a straightforward task because the  distance between the mean of each arm  in the independent set is greater than $\epsilon$. Subsequently,  we learn from the arm with the maximum upper confidence bound in the independent set  and its neighborhood, which may include the optimal arm. Our algorithm alternates between the two processes simultaneously.

\begin{algorithm}[tb]
    \caption{Double-UCB}
    \label{alg:D_UCB}
 \begin{algorithmic}[1]
    \STATE {\bfseries Input:}  Horizon $T,$ $\delta \in (0,1)$
    \STATE Initialize $\mathcal{I}=\emptyset, t=0 ,O_t(i)=0$ for all $i$
    \WHILE{$t \leq T$}
    \REPEAT
    \STATE Select an arm $i_t$ that has not been observed.
    \STATE $\mathcal{I}=\mathcal{I} \bigcup \{i_t\}$
    \STATE $\forall i \in N_{i_t},$ update $O_t(i),\bar{\mu}_t(i)$
    \STATE $t=t+1$
    \UNTIL{All arms have been observed at least once}
    \STATE  $j_t= \arg\max_{j \in \mathcal{I}} \bar{\mu}_t(j)+ \textcolor{blue}{\sqrt{\frac{\log(\sqrt{2}/\delta)}{O_t(j)}} }$
    \STATE Select arm $i_t= \arg \max_{i \in N_{j_t}} \bar{\mu}_t(i)+\textcolor{blue}{\sqrt{\frac{\log(\sqrt{2}/\delta)}{O_t(i)}}}$
    \STATE $\forall i \in N_{i_t}$, update $O_t(i),\bar{\mu}_t(i)$
    \STATE $t=t+1$

    \ENDWHILE
    
 \end{algorithmic}
 \end{algorithm}

\cref{alg:D_UCB}  shows the pseudocode of our method, where we define $\bar{\mu}_t(i)$ as the empirical mean of arm $i$ at round $t$.
Steps 4-9 identify an independent set $\mathcal{I}$ in \(G\) and play each arm in the independent set once. This process does not require knowledge of the  graph structure and runs at most $\alpha(G)$ rounds. 
Step 10 calculates the arm $j_t$ with the maximum upper confidence bound in the independent set.  After a finite number of rounds, the optimal arm is likely to fall within $N_{j_t}$.
Step~11 uses the UCB algorithm again to select arm in $N_{j_t}$. This step does not require knowing the graph structure either, because the observed neighborhood can be saved when searching for an independent set in Steps 4-9.
This algorithm employs  UCB rules twice for arm selection, hence it is named Double-UCB.



\textbf{Regret Analysis of Double-UCB.}
We use $\mathcal{I}(N_{i})$ to denote the set of all  independent dominating sets of graph formed by $N_i$. 
Let 
\[\mathcal{I}(i^{*}):= \bigcup_{i\in N_{i^{*}}} \mathcal{I}(N_i). \] 
Note that the elements in $\mathcal{I}(i^{*})$ are independent sets rather than individual nodes. For every  $ I \in \mathcal{I}(i^{*})$, we have $|I| \leq 2 $. 

 \textcolor{blue}{
\begin{theorem}
    \label{bound_ducb}
    Assume that the reward distribution is $\frac{1}{2}$-subGaussian or bounded in $[0,1]$. By setting $\delta = \frac{1}{T},$ the regret of Double-UCB after $T$ rounds can be bounded as
    \begin{equation}
        \label{tmp00}
        \begin{aligned}
        R_T(\pi_{\text{Double}}) \leq  32\log^2(\sqrt{2}T)\max_{I \in \mathcal{I}(i^{*} )}\sum_{i \in I \backslash \{i^{*}\}}\frac{1}{\Delta_i} +C_1\log(\sqrt{2}T)
         + \Delta_{\max} +4\epsilon,
        \end{aligned}
    \end{equation}
    where\begin{equation}
    C_1 = 
    \begin{cases}
        \frac{4(\log(2\gamma(G))+1)}{\max\{\epsilon,\Delta_{\text{min}}\}} + \frac{4\pi^2}{3\max\{\epsilon,\Delta_{\text{min}}\}}, & \Delta_{\min} < \epsilon \text{ or } \epsilon \ll \Delta_{\min}, \\
        \frac{4(\log(2\gamma(G))+1)}{\epsilon} + \frac{4\pi^2}{3\max\{\epsilon,\Delta_{\text{min}}\}}, &  \text{otherwise.}
    \end{cases}
\end{equation}
\end{theorem}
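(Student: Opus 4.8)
The plan is to run the usual optimism-in-the-face-of-uncertainty argument on top of a high-probability clean event, and to organise everything around the two nested UCB selections: the ``outer'' choice of $j_t$ inside the independent set $\mathcal{I}$, and the ``inner'' choice of $i_t$ inside $N_{j_t}$. First I would fix the confidence radius $w_t(i):=\sqrt{\log(\sqrt{2}T)/O_t(i)}$ (this is exactly the width used in Steps~10--11 when $\delta=1/T$) and define the clean event $\mathcal{E}$ on which $|\bar\mu_t(i)-\mu(i)|<w_t(i)$ for every arm $i$ and every round $t$. Since the rewards are $\frac{1}{2}$-subGaussian, Hoeffding plus a union bound over the arms and the at most $T$ possible values of $O_t(i)$ gives $\mathbb{P}(\mathcal{E}^c)=O(1/T)$, so $\mathcal{E}^c$ costs at most a constant multiple of $\Delta_{\max}$ in expected regret. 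On $\mathcal{E}$ I would repeatedly invoke the elementary fact that if at some round the upper confidence bound of an arm $i$ is at least $\mu(i^{*})-c$ for some $c\ge 0$, then $O_t(i)<4\log(\sqrt{2}T)/(\Delta_i-c)^2$ whenever $\Delta_i>c$; combined with the observation that pulling $i_t$ (and selecting $j_t$) increments the relevant observation counters, this turns ``selection'' statements into bounds on the number of selections.

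Steps~4--9 build a \emph{maximal} independent set $\mathcal{I}=\{a_1,\dots,a_m\}$ of $G$ in at most $\alpha(G)\le 2\gamma(G)$ rounds by \cref{pro1}, and maximality makes $\mathcal{I}$ dominating. Two structural facts about $\mathcal{I}$ drive the rest of the proof. (i) Its arms have pairwise $\epsilon$-separated means, hence $\epsilon$-separated gaps, so the gaps land in distinct length-$\epsilon$ cells of $[0,\Delta_{\max}]$; summing $1/\Delta_{a_k}$ or $1/\Delta_{a_k}^2$ over them is then controlled by $\sum_{k\le 2\gamma(G)}1/k\le\log(2\gamma(G))+1$ and by $\sum_{k\ge1}1/k^2=\pi^2/6$, respectively. (ii) Since $N_{i^{*}}=\{i:\Delta_i<\epsilon\}$ has mean-diameter strictly less than $\epsilon$, it contains at most one member of $\mathcal{I}$; dominance forces $\mathcal{I}\cap N_{i^{*}}$ to be a single arm $g$ (either $i^{*}$ itself, or its unique independent-set neighbour). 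The identification rounds themselves contribute only a constant, folded into the $\Delta_{\max}$ term.

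For the exploitation rounds (Steps~10--12), on $\mathcal{E}$ we have $\mathrm{UCB}_t(g)\ge\mu(g)>\mu(i^{*})-\epsilon$, so the outer maximiser always satisfies $\mathrm{UCB}_t(j_t)\ge\mu(i^{*})-\epsilon$; I would split these rounds according to whether $j_t=g$. If $j_t=a\ne g$ is a ``bad'' outer arm, then $\Delta_a\ge\epsilon$ and, because $a$ and $g$ both lie in the independent set, $\mu(g)-\mu(a)>\epsilon$; the pull-count lemma bounds the number of rounds with $j_t=a$ by $4\log(\sqrt{2}T)/(\mu(g)-\mu(a))^2$, each contributing at most $\Delta_a+\epsilon$ to the regret since $i_t\in N_a$. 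Summing over the bad arms with the $\sum 1/k$ and $\sum 1/k^2$ estimates from (i) — and replacing $\epsilon$ by $\max\{\epsilon,\Delta_{\min}\}$ when $\Delta_{\min}\ge\epsilon$, in which case every suboptimal gap is at least $\Delta_{\min}$, $i^{*}$ is isolated and $g=i^{*}$ — produces the $C_1\log(\sqrt{2}T)$ term. If instead $j_t=g$, then $i^{*}\in N_g$, so $\mathrm{UCB}_t(i_t)\ge\mathrm{UCB}_t(i^{*})\ge\mu(i^{*})$, whence $O_t(i_t)<4\log(\sqrt{2}T)/\Delta_{i_t}^2$; moreover pulling $i_t\in N_g$ reveals all of $N_{i_t}$, so these rounds behave like UCB-N run on the induced graph $G[N_g]$ with the known good anchor $i^{*}$. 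Since every independent dominating set of $G[N_g]$ has size at most $2$ and lies in $\mathcal{I}(N_g)\subseteq\mathcal{I}(i^{*})$, a Lykouris-style layered counting (arms grouped into $O(\log(\sqrt{2}T))$ geometric gap-scales, each scale dominated by a two-element independent set) bounds the accumulated regret of the $j_t=g$ rounds by $O\big(\log^2(\sqrt{2}T)\max_{I\in\mathcal{I}(i^{*})}\sum_{i\in I\setminus\{i^{*}\}}1/\Delta_i\big)$; the extra $\log(\sqrt{2}T)$ over the informed bound is the price of not targeting exploration through the feedback graph. Adding the $\Delta_{\max}$ from $\mathcal{E}^c$ and identification, together with the $O(\epsilon)$ slacks introduced by the ``$\mathrm{UCB}\ge\mu(i^{*})-\epsilon$'' comparisons (which occur a bounded number of times, giving the additive $4\epsilon$), yields \cref{tmp00}.

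The delicate part is the $j_t=g$ analysis: one must argue that the inner UCB, even though it optimises only over $N_{j_t}$ rather than over all arms, still concentrates its pulls on a near-independent-dominating set of $G[N_g]$, and must push the layered counting through the interleaving with the bad-outer-arm rounds (whose observations also feed $O_t(\cdot)$) while keeping the dependence on $\gamma(G)$ only logarithmic. A secondary nuisance is the boundary bookkeeping — the edge case $\Delta_a=\epsilon$ for bad outer arms, the case $g\ne i^{*}$, and the dichotomy $\Delta_{\min}<\epsilon$ versus $\Delta_{\min}\ge\epsilon$ in $C_1$ — which is precisely where the various $\max\{\epsilon,\Delta_{\min}\}$ factors and the $+4\epsilon$ originate.
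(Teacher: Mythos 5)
Your proposal follows essentially the same route as the paper's proof: the same decomposition into rounds where the outer UCB picks the wrong neighborhood (bounded via the $\epsilon$-separation of the independent set's gaps, yielding the harmonic and Basel-type sums behind $C_1$) and rounds inside the neighborhood $N_{\alpha^*}$ containing $i^*$ (bounded by invoking the Lykouris et al.\ layered UCB-N analysis on a subgraph whose independent dominating sets have size at most $2$), with the same case split on $\Delta_{\min}$ versus $\epsilon$. The differences (a global clean event instead of per-round $\delta^2$ failure accounting, and writing the outer gaps as $\Delta_a$ rather than $\mu(\alpha^*)-\mu(a)$) are cosmetic.
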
 
}

\begin{proof}[Proof Sketch]
The regret can be decomposed into two parts. The first part of  regret arises  from selecting a neighborhood  $N_{j_t}$ that does not contain the optimal arm. The algorithm can easily distinguish the suboptimal arms in $N_{j_t}$ from the optimal arm. The regret caused by this part is of order $O(\log T)$.
The second part of  regret comes from selecting a suboptimal arm in the set $N_{j_t}$ that includes the optimal arm. This part can be viewed as applying UCB rule on a graph with an independence number less than 2. The detailed proof is provided in Appendix \ref{appendix_b}.
\end{proof}


From \cref{bound_ducb}, we have the following \emph{gap-free} upper bound, which is looser than Theorem~2 but does not depend on the suboptimality gaps $\{\Delta_i \}_{i\in V }$. This gap-free upper bound matches the lower bound proved in Theorem~\ref{lowerbound} up to a logarithmic factor.
\begin{corollary}
\label{gapfree_ducb}
    The regret of Double-UCB is bounded by $$ R_T(\pi_{\text{Double}}) \leq 16\sqrt{T}\log(\sqrt{2}T)+C_1 \log(\sqrt{2}T)+ \Delta_{\max}+ 4\epsilon.$$
\end{corollary}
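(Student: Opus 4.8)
The plan is to derive \cref{gapfree_ducb} directly from the instance-dependent bound of \cref{bound_ducb} via the standard gap-balancing (peeling) argument. Looking at the right-hand side of \eqref{tmp00}, the only term not already in gap-free form is the leading term $32\log^2(\sqrt2 T)\,\max_{I\in\mathcal I(i^*)}\sum_{i\in I\setminus\{i^*\}}\frac1{\Delta_i}$; the pieces $C_1\log(\sqrt2 T)$, $\Delta_{\max}$ and $4\epsilon$ will be carried over unchanged. Since every $I\in\mathcal I(i^*)$ has $|I|\le 2$, the inner sum contains at most two summands $1/\Delta_i$, each indexed by a suboptimal arm; hence it suffices to bound the regret attributable to at most two such arms in a way that does not depend on their gaps.

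The key observation is that the regret contributed by any single suboptimal arm $i$ appearing in the bound admits two complementary estimates. On one hand, re-tracing the step in the proof of \cref{bound_ducb} that caps the expected number of ``second-phase'' pulls of $i$ by $O(\log(\sqrt2 T)/\Delta_i^2)$, its regret contribution is $O(\log^2(\sqrt2 T)/\Delta_i)$; with the explicit constant it is at most $32\log^2(\sqrt2 T)/\Delta_i$. On the other hand, arm $i$ is pulled at most $T$ times, each pull costing $\Delta_i$, so its contribution is also at most $\Delta_i T$. Taking the smaller of the two and applying AM--GM gives a contribution of at most $\min\{32\log^2(\sqrt2 T)/\Delta_i,\ \Delta_i T\}\le\sqrt{32\,T}\,\log(\sqrt2 T)$, uniformly in $\Delta_i$, and the contribution vanishes for any $I$ containing no suboptimal arm (e.g.\ $I=\{i^*\}$). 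Summing over the at most two arms singled out by the maximum over $\mathcal I(i^*)$, using $2\sqrt{32}<16$, and adding back $C_1\log(\sqrt2 T)+\Delta_{\max}+4\epsilon$ yields the claimed bound; comparison with \cref{lowerbound} then shows it is tight up to the $\log(\sqrt2 T)$ factor.

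The main obstacle is bookkeeping rather than anything conceptual: the bound $\Delta_i T$ caps the regret \emph{contribution} of a single arm, so it cannot simply be substituted into the already-collapsed right-hand side of \eqref{tmp00}. One instead has to re-enter the proof of \cref{bound_ducb} and insert the trivial cap $O_T(i)\le T$ at the precise point where the number of second-phase suboptimal pulls is turned into regret, \emph{before} the division by $\Delta_i$ is performed. It also remains to verify that this modification leaves the $O(\log T)$ first-phase regret and the additive constants untouched, and to handle the degenerate configurations in which some $\Delta_i$ is arbitrarily small or in which $\mathcal I(N_i)$ contributes no $1/\Delta_i$ term at all. Once these routine points are settled, what remains is elementary arithmetic with the constants $32$ and $\log(\sqrt2 T)$.
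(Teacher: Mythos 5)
Your overall strategy is the right one and is the same as the paper's: play the gap-dependent bound of \cref{bound_ducb} off against a trivial linear-in-$T$ bound and balance the two. However, the specific per-arm execution is not valid as stated, and the issue you flag as ``bookkeeping'' is in fact the crux. The term $32\log^2(\sqrt{2}T)/\Delta_i$ for $i\in I$ in \eqref{tmp00} is \emph{not} the regret contribution of arm $i$ alone: it comes from the Lykouris-et-al.\ layered analysis (Eq.~\eqref{temp4}), where the regret of \emph{all} arms in $N_{\alpha^{*}}$, across all $\log_2 T$ gap-layers, is charged to the representatives of a single maximizing independent set. The complementary cap ``$\Delta_i T$'' bounds only the regret of arm $i$ itself; the aggregate charged to $1/\Delta_i$ includes arms with gaps much larger than $\Delta_i$ and is therefore not bounded by $\Delta_i T$ (only by $2\epsilon T$). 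So $\min\{32\log^2(\sqrt{2}T)/\Delta_i,\ \Delta_i T\}$ does not upper-bound anything that sums to the second-phase regret, and the AM--GM step has nothing valid to act on. Re-entering the proof per arm, as you suggest, does not repair this directly; one would have to work layer by layer and then pay a Cauchy--Schwarz over the $\log_2 T$ layers, which is a genuinely different computation from the one you sketch.

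The paper's fix is a \emph{global} threshold rather than a per-arm cap: decompose the regret as in the proof of \cref{bound_ducb} into the part outside $N_{\alpha^{*}}$ (bounded by $C_1\log(\sqrt{2}T)+\Delta_{\max}$ as before) and the part inside; inside $N_{\alpha^{*}}$, all arms with $\Delta_i\le\Delta$ together contribute at most $T\Delta$ (the total number of pulls is $T$), while the arms with $\Delta_i>\Delta$ are handled by the gap-dependent bound restricted to them, giving $\sum_{i\in I}1/\Delta_i\le 2/\Delta$ and hence $64\log^2(\sqrt{2}T)/\Delta + 4\epsilon$. Choosing $\Delta=\sqrt{64\log^2(\sqrt{2}T)/T}$ yields $T\Delta + 64\log^2(\sqrt{2}T)/\Delta = 16\sqrt{T}\log(\sqrt{2}T)$ and the stated corollary. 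If you replace your per-arm min with this thresholding, the rest of your argument (carrying over $C_1\log(\sqrt{2}T)$, $\Delta_{\max}$, and $4\epsilon$, and the comparison with \cref{lowerbound}) goes through.
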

\begin{proof}
    See Appendix~\ref{appendix_b}.
\end{proof}



\subsection{Conservative-UCB}
\label{sec:CUCB}
Double-UCB is a very natural algorithm for the similar arms setting, but the upper bound suffers logarithm term $O(\log^2T)$. To design an algorithm that achieives $O(\log T)$ bound, we propose the Conservative-UCB, which simply modifies Step 11 of Double-UCB (\cref{alg:D_UCB}) to 
\[i_t= \arg \max_{i \in N_{j_t}} \bar{\mu}_t(i)-\sqrt{\frac{\log(\sqrt{2}/\delta)}{O_t(i)}}.\] \cref{alg:C_UCB} shows the pseudocode of Conservative-UCB. This new index function is conservative when exploring arms in $N_{j_t}$. It does not immediately try each arm but selects those that have been observed a sufficient number of times. Intuitively, the similarity structure guarantees that the optimal arm is observed enough times ($\Omega(\log T)$), enabling the UCB rule to effectively disregard suboptimal arms.

\begin{algorithm}[tb]
    \caption{Conservative-UCB}
    \label{alg:C_UCB}
 \begin{algorithmic}[1]
    \STATE {\bfseries Input:}  Horizon $T,$ $\delta \in (0,1)$
    \STATE Initialize $\mathcal{I}=\emptyset, t=0,\forall i, O_t(i)=0$
    \WHILE{$t \leq T$}
    \STATE Steps 4-10 in Double-UCB
    \STATE Select arm $i_t= \arg \max_{i \in N_{j_t}} \bar{\mu}_t(i)-\textcolor{blue}{\sqrt{\frac{\log(\sqrt{2}/\delta)}{O_t(i)}} }$
    \STATE $\forall i \in N_{i_t},$ update $O_t(i),\bar{\mu}_t(i)$
    \STATE $t =t+1$
    \ENDWHILE

 \end{algorithmic}
 \end{algorithm}

\textbf{Regret Analysis of Conservative-UCB}
Let $G_{2\epsilon}$ denote the subgraph with arms $\{ i \in V: \mu(i^{*})-\mu(i) < 2\epsilon \}$ and $\Delta_{2\epsilon}$ denote the  minimum gap among the arms in  $G_{2\epsilon}$, i.e.,  $\Delta_{2\epsilon}:= \min_{i,j \in G_{2\epsilon}}|\mu(i)-\mu(j)|$. Let $\mathcal{I}:=\{\alpha_1,\alpha_2,...,\alpha^{*},...,\alpha_{|\mathcal{I}|}  \}$ denote the independent set obtained by the algorithm (Lines 4-9 in Double-UCB), where $\alpha^*$ denotes the arm whose neighborhood  includes the optimal arm, i.e., $i^* \in N_{\alpha^{*}}$. 

We divide the regret into two parts. The first part is the regret caused by choosing a neighborhood $N_{j_t}$ that does not contain the optimal arm $i^{*}$, and the analysis for this part follows the same approach as for  Double-UCB. 

The second part is the regret of choosing suboptimal arms in the set $N_{\alpha^{*}}$ that contains the optimal arm.  It can be proven that if the optimal arm  is observed more than $\frac{4\log(\sqrt{2}/\delta)}{(\Delta_{2\epsilon})^2}$ times,  the algorithm will select the optimal arm with high probability. For any suboptimal arm $i  \in N_{\alpha^{*}}$ and round $t$, the following events hold simultaneously with high probability ( Lemma~\ref{lemma1}):
\begin{equation}
    \label{tmp0}
    \bar{\mu}_t(i^{*})- \sqrt{\frac{\log(\sqrt{2}/\delta)}{O_t(i^{*})}} >   \mu(i^{*})-2\sqrt{\frac{\log(\sqrt{2}/\delta)}{O_t(i^{*})}}\geq \mu(i^{*})  - \Delta(i)=\mu(i),
\end{equation}
and 
\begin{equation}
    \label{tmp1}
    \bar{\mu}_t(i) - \sqrt{\frac{\log(\sqrt{2}/\delta)}{O_t(i)}} < \mu(i).
\end{equation}
Since the optimal arm satisfies \cref{tmp0} and  the suboptimal arms satisfy \cref{tmp1} with high probability, based on the selection rule of Algorithm~\ref{alg:C_UCB},  the suboptimal arms are unlikely to be selected with high probability. 

\textcolor{blue}{
The key to the  analysis lies in ensuring that the optimal arm can be observed more than $\frac{4\log(\sqrt{2}/\delta)}{(\Delta_{2\epsilon})^2}$ times. 
 We divide the arms in $N_{\alpha^{*}}$ into two parts:
\[ E_1 = \{ i \in N_{\alpha^{*}}: \mu(i) < \mu(\alpha^{*})  \},  \]
\[ E_2 = \{ i \in N_{\alpha^{*}}: \mu(i) \geq \mu(\alpha^{*})  \}. \]
We have $i^{*} \in E_2$. 
Because every arm in the graph induced by  $N_{\alpha^{*}}$ is connected to $\alpha^{*}$,  a sufficiently large number of time steps guarantees that $\alpha^{*}$ is observed  more than $\frac{4\log(\sqrt{2}T/\delta)}{(\Delta_{2\epsilon})^2}$ times. Consequently, the arms in $E_1$ are ignored (see  \cref{tmp0,tmp1}).
Since $E_2$ is a complete graph,  pulling any arm in $E_2$  always reveals the reward of the optimal arm, thus the optimal arm can be observed $\frac{4\log(\sqrt{2}T/\delta)}{(\Delta_{2\epsilon})^2}$ times. 
}
Hence, we have the following theorem:
\begin{theorem}
    \label{bound_cucb}
    Under the same conditions as \cref{bound_ducb}, the regret  of Conservative-UCB  is  bounded by
    \begin{equation}
        \begin{aligned}
        R_T(\pi_{\text{Cons}}) \leq \frac{16\epsilon \log(\sqrt{2}T)}{(\Delta_{2\epsilon})^2} + C_1\log(\sqrt{2}T)
           + \Delta_{\max} + 4\epsilon.
        \end{aligned}
    \end{equation}
\end{theorem}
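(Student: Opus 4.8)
The plan is to follow the two-part decomposition sketched just above the theorem. First I would fix the standard concentration event $\mathcal{E}$: for every arm $i$ and every round $t$, $|\bar\mu_t(i)-\mu(i)| \le \sqrt{\log(\sqrt 2/\delta)/O_t(i)}$. A union bound over arms together with a peeling argument over $O_t(i)$ and the $\frac{1}{2}$-subGaussian tail shows that $\mathbb{P}(\mathcal{E}^c)$ is small enough (with $\delta=1/T$) that the rounds on which $\mathcal{E}$ fails contribute at most $\Delta_{\max}$ to $R_T$; together with the constant slack introduced by the two-step ($j_t$ then $i_t$) selection rule, this accounts for the additive $\Delta_{\max}+4\epsilon$. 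On $\mathcal{E}$ I would split $R_T$ according to whether the neighborhood $N_{j_t}$ chosen in Step 10 contains $i^*$. Since the set $\mathcal{I}$ produced in Lines 4--9 is an independent dominating set, there is a unique $\alpha^*\in\mathcal{I}$ with $i^*\in N_{\alpha^*}$, and $\mu(\alpha^*)\ge\mu(i^*)-\epsilon$.

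For the rounds with $j_t\ne\alpha^*$ I would reuse the Double-UCB argument essentially verbatim. On $\mathcal{E}$, the fact that $j_t$ maximizes the UCB index over $\mathcal{I}$, combined with $\bar\mu_t(\alpha^*)+\sqrt{\log(\sqrt2/\delta)/O_t(\alpha^*)}\ge\mu(\alpha^*)\ge\mu(i^*)-\epsilon$, forces $2\sqrt{\log(\sqrt2/\delta)/O_t(j_t)}\ge\Delta_{j_t}-\epsilon$; because the members of $\mathcal{I}$ are pairwise $\epsilon$-separated and $|\mathcal{I}|\le\alpha(G)\le2\gamma(G)$ (Proposition~\ref{pro1}), the number of such rounds — and hence the regret they incur, each costing at most $\Delta_{j_t}+\epsilon$ since $i_t\in N_{j_t}$ — is controlled by a harmonic-type sum of size $O(\log T/\max\{\epsilon,\Delta_{\min}\})$ over at most $2\gamma(G)$ arms, yielding exactly the $C_1\log(\sqrt2T)$ term of \cref{bound_ducb}.

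The core of the proof is the rounds with $j_t=\alpha^*$, i.e. suboptimal pulls inside $N_{\alpha^*}$, and here I would run the two-phase observation-counting argument described in the text. Write $N_{\alpha^*}=E_1\cup E_2$ with $E_2=\{i\in N_{\alpha^*}:\mu(i)\ge\mu(\alpha^*)\}\ni i^*$, which is a clique. Phase one: every round with $j_t=\alpha^*$ pulls some arm of $N_{\alpha^*}$ and hence reveals $\alpha^*$, so after at most $4\log(\sqrt2/\delta)/(\Delta_{2\epsilon})^2$ such rounds one has $O_t(\alpha^*)>4\log(\sqrt2/\delta)/(\Delta_{2\epsilon})^2$; since every $i\in E_1$ satisfies $\mu(i^*)-\mu(i)<2\epsilon$, hence $i\in G_{2\epsilon}$ and $\mu(\alpha^*)-\mu(i)\ge\Delta_{2\epsilon}$, from that point on the conservative index of $\alpha^*$ dominates that of every arm in $E_1$ on $\mathcal{E}$, so no $E_1$ arm is pulled again. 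Phase two: thereafter $i_t\in E_2$, and since $E_2$ is a clique each pull reveals $i^*$, so after a further $4\log(\sqrt2/\delta)/(\Delta_{2\epsilon})^2$ such rounds $O_t(i^*)>4\log(\sqrt2/\delta)/(\Delta_{2\epsilon})^2$, at which point \cref{tmp0,tmp1} (i.e. \cref{lemma1}) guarantee that the conservative index of $i^*$ exceeds that of every suboptimal arm in $N_{\alpha^*}$, so $i^*$ is pulled exclusively thereafter. Hence the number of suboptimal pulls with $j_t=\alpha^*$ is at most $8\log(\sqrt2/\delta)/(\Delta_{2\epsilon})^2$, and each costs regret at most $2\epsilon$ because every arm of $N_{\alpha^*}$ has mean at least $\mu(i^*)-2\epsilon$. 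Substituting $\delta=1/T$ produces the $16\epsilon\log(\sqrt2T)/(\Delta_{2\epsilon})^2$ term, and summing the three contributions completes the bound.

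The step I expect to be the main obstacle is making the two-phase counting watertight: one must verify that the implications ``$\alpha^*$ observed enough $\Rightarrow$ $E_1$ eliminated'' and ``$i^*$ observed enough $\Rightarrow$ $N_{\alpha^*}$-suboptimal arms eliminated'' hold simultaneously at \emph{all} subsequent rounds on $\mathcal{E}$ (monotonicity of $O_t$ helps, but one must still preclude re-selection), and that the chain $E_1\subseteq N_{\alpha^*}\subseteq G_{2\epsilon}$ giving $\mu(\alpha^*)-\mu(i)\ge\Delta_{2\epsilon}$ is correct — which rests on $\alpha^*\in G_{2\epsilon}$ and the triangle inequality $\mu(i^*)-\mu(i)\le(\mu(i^*)-\mu(\alpha^*))+(\mu(\alpha^*)-\mu(i))<2\epsilon$. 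Everything else — the union bound controlling $\mathbb{P}(\mathcal{E}^c)$ and the summation of the $O(\log T)$ terms over $\mathcal{I}$ — is routine and parallels the Double-UCB proof in Appendix~\ref{appendix_b}.
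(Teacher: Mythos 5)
Your proposal is correct and follows essentially the same route as the paper's proof: the same decomposition into rounds with $j_t\ne\alpha^*$ (reusing the Double-UCB bound to get $C_1\log(\sqrt2T)$) and rounds inside $N_{\alpha^*}$, the same split $N_{\alpha^*}=E_1\cup E_2$, and the same observation-counting argument using $\alpha^*$ as a bridge for $E_1$ and the clique $E_2$ for $i^*$, yielding $8\log(\sqrt2T)/(\Delta_{2\epsilon})^2$ suboptimal pulls at cost $2\epsilon$ each. The only cosmetic difference is that you phrase the elimination as a deterministic two-phase argument on a global concentration event, whereas the paper bounds $\mathbb{P}(i_t=i,\ O_t(\cdot)>\text{threshold})\le\delta^2$ round by round and absorbs these into the additive $4\epsilon$ term.
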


The proof of Theorem~\ref{bound_cucb} is deferred to Appendix~\ref{appendix_b}. 
Compared to Double-UCB, the regret upper bound of Conservative-UCB decreases by a logarithmic factor, but includes an additional  problem-dependent  term $\Delta_{2\epsilon}$. 

\subsection{\textcolor{blue}{Conservative-UCB With Known Graph}}

The regret upper bound of the Conservative-UCB algorithm depends on the parameter $\Delta_{2\epsilon}$. Similar gap-dependent results have been reported in previous studies, such as \citep{caron2012leveraging,buccapatnam2014stochastic,cortes2020online}. Due to the conservative exploration mechanism of Conservative-UCB, it is challenging to derive a gap-free  regret bound. In this subsection, we propose a more refined algorithm that exploits the graph structure to overcome this limitation, yielding an upper bound that is gap-free.

First, we revisit the discussion in the Conservative-UCB section. The key idea is to ensure that the optimal arm is observed  often enough. Suppose the optimal arm lies in $N_{j_t}$. If $N_{j_t}$ is not a clique (i.e., a complete graph), selecting an arm from $N_{j_t}$ does not guarantee that the optimal arm will be observed. In the original analysis of Conservative-UCB, each selection of an arm from $N_{j_t}$ allows the observation of arm $j_t$. Therefore, arm $j_t$ can be used as a bridge. We temporarily ignore arms whose means  are lower than that of arm $j_t$. The remaining arms, whose means lie between those of arm $j_t$ and the optimal arm, form a clique. Pulling any arm from this clique always reveals the optimal arm.

From the above discussion, it follows that if $N_{j_t}$ forms a clique, the optimal arm can be guaranteed to be sufficiently observed, thereby making it possible to improve the regret upper bound. Building on this insight, we propose a new algorithm, with its pseudocode presented in Algorithm~\ref{alg:C_UCB_Graph}.

\begin{algorithm}[tb]
    \caption{Conservative-UCB With Known Graph}
    \label{alg:C_UCB_Graph}
 \begin{algorithmic}[1]
    \STATE {\bfseries Input:}  Horizon $T,$ $\delta \in (0,1)$
    \STATE Initialize $\mathcal{I}=\emptyset, t=0 ,O_t(i)=0$ for all $i$
    \WHILE{$t \leq T$}
    \REPEAT
    \STATE Select an arm $i_t$ that has not been observed.
    \STATE $\mathcal{I}=\mathcal{I} \bigcup \{i_t\}$
    \STATE $\forall i \in N_{i_t},$ update $O_t(i),\bar{\mu}_t(i)$
    \STATE $t=t+1$
    \UNTIL{All arms have been observed at least once}
    \STATE  $j_t= \arg\max_{j \in \mathcal{I}} \bar{\mu}_t(j)+\sqrt{\frac{\log(\sqrt{2}/\delta)}{O_t(j)}}$
 
    \IF{$N_{j_t}$ is not a complete graph}
    \STATE $\mathcal{I}_{j_t} = \text{ arbitrarily select a set from }  S_{j_t} $
    \STATE $j'_t= \arg\max_{j \in \mathcal{I}_{j_t}} \bar{\mu}_t(j)+\sqrt{\frac{\log(\sqrt{2}/\delta)}{O_t(j)}}$
    \STATE Select arm $i_t= \arg \max_{i \in N_{j'_t}\bigcap N_{j_t}} \bar{\mu}_t(i)-\sqrt{\frac{\log(\sqrt{2}/\delta)}{O_t(i)}}$
    \ELSE 
    \STATE Select arm $i_t= \arg \max_{i \in N_{j_t}} \bar{\mu}_t(i)-\sqrt{\frac{\log(\sqrt{2}/\delta)}{O_t(i)}}$
    \ENDIF
    \STATE $\forall i \in N_{i_t}$, update $O_t(i),\bar{\mu}_t(i)$
    \STATE $t=t+1$

    \ENDWHILE
    
 \end{algorithmic}
 \end{algorithm}
If $N_{j_t}$ is a complete graph, we directly select the arm as $i_t= \arg \max_{i \in N_{j_t}} \bar{\mu}_t(i)-\sqrt{\frac{\log(\sqrt{2}/\delta)}{O_t(i)}}$. 
If $N_{j_t}$ is not a complete graph, to ensure that arm selection takes place within a clique, we first identify a maximal independent set within $N_{j_t}$.
Recall that we use $\mathcal{I}(N_{i})$ to denote the set of all  independent dominating sets of the graph formed by $N_i$. The following set is a collection of independent sets in $N_{j_t}$, and each intersection with $N_{j_t}$ is a complete graph.
\begin{equation}
\label{sjt}
    S_{j_t} =  \{I \in \mathcal{I}(N_{j_t}): N_i \bigcap N_{j_t} \text{ is a complete graph } \forall i \in I\}.
\end{equation}
We show that $S_{j_t}$ is non-empty by  explicitly constructing a set that belongs to it. Let $(a_1,a_2)=\arg\max_{(i,j) \in N_{j_t} \times N_{j_t}} |\mu(i)-\mu(j)|$. Obviously, both $N_{a_1} \bigcap N_{j_t}$ and $N_{a_2} \bigcap N_{j_t}$ are  complete graphs.  Since $N_{j_t}$ is not a complete graph, $|\mu(a_1)-\mu(a_2)| \geq \epsilon.$ Thus, $a_1$ and $a_2$ form an independent set. We have $\{a_1,a_2\} \in S_{j_t}$. 

Then we  then apply the UCB-style criterion  to identify a clique $N_{j'_t}$ that contains the optimal arm, and then select an arm within this clique (Line 12-14).

\begin{figure}[htbp]
  \centering
  \begin{tikzpicture}[>=stealth,  
                      every node/.style={font=\small}]
    \draw (1.5,0) -- (8.5,0);
    \draw (2, 0.12) -- (2,-0.12) node[below=3pt] {$\alpha_1^{*}$};
    \draw (4, 0.12) -- (4,-0.12) node[below=3pt] {$\alpha^{*}$};
    \draw (6, 0.12) -- (6,-0.12) node[below=3pt] {$\alpha_2^{*}$};
    \draw (7, 0.12) -- (7,-0.12) node[below=3pt] {$i^{*}$};
    \draw[<->] (2, 0.5) -- (6, 0.5)
      node[midway,above] {$>\varepsilon$};
    \draw[<->] (4, 0.9) -- (7, 0.9)
      node[midway,above] {$<\varepsilon$};
  \end{tikzpicture}
  \caption{A schematic diagram of the regret analysis, which illustrates the case where $N_{\alpha^{*}}$ is not a complete graph.}
\end{figure}

\textbf{Regret Analysis of Algorithm~\ref{alg:C_UCB_Graph}. } Recall that $\mathcal{I}=\{\alpha_1,\alpha_2,...,\alpha^{*},...,\alpha_{|\mathcal{I}|}  \}$ denotes the independent sets obtained by Lines 4-9, where $\alpha^*$ denotes the arm that includes the optimal arm, i.e., $i^* \in N_{\alpha^{*}}$. 
The regret can be divided into two parts: 
\begin{equation}
\begin{aligned}
\sum_{t=1}^{T}\sum_{i \in V} \Delta_i\mathbbm{1}\{i_t=i\}= \sum_{t=1}^{T}\sum_{i\notin N_{\alpha^{*}}}\Delta_i\mathbbm{1}\{i_t=i\} + \sum_{t=1}^{T}\sum_{i\in N_{\alpha^{*}} }\Delta_i\mathbbm{1}\{i_t=i\}.
\end{aligned}
\end{equation}
Similar to Double-UCB, the first part of the regret arises from selecting a neighborhood $N_{j_t}$ that does not contain the optimal arm, which can be bounded by $O(\log T)$. For the second part, if $ N_{\alpha^{*}}$ is  a complete graph, we can bound the regret by Lemma~\ref{lemma_layer}. If $ N_{\alpha^{*}}$ is not complete graph, the algorithm  proceeds to Steps 12-14. Let $\alpha^{*}_1$ and $\alpha^{*}_2$ denote the elements in $\mathcal{I}_{j_t}$. 
Assume that the neighborhood of $\alpha_2^{*}$  contains the optimal arm. 
We have the following bound,

\begin{equation}
\label{three_part}
 \sum_{t=1}^{T}\sum_{i\in N_{\alpha^{*}}}\Delta_i\mathbbm{1}\{i_t=i\} \leq 2\epsilon \sum_{t=1}^{T}\mathbbm{1}\{j_t' = \alpha_1^{*}\} + \sum_{t=1}^{T}\sum_{i\in N_{\alpha^{*}_2} \bigcap N_{\alpha^{*}} }\Delta_i\mathbbm{1}\{i_t=i\}. 
\end{equation}

Note that if $N_{\alpha^{*}}$ is not a complete graph, then $|\mathcal{I}_{\alpha^{*}}|=2$. The mean gap between the two arms in $\mathcal{I}_{\alpha^{*}}$ is greater than  $\epsilon$. Therefore, the first term in \cref{three_part} can be bounded by $O(\log T)$. 
As for the second part in \cref{three_part}, 
we use the lemma below.
\begin{lemma}
\label{lemma_layer}
 Let $V^{*}$ denote an arbitrary complete subgraph that contains the optimal arm. Let $\delta=\frac{1}{T}$. Then 
   \begin{equation}
        \sum_{t=1}^T\sum_{i \in V^{*} }\mathbb{P}(i_t=i)\Delta_i \leq  8\sqrt{T}\log(\sqrt{2}T) + \epsilon. 
    \end{equation}
\end{lemma}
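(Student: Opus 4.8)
The plan is to prove Lemma~\ref{lemma_layer} by reducing the analysis over the complete subgraph $V^{*}$ to a standard UCB-type argument, exploiting that within a clique every pull reveals all arms of the clique. Concretely, since $V^{*}$ is complete and contains $i^{*}$, whenever the algorithm selects any arm in $V^{*}$ (or more precisely whenever $j_t$ or $j_t'$ triggers selection inside a neighborhood intersecting $V^{*}$), the observation count $O_t(i^{*})$ increases together with $O_t(i)$ for every $i \in V^{*}$. Thus for any $i \in V^{*}$ we have $O_t(i^{*}) \ge O_t(i)$ up to the moment $i$ is last pulled, which is exactly the structural fact that makes the ``conservative'' lower-confidence index work: the optimal arm is observed at least as often as any competitor within the clique.

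The key steps, in order, are: (i) define the good event $\mathcal{E}$ on which all empirical means concentrate, i.e. $|\bar\mu_t(i)-\mu(i)| \le \sqrt{\log(\sqrt2/\delta)/O_t(i)}$ simultaneously for all $i$ and all $t$; by a union bound over at most $T$ observation counts and the subGaussian/Hoeffding tail, $\mathbb{P}(\mathcal{E}^c) \le$ something like $2T\delta^2/\sqrt2$, which with $\delta=1/T$ contributes only an $O(1)$ (here $\le \epsilon$, or absorbed) additive term to the regret. (ii) On $\mathcal{E}$, show that once a suboptimal $i \in V^{*}$ has been observed more than $4\log(\sqrt2/\delta)/\Delta_i^2$ times, it is never selected again: indeed the index of $i$ is $\bar\mu_t(i)-\sqrt{\log(\sqrt2/\delta)/O_t(i)} < \mu(i)$, while the index of $i^{*}$ is $\bar\mu_t(i^{*})-\sqrt{\log(\sqrt2/\delta)/O_t(i^{*})} \ge \mu(i^{*}) - 2\sqrt{\log(\sqrt2/\delta)/O_t(i^{*})}$, and because $O_t(i^{*})\ge O_t(i) > 4\log(\sqrt2/\delta)/\Delta_i^2$ the latter exceeds $\mu(i^{*})-\Delta_i = \mu(i)$, so $i^{*}$ (or another clique arm with at least as high an index) beats $i$ in the $\arg\max$ of the selection rule. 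This caps $T_i(T) \le 4\log(\sqrt2/\delta)/\Delta_i^2 + 1$ on $\mathcal{E}$. (iii) Sum the per-arm regret: $\sum_{i\in V^{*}}\Delta_i T_i(T) \le \sum_{i\in V^{*}} (4\log(\sqrt2/\delta)/\Delta_i + \Delta_i)$, then split each arm at the threshold $\Delta_i \gtrless 1/\sqrt{T}$ — arms with small gap contribute at most $\Delta_i T \le \sqrt T$ worth of regret total, arms with large gap contribute $4\sqrt T\log(\sqrt2/\delta)$ worth — yielding the claimed $8\sqrt{T}\log(\sqrt2 T) + \epsilon$ after plugging $\delta = 1/T$ and bounding $\sqrt2 T \le$ the log argument appropriately.

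I would be careful about one subtlety: the selection rule in Algorithm~\ref{alg:C_UCB_Graph} does not pick the $\arg\max$ over $V^{*}$ directly but over $N_{j_t}$ or $N_{j_t'}\cap N_{j_t}$, and $V^{*}$ is only asserted to be ``an arbitrary complete subgraph containing the optimal arm.'' So the clean statement to prove is that whenever $i_t \in V^{*}$ is selected, the above comparison still applies because $i^{*}$ is a candidate in the same $\arg\max$ set (this holds precisely when $V^{*}$ is taken to be $N_{j_t}\cap(\text{clique containing }i^{*})$, or any clique through which the algorithm routes its choice); alternatively one shows $O_t(i^{*}) \ge O_t(i)$ for every $i\in V^{*}$ ever pulled, which is the only property actually used. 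The main obstacle is therefore not the concentration or the summation — those are routine — but pinning down the invariant ``$i^{*}$ is observed at least as often as any arm of $V^{*}$ that gets pulled, and $i^{*}$ competes in the same selection $\arg\max$'' under all three branches of the algorithm (complete $N_{j_t}$, the constructed clique $N_{j_t'}\cap N_{j_t}$, and the routing via the bridge arm $j_t$), so that the lower-confidence index comparison is valid; once that invariant is established the $O(\sqrt T \log T)$ bound follows as above.
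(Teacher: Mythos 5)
Your step (iii) contains a genuine gap. After capping each suboptimal arm at $T_i(T)\le 4\log(\sqrt2/\delta)/\Delta_i^2+1$, you sum the per-arm regret $\sum_{i\in V^{*}}\Delta_i T_i(T)\le\sum_{i\in V^{*}}4\log(\sqrt2/\delta)/\Delta_i$ and claim the large-gap arms contribute $4\sqrt T\log(\sqrt2/\delta)$ in total. But that sum runs over every arm of $V^{*}$ with $\Delta_i>1/\sqrt T$, and a clique in the similarity graph can contain $\Theta(K)=\Theta(T)$ arms all with gaps just above the threshold; your bound is then $|V^{*}|\cdot O(\sqrt T\log T)$, not $O(\sqrt T\log T)$. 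The per-arm decomposition throws away exactly the clique property you invoke in the preamble: the reason arm $i$ stops being selected is that $i^{*}$ has been \emph{observed} enough, and $i^{*}$ is observed on \emph{every} pull of \emph{any} arm in the clique, so the budget $4\log(\sqrt2/\delta)/\Delta^2$ is shared by all arms of comparable gap rather than granted to each one separately. The paper's proof makes this collective counting explicit: it splits $S_2$ into geometric gap levels $V_\phi=\{i:\Delta_i\in(2^{-\phi}\epsilon,2^{-\phi+1}\epsilon)\}$, defines $\tau_\phi$ as the first time the \emph{total} number of pulls of arms in $V_\phi$ exceeds $4\log(\sqrt2/\delta)/(2^{-\phi}\epsilon)^2$, notes that by then $O_t(i^{*})$ exceeds $4\log(\sqrt2/\delta)/\Delta_i^2$ for every $i\in V_\phi$ so no arm of $V_\phi$ is selected afterwards w.h.p., and charges each level at most $16\log(\sqrt2/\delta)/\Delta$ regardless of $|V_\phi|$; with $O(\log T)$ levels and the $T\Delta$ term for small gaps this gives $8\sqrt T\log(\sqrt2 T)$.

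A secondary problem is the invariant you flag at the end. As stated, $O_t(i^{*})\ge O_t(i)$ for all $i\in V^{*}$ is not true in general: an arm $i\in V^{*}$ can also be observed through neighbors outside the clique, so its observation count can outrun that of $i^{*}$, and then the trigger ``$O_t(i)>4\log(\sqrt2/\delta)/\Delta_i^2$'' in your step (ii) no longer implies the corresponding bound on $O_t(i^{*})$ that the LCB comparison actually needs. The paper avoids this entirely by conditioning on the count of \emph{pulls within the clique} (which all feed $O_t(i^{*})$) rather than on $O_t(i)$. So the fix is not to establish your invariant but to replace both the trigger and the summation with the level-wise collective argument.
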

The $V^{*}$ in Lemma~\ref{lemma_layer} is guaranteed to exist.  If $N_{\alpha^{*}}$ is a complete graph, then $V^{*} = N_{\alpha^{*}}$; otherwise, $V^{*}= N_{\alpha^{*}} \bigcap N_{\alpha_2^{*}}$. 




In summary, the following regret upper bound holds.
\begin{theorem}
\label{bound_cucb_new}
    Under the same conditions as \cref{bound_ducb}, the regret  of Algorithm~\ref{alg:C_UCB_Graph} is  bounded by
    \begin{equation}
        R_T(\pi_{\text{Cons-Graph}}) \leq  8\sqrt{T}\log(\sqrt{2}T)+ 
  (C_1 + \frac{8}{\epsilon})\log(\sqrt{2}T) + 3\epsilon +  \Delta_{\mathrm{max}}.
    \end{equation}
\end{theorem}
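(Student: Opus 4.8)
The plan is to follow the two-part decomposition sketched just before the theorem and to use Lemma~\ref{lemma_layer} as the workhorse. Keep the notation $\mathcal I=\{\alpha_1,\dots,\alpha^*,\dots,\alpha_{|\mathcal I|}\}$ for the independent set built in Lines 4--9 and $\alpha^*$ for its member with $i^*\in N_{\alpha^*}$, and split the regret as $R_T=R^{\mathrm{out}}_T+R^{\mathrm{in}}_T$, where $R^{\mathrm{out}}_T=\mathbb E\big[\sum_{t}\sum_{i\notin N_{\alpha^*}}\Delta_i\mathbbm 1\{i_t=i\}\big]$ collects the rounds in which the reference arm $j_t$ from Line 10 satisfies $i^*\notin N_{j_t}$, and $R^{\mathrm{in}}_T$ collects the rounds with $j_t=\alpha^*$. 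For $R^{\mathrm{out}}_T$ I would reuse verbatim the corresponding estimate from the proof of Theorem~\ref{bound_ducb}: Lines 4--10 of Algorithm~\ref{alg:C_UCB_Graph} are identical to Double-UCB, so once $\alpha^*$ has been observed often enough its optimistic index dominates, the number of rounds with $j_t\neq\alpha^*$ is $O(\log T)$ with the stated constant $C_1$, and each such round costs at most $\Delta_{\max}$; this gives $R^{\mathrm{out}}_T\le C_1\log(\sqrt 2 T)+\Delta_{\max}+O(\epsilon)$.

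For $R^{\mathrm{in}}_T$ I would split on the two branches of the \textbf{if} statement. If $N_{\alpha^*}$ is a complete graph, then $V^*:=N_{\alpha^*}$ is a clique containing $i^*$ and Lemma~\ref{lemma_layer} immediately yields $R^{\mathrm{in}}_T\le 8\sqrt T\log(\sqrt 2 T)+\epsilon$. If $N_{\alpha^*}$ is not complete, then $|\mathcal I_{\alpha^*}|=2$; write $\mathcal I_{\alpha^*}=\{\alpha_1^*,\alpha_2^*\}$ and assume w.l.o.g. that $i^*\in N_{\alpha_2^*}$. Since $\{\alpha_1^*,\alpha_2^*\}$ is independent and $i^*\in N_{\alpha_2^*}$ forces $\mu(\alpha_2^*)>\mu(i^*)-\epsilon\ge\mu(\alpha_1^*)-\epsilon$, we conclude $\mu(\alpha_2^*)-\mu(\alpha_1^*)>\epsilon$, i.e. $\alpha_2^*$ is the better of the two. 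Starting from \cref{three_part}, I bound the first term by a UCB tail argument: every round with $j_t'=\alpha_1^*$ is followed by a pull of some $i_t\in N_{\alpha_1^*}\cap N_{\alpha^*}$, which (by symmetry of the edges and the self-loops) increments $O_t(\alpha_1^*)$; since the index gap between $\alpha_2^*$ and $\alpha_1^*$ exceeds $\epsilon$, the standard optimism argument gives $\mathbb E\big[\sum_t\mathbbm 1\{j_t'=\alpha_1^*\}\big]\le \tfrac{4}{\epsilon^2}\log(\sqrt 2 T)+O(1)$, hence $2\epsilon\,\mathbb E\big[\sum_t\mathbbm 1\{j_t'=\alpha_1^*\}\big]\le \tfrac{8}{\epsilon}\log(\sqrt 2 T)+O(\epsilon)$. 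For the second term, $N_{\alpha_2^*}\cap N_{\alpha^*}$ is a clique by the definition of $S_{\alpha^*}$ in \cref{sjt} and contains $i^*$ (because $i^*\in N_{\alpha_2^*}$ and $i^*\in N_{\alpha^*}$), so Lemma~\ref{lemma_layer} applies with $V^*=N_{\alpha_2^*}\cap N_{\alpha^*}$ and bounds it by $8\sqrt T\log(\sqrt 2 T)+\epsilon$. Adding the two branches (the non-complete case dominates) gives $R^{\mathrm{in}}_T\le 8\sqrt T\log(\sqrt 2 T)+\tfrac{8}{\epsilon}\log(\sqrt 2 T)+\epsilon+O(\epsilon)$, and combining with the bound on $R^{\mathrm{out}}_T$ and collecting the $O(\epsilon)$ terms into $3\epsilon$ yields the claimed inequality.

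The main obstacle is the non-complete branch: one must make the ``bridge'' reasoning fully rigorous — that a pull of any arm in $N_{j_t'}\cap N_{j_t}$ reveals \emph{both} $j_t'$ and $j_t$, so the relevant observation counters grow as the argument requires; that the index comparison determining $j_t'$ is legitimate even though $O_t(\cdot)$ counts observations rather than pulls; and that the clean UCB tail bound on $\sum_t\mathbbm 1\{j_t'=\alpha_1^*\}$ survives the fact that $j_t'$ is only defined on the subset of rounds where $N_{j_t}$ is non-complete and $j_t=\alpha^*$ (so one conditions on that event and applies the concentration argument there). A secondary, purely bookkeeping point is to propagate constants carefully under $\delta=1/T$ so that the $\sqrt T\log(\sqrt 2 T)$ and $\tfrac1\epsilon\log(\sqrt 2 T)$ coefficients come out exactly $8$ and $8$, and to absorb lower-order additive terms (including any small-$T$ effects, using $\sqrt T\ge 1$) into the $3\epsilon+\Delta_{\max}$ slack.
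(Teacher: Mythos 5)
Your proposal matches the paper's proof essentially step for step: the same out/in decomposition with the Double-UCB bound $C_1\log(\sqrt{2}T)+\Delta_{\max}$ for rounds with $j_t\neq\alpha^*$, the same case split on whether $N_{\alpha^*}$ is complete, the same $O(\log T/\epsilon)$ UCB tail bound on $\sum_t\mathbbm{1}\{j_t'=\alpha_1^*\}$ in the non-complete branch, and Lemma~\ref{lemma_layer} applied to the clique $N_{\alpha_2^*}\cap N_{\alpha^*}$ (or $N_{\alpha^*}$ itself) for the remaining term. The paper also separately notes the trivial case $\Delta_{\min}\geq\epsilon$ where $N_{\alpha^*}=\{i^*\}$, but this is subsumed by your complete-graph branch, so the argument is correct as written.
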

The proofs of Lemma~\ref{lemma_layer} and Theorem~\ref{bound_cucb_new} are deferred to Appendix B. 


\subsection{UCB-N}
UCB-N are first proposed in \citep{caron2012leveraging} for   standard graph feedback model and has been thoroughly analyzed in subsequent works \citep{hu2020problem,lykouris2020feedback}.
The pseudocode of UCB-N is provided in Algorithm \ref{alg:UCB_N}. One may wonder whether UCB-N can achieve similar regret upper bounds. In fact, if  UCB-N uses the same upper confidence function as ours,
it has a similar regret upper bound to Double-UCB. We have the following theorem:

\begin{algorithm}[tb]
    \caption{UCB-N}
    \label{alg:UCB_N}
 \begin{algorithmic}[1]
    \STATE {\bfseries Input:}  Horizon $T,$ $\delta \in (0,1)$
    \STATE Initialize $\mathcal{I}=\emptyset, t=0,\forall i,O_t(i)=0$
    \WHILE{$t \leq T$}
    \STATE Select arm $i_t= \arg \max_{i \in V} \bar{\mu}_t(i)+\sqrt{\frac{\log(\sqrt{2}/\delta)}{O_t(i)}}$
    \STATE $\forall i \in N_{i_t},$ update $O_t(i),\bar{\mu}_t(i)$
    \STATE $t =t+1$
    \ENDWHILE

 \end{algorithmic}
 \end{algorithm}

\begin{theorem}
    \label{bound_ucbn}
    Under the same conditions as \cref{bound_ducb}, the regret  of UCB-N  satisfies
    \begin{equation}
        \begin{aligned}
      R_T(\pi_{\text{UCB-N}}) \leq  16\sqrt{T}\log(\sqrt{2}T)+C_1 \log(\sqrt{2}T)+ \Delta_{\max} +2\epsilon.
        \end{aligned}
    \end{equation}
  
\end{theorem}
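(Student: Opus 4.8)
The plan is to reuse the analysis of Double-UCB almost verbatim, taking advantage of two facts: UCB-N is run with the \emph{same} confidence radius $\sqrt{\log(\sqrt{2}/\delta)/O_t(i)}$ as \cref{alg:D_UCB}, and the similarity feedback graph is (essentially) a unit-interval graph. First I would fix the standard good event $\mathcal{E}$ on which $|\bar\mu_t(i)-\mu(i)|<\sqrt{\log(\sqrt{2}/\delta)/O_t(i)}$ for every arm $i$ and every round $t$; with $\delta=1/T$ the usual peeling/union argument (identical to the one in the proof of \cref{bound_ducb}) makes $\mathcal{E}^{c}$ contribute at most $\Delta_{\max}$ to $R_T$. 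On $\mathcal{E}$, the textbook UCB deviation step gives that whenever a suboptimal arm $i$ is pulled at round $t$ one has $O_t(i)\le 4\log(\sqrt{2}/\delta)/\Delta_i^{2}$.

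Next I would split the regret on $\mathcal{E}$ according to whether the pulled arm lies in $N_{i^{*}}$. Since $|\mu(i)-\mu(i^{*})|=\Delta_i$, we have $i\in N_{i^{*}}\iff\Delta_i<\epsilon$, and --- crucially --- $N_{i^{*}}$ is a \emph{clique}: any two arms with gaps in $[0,\epsilon)$ differ in mean by less than $\epsilon$ and are therefore adjacent. For the ``far'' arms ($\Delta_i\ge\epsilon$, i.e.\ $i\notin N_{i^{*}}$) I would run the same layering argument as in the ``$N_{j_t}\not\ni i^{*}$'' part of the Double-UCB proof: group the far arms into layers according to the dyadic scale of $\Delta_i$; inside a layer the arms occupy a short interval of the real line, so they are covered by a few cliques; in each clique one pull reveals the whole clique, so the deviation bound above deactivates the clique after a controlled number of pulls; summing the per-layer regret over the $O(\log(\Delta_{\max}/\epsilon))$ non-empty layers and invoking \cref{pro1} (which lets $\Delta_{\max}/\epsilon$ be replaced by $O(\gamma(G))$ inside the logarithm) bounds this contribution by $C_1\log(\sqrt{2}T)$ up to an additive $O(\epsilon)$ term, exactly as for Double-UCB.

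For the ``near'' arms ($i\in N_{i^{*}}$) I would exploit that $N_{i^{*}}$ is a clique containing $i^{*}$: every pull of a near arm simultaneously observes $i^{*}$ and all other near arms, so if the $k$-th pull among $N_{i^{*}}$-arms is a suboptimal arm $i$, then $O_t(i)\ge k-1$, and the deviation bound forces $\Delta_i\le 2\sqrt{\log(\sqrt{2}/\delta)/(k-1)}$. Summing these gaps over $k\le T$ telescopes to $O(\sqrt{T\log(\sqrt{2}T)})\le 16\sqrt{T}\log(\sqrt{2}T)$; this is the analogue of the ``independence number $\le 2$'' step in the Double-UCB proof (and is in the same spirit as \cref{lemma_layer}, which handles a complete subgraph containing $i^{*}$), and it is where the $\sqrt{T}$ term comes from. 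Adding the three pieces --- $\Delta_{\max}$ from $\mathcal{E}^{c}$, $C_1\log(\sqrt{2}T)+O(\epsilon)$ from the far arms, and $16\sqrt{T}\log(\sqrt{2}T)$ from the near arms --- yields the stated bound; the coefficient of $\epsilon$ is $2$ rather than the $4$ of \cref{gapfree_ducb} because UCB-N never pays for the dedicated independent-set construction of Steps~4--9 of \cref{alg:D_UCB}.

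The step I expect to be the main obstacle is the far-arm estimate. A naive per-arm argument (each far arm is pulled at most $4\log(\sqrt{2}/\delta)/\Delta_i^{2}+1$ times) only gives $O(K\log T/\epsilon^{2})$, which is worthless when $K=\Theta(T)$; the similarity/claw-free structure through \cref{pro1} is genuinely needed, and getting the layer-by-layer clique-cover bookkeeping tight enough to land precisely on the constant $C_1$ (rather than picking up an extra $\log(\Delta_{\max}/\epsilon)$ or $\gamma(G)$ factor) is the delicate part. Everything else is a transcription of the Double-UCB analysis.
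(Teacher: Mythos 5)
Your proposal is correct and its skeleton (split arms by whether $\Delta_i<\epsilon$, exploit that arms with nearby gaps form cliques so observations are shared, sum a harmonic series over at most $\alpha(G)$ gap scales) matches the paper's proof, but the two halves are handled somewhat differently. For the far arms the paper partitions gaps into arithmetic intervals $[k\epsilon,(k+1)\epsilon)$ of width exactly $\epsilon$, so each layer $V_k$ is automatically a \emph{single} clique and one pull deactivates it after $4\log(\sqrt2/\delta)/(k\epsilon)^2$ observations; summing $\sum_k\bigl(\tfrac{1}{k\epsilon}+\tfrac{1}{k^2\epsilon}\bigr)$ over the at most $\alpha(G)$ nonempty intervals lands directly on $C_1$. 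Your dyadic layering needs an extra clique sub-cover inside each layer, which is exactly the bookkeeping you flag as delicate; switching to width-$\epsilon$ intervals removes that difficulty entirely. For the near arms the paper does \emph{not} use your clique-counting argument: it invokes the hierarchical bound of Lykouris et al.\ (their Theorem~6) to get a gap-dependent $32\log^2(\sqrt2 T)\max_{I}\sum_{i\in I\setminus\{i^*\}}1/\Delta_i$ and then converts it to the gap-free $16\sqrt{T}\log(\sqrt2T)$ as in Corollary~\ref{gapfree_ducb}. Your replacement — since $N_{i^{*}}$ is a clique, the $k$-th pull inside it forces $\Delta_{i}\le 2\sqrt{\log(\sqrt2/\delta)/(k-1)}$, and summing gives $O(\sqrt{T\log T})$ — is more elementary, avoids the external hierarchical machinery, and in fact yields a slightly sharper rate for that term, so it comfortably fits under the stated $16\sqrt{T}\log(\sqrt2 T)$. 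One small point to tighten: a single global good event via a union bound over $K$ arms and $T$ rounds has failure probability $KT\delta^{2}=K/T$, which is not small; you should instead account for deviations in expectation per round and arm, giving the $\Delta_{\max}TK\delta^{2}\le\Delta_{\max}$ term exactly as the paper does.
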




\begin{remark}
    The gap-free upper bound of UCB-N is also $O(\sqrt{T}\log T)$. Due to the similarity assumption, the regret upper bound of UCB-N is improved compared to \citep{lykouris2020feedback}, where their regret upper bound is of order $O(\sqrt{\alpha(G)T}\log T)$. 
\end{remark}
\begin{remark}
     Double-UCB and Conservative-UCB are specifically designed for similarity feedback structure and may fail in the case of standard graph feedback settings. This is because the optimal arm may be connected to an arm with a very small mean, so the neighborhood $N_{j_t}$ selected in Step 10 (Algorithm \ref{alg:D_UCB}) may not  include the optimal arm. However, under the ballooning setting, UCB-N cannot achieve sublinear regret, while Double-UCB and Conservative-UCB can be naturally applied in this setting and achieve sublinear regret under certain conditions.
\end{remark}

\section{Ballooning Environments}
This section considers the setting where the number of arms increased over time. This problem presents significant challenges, as prior research has relied on strong assumptions  to achieve sublinear regret. The similarity structure we propose helps solve the bandit problem in the ballooning setting. 
Intuitively, if a newly arrived arm has a mean value very close to arms that have already been distinguished, the algorithm does not need to distinguish it further. This may lead to a significantly smaller number of truly effective arrived arms than $T$, making it easier to obtain a sublinear regret bound.

\subsection{Double-UCB-BL for Ballooning Settings}

\begin{algorithm}[tb]
    \caption{Double-UCB-BL for Ballooning Settings}
    \label{alg:D_UCB_BL}
 \begin{algorithmic}[1]
    \STATE {\bfseries Input:}  Horizon $T,$ $\delta \in(0,1)$
    \STATE Initialize $\mathcal{I}=\emptyset, t=0, O_t(i)=0$ for all $i$
    \FOR{$t=1$ {\bfseries to} $T$}
    \STATE Arm $a_t$ arrives
    \STATE Feedback graph  $G_t$ is updated
    \IF{$a_t \notin N_{\mathcal{I}}$}
    \STATE $\mathcal{I}=\mathcal{I} \bigcup \{a_t\}$
    \ENDIF
    \STATE 
    $j_t= \arg\max_{j \in \mathcal{I}} \bar{\mu}_t(j)+ \sqrt{\frac{\log(\sqrt{2}/\delta)}{O_t(j)}}$
    \STATE Pulls arm $i_t= \arg \max_{i \in N_{j_t}} \bar{\mu}_t(i)+\sqrt{\frac{\log(\sqrt{2}/\delta)}{O_t(i)}}$
    
    \STATE $\forall i \in N_{i_t},$ update $O_t(i),\bar{\mu}_t(i)$

    \ENDFOR
    
 \end{algorithmic}
 \end{algorithm}

\cref{alg:D_UCB_BL} shows the pseudocode for our method Double-UCB-BL, where `BL' stands for `Ballooning'. 
For any set $\mathcal{S}$, let $N_{\mathcal{S}}$ denote  the set of arms linked to $\mathcal{S}$, i.e., $N_{\mathcal{S}}:=\bigcup_{i \in \mathcal{S}}N_i $. Upon the arrival of each arm, we first check whether it belongs to $N_{\mathcal{I}}$. If not, the arm is added  to $\mathcal{I}$ to form a new independent set.
The  independent set $\mathcal{I}$ is constructed in an online manner as new arms arrive, while the other parts of the algorithm remain identical to Double-UCB. 

\subsubsection{Regret Upper Bounds}

First, we make the following assumption for Algorithm \ref{alg:D_UCB_BL}.
\begin{assumption}
    \label{assumption0}
    The means of each arrived arms are independently sampled from a distribution $\mathcal{P}$, i.e., 
 \[\mu(a_1),\mu(a_2),...,\mu(a_T) \stackrel{i.i.d.}{\sim} \mathcal{P}.\]
\end{assumption}
This assumption merely facilitates a more explicit upper bound. Remark~\ref{remark_without} discusses the regret upper bounds of our algorithms in the absence of Assumption~\ref{assumption0}. 

Let $ v =(\mu(a_1),...,\mu(a_T)) \sim \mathcal{P}$ denote a bandit instance. Under Assumption \ref{assumption0}, we redefine the regret  as
\begin{equation}
    R_T(\pi):=\mathbb{E}_{v \sim \mathcal{P}}\bigg[\mathbb{E} \bigg[ \sum_{t=1}^{T}\mu(i_t^{*})-\mu(i_t) \Big| v \bigg] \bigg].
\end{equation}
Let $\mathcal{I}_t$ denote the independent set at round $t$ and $\alpha_t^{*}\in \mathcal{I}_t$ denote the (unique) arm whose neighborhood includes the optimal arm $i_t^{*}$. Let
\begin{equation}
    \mathcal{A}:=\{a_t:t \in [T], a_t \in N_{\alpha_t^{*}}\} 
\end{equation}
denote the set of arms linked to $\alpha_t^{*}$.
The first challenge  in ballooning settings is the potential presence of numerous arms whose means are very close to that of the optimal arm. In other words, the set  $\mathcal{A}$ may be very large. To address this challenge, 
we first define a quantity that is easy to analyze as the  upper bound for all arms falling into $N_{\alpha_t^{*}}$.
Define 
\begin{equation}
        M := \sum_{t=1}^{T} \mathbbm{1}\{|\mu(a_t)-\mu(i_t^{*})| < 2\epsilon\}.
\end{equation}
It is easy to verify that for any bandit instance $v$,
\[\{a_t:t\in [T], a_t \in N_{\alpha_t^{*}} \} \subseteq \{a_t:t\in [T],|\mu(a_t)-\mu(i_t^{*})| < 2\epsilon\}.\]
Thus, we have $$\mathbb{E}\big[|\mathcal{A}|\big] \leq \mathbb{E}[M] = \sum_{t=1}^{T}\mathbb{P}(|\mu(a_t)-\mu(i_t^{*})| < 2\epsilon).$$ 

Since the upper bound of $\mathbb{E}[M]$ can be derived more easily under a specific distribution (please refer to the proof of Corollary~\ref{gapfree_ducb_bl} in Appendix~\ref{appendix_b} ), we transform the problem of analyzing the intractable term $|\mathcal{A}|$ into analyzing $\mathbb{E}[M]$.

\textcolor{blue}{
The second challenge lies in the fact that our regret is likely influenced by the independence number, while under the ballooning setting, the graph's independence number is a random variable. Let $G_T^{\mathcal{P}}$ denote the underlying graph under ballooning setting, then $G_T^{\mathcal{P}}$ is a random graph. It is important to note that $G_T^{\mathcal{P}}$ is not a standard Erd\H{o}s--R\'enyi random graph, but a random geometric graph, due to the dependencies among its edges. A random geometric graph is an undirected graph in which nodes are randomly placed in a metric space, and edges join pairs of nodes whose mutual distance does not exceed a prescribed threshold. In our setting, the $T$ vertices (arms) are randomly placed on the real line according to a distribution $\mathcal{P}$, and the distance threshold is $\epsilon$. 
To tackle this issue, we  derive a high-probability upper bound for the independence number (see Lemma \ref{ind_number_new} in Appendix~\ref{appendix_a}).}
Now, we can give the following upper bound of Double-UCB-BL. The proof of Theorem~\ref{bound_ducb_bl} is deferred to Appendix~\ref{appendix_b}.
\textcolor{blue}{
\begin{theorem}
    \label{bound_ducb_bl}
    Under the same conditions as \cref{bound_ducb} along with Assumption \ref{assumption0}, by setting $\delta = \frac{1}{T},$ the regret of Double-UCB-BL after $T$ rounds can be bounded as
    \begin{equation}
        \begin{aligned} 
         R_T(\pi_{\text{Double-BL}}) \leq   \sqrt{ \mathbb{E}\big[(\alpha(G_T^{\mathcal{P}}))^2\big]\mathbb{E}\big[ (\Delta_{\max}^T)^2\big]}  (\frac{4\log(\sqrt{2}T)}{\epsilon^2} + 1)   + 4\sqrt{2T\mathbb{E}[M]\log(\sqrt{2}T)}+ 2\epsilon .
        \end{aligned}
    \end{equation}
    where $\Delta_{\max}^T :=\max_{i,j \in [T]}|\mu(a_i)-\mu(a_j)|$.
\end{theorem}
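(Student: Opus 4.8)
\textbf{Proof proposal for Theorem~\ref{bound_ducb_bl}.}

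The plan is to mimic the regret decomposition of Double-UCB (Theorem~\ref{bound_ducb}) but to carry the two new random quantities --- the independence number $\alpha(G_T^{\mathcal{P}})$ and the count $M$ --- through the bound and then take expectations. First I would split the instantaneous regret at round $t$ into the part incurred when the selected neighborhood $N_{j_t}$ does \emph{not} contain the optimal arm $i_t^{*}$, and the part incurred when it does. For the first part, exactly as in the stationary proof, the arm $j_t$ chosen in Line~9 is an independent-set arm whose mean is at least $\epsilon$ below $\mu(i_t^{*})$ (since $i_t^{*}\notin N_{j_t}$ forces the gap), so a standard UCB concentration argument over the at most $\alpha(G_T^{\mathcal{P}})$ independent arms shows this part is $O\!\big(\alpha(G_T^{\mathcal{P}})\log(\sqrt 2 T)/\epsilon^2\big)$; multiplying by the per-step loss $\Delta_{\max}^T$ gives a bound of the form $\alpha(G_T^{\mathcal{P}})\Delta_{\max}^T(4\log(\sqrt2 T)/\epsilon^2+1)$ conditional on $v$.

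For the second part, conditional on $v$, I would view the arms that ever land in some $N_{\alpha_t^{*}}$ as a sub-instance whose ``effective number of arms'' is $|\mathcal{A}|$, with each such arm within $2\epsilon$ of the running optimum (here I use the containment $\{a_t: a_t\in N_{\alpha_t^{*}}\}\subseteq\{a_t:|\mu(a_t)-\mu(i_t^{*})|<2\epsilon\}$ from the text and $|\mathcal{A}|\le M$). Running the UCB rule of Line~10 restricted to a set whose independence number is $\le 2$, the standard gap-free UCB bound yields regret $O\!\big(\sqrt{|\mathcal{A}|\,T\log(\sqrt2 T)}\big)\le O\!\big(\sqrt{M\,T\log(\sqrt2 T)}\big)$ on this part; a small additive $2\epsilon$ term absorbs the rounds where $j_t$ has just been added to $\mathcal I$ and is played once. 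Summing the two parts gives, conditional on $v$,
\begin{equation*}
\mathbb{E}\!\left[\sum_{t=1}^T \mu(i_t^{*})-\mu(i_t)\,\Big|\,v\right]
\le \alpha(G_T^{\mathcal{P}})\,\Delta_{\max}^T\Big(\tfrac{4\log(\sqrt2 T)}{\epsilon^2}+1\Big)
+4\sqrt{M\,T\log(\sqrt2 T)}+2\epsilon.
\end{equation*}

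Finally I would take $\mathbb{E}_{v\sim\mathcal{P}}$ of both sides. For the first term, Cauchy--Schwarz gives $\mathbb{E}[\alpha(G_T^{\mathcal{P}})\Delta_{\max}^T]\le \sqrt{\mathbb{E}[(\alpha(G_T^{\mathcal{P}}))^2]\,\mathbb{E}[(\Delta_{\max}^T)^2]}$; for the $\sqrt{M}$ term, Jensen's inequality gives $\mathbb{E}[\sqrt{M}]\le\sqrt{\mathbb{E}[M]}$, and $\mathbb{E}[M]=\sum_t \mathbb{P}(|\mu(a_t)-\mu(i_t^{*})|<2\epsilon)$ as stated. This reproduces the claimed bound. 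The main obstacle I anticipate is not the decomposition itself but making the ``second part'' argument fully rigorous in the ballooning regime: the set $\mathcal{A}$ and the identity of $\alpha_t^{*}$ evolve over time as arms arrive, so one must check that the UCB concentration still applies with $O_t(\cdot)$ counting side-observations correctly, and that the optimal arm $i_t^{*}$ --- which may itself change --- remains in $N_{\alpha_t^{*}}$ long enough for its confidence interval to shrink; handling this time-varying optimum and justifying the reduction to an independence-number-$\le 2$ sub-problem uniformly over $t$ is where the care is needed, and is presumably where Lemma~\ref{ind_number_new} and the $M$-vs-$|\mathcal{A}|$ comparison do the heavy lifting.
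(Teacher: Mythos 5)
Your decomposition is the same as the paper's: split the regret at each round according to whether the selected neighborhood $N_{j_t}$ contains $i_t^{*}$, bound the first part by a per-arm UCB argument over the independent set and multiply by $\Delta_{\max}^T$, and finish with Cauchy--Schwarz on $\mathbb{E}[\alpha(G_T^{\mathcal P})\Delta_{\max}^T]$ and Jensen on $\mathbb{E}[\sqrt{M}]$. Part (A) and the final expectation steps are essentially correct as you describe them.

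The gap is in part (B). You assert the bound $O(\sqrt{|\mathcal A|\,T\log(\sqrt2 T)})$ by appeal to ``the standard gap-free UCB bound'' on a sub-instance, but that bound does not directly apply here, for exactly the reasons you list at the end and then leave unresolved: the arms of $\mathcal A$ arrive sequentially rather than being present from the start, the identity of $\alpha_t^{*}$ changes as new arms arrive, and the optimal arm $i_t^{*}$ (hence every gap $\Delta_t(i)$) is time-varying. (Your invocation of ``independence number $\le 2$'' is also inconsistent with the bound you then write down, which scales with $\sqrt{|\mathcal A|}$, i.e.\ with the \emph{number} of arms; the paper explicitly uses the worst case in which the graph feedback gives no benefit inside $N_{\alpha_t^{*}}$, not the independence-number-2 best case.) The paper closes this gap as follows: it partitions $[T]$ into maximal intervals $(t_j,t_{j+1}]$ on which $\alpha_t^{*}$ is constant, and proves the per-arm, per-interval inequality
\begin{equation*}
\sum_{t=t_j}^{t_{j+1}}\Delta_i\,\mathbb{P}(i_t=i)\;\le\;\frac{4\log(\sqrt2/\delta)}{\Delta_i}+2\epsilon\,(t_{j+1}-t_j)\delta^2
\end{equation*}
by a three-case analysis on the observation count $O_t(i)$ relative to the thresholds $4\log(\sqrt2/\delta)/\Delta_i^2$ and $4\log(\sqrt2/\delta)/(\Delta_i')^2$ when $i_t^{*}$ changes inside the interval (the gap can only increase when the optimum improves, so an arm already eliminated stays eliminated). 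It then applies the usual $\Delta$-threshold optimization \emph{within each interval} to get $4\sqrt{|N_{\alpha_j}|(t_{j+1}-t_j)\log(\sqrt2/\delta)}$, and sums over intervals via Cauchy--Schwarz together with $\sum_j|N_{\alpha_j}|\le 2|\mathcal A|\le 2M$, which is where the factor $\sqrt{2TM\log(\sqrt2 T)}$ and the constant $4$ come from. Note also that Lemma~\ref{ind_number_new} plays no role in this step (it is only used in Corollary~\ref{gapfree_ducb_bl} to bound $\mathbb{E}[(\alpha(G_T^{\mathcal P}))^2]$ for Gaussian $\mathcal P$); the heavy lifting you correctly anticipated is done by the interval decomposition and the case analysis, which your proposal does not supply.
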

}
If $\mathcal{P}$ is a Gaussian distribution, we have the following corollary.
\begin{corollary}
    \label{gapfree_ducb_bl}
    If $\mathcal{P}$ is the Gaussian distribution $\mathcal{N}(0,1)$, 
    we have $\mathbb{E}[M]=O(\log(T)e^{2\epsilon\sqrt{2\log(T)}})$ and $\mathbb{E}\big[(\alpha(G_T^{\mathcal{P}}))^2\big] = O(\frac{\log T}{\epsilon^2}),\mathbb{E}\big[ (\Delta_{\max}^T)^2\big] =O(\log T)$. 
    The asymptotic regret upper bound is of order \[O\Big(\log(T)\sqrt{Te^{2\epsilon\sqrt{2\log(T)}}}\Big).\] 
    The order of $e^{\sqrt{2\log(T)}} $ is smaller than any power of $T$. For example, if $T>e^n$ where $n$ is a positive integer, we have $$e^{\sqrt{2\log(T)}} \leq T^{\sqrt{2/n}}. $$ 
\end{corollary}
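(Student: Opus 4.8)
The plan is to specialize Theorem~\ref{bound_ducb_bl} to $\mathcal{P}=\mathcal{N}(0,1)$ by controlling the three instance-dependent quantities that appear there: $\mathbb{E}[M]$, $\mathbb{E}[(\alpha(G_T^{\mathcal{P}}))^2]$, and $\mathbb{E}[(\Delta_{\max}^T)^2]$. First I would bound $\mathbb{E}[M] = \sum_{t=1}^T \mathbb{P}(|\mu(a_t)-\mu(i_t^{*})| < 2\epsilon)$. The key point is that $\mu(i_t^{*}) = \max_{s\le t}\mu(a_s)$ is the running maximum of $t$ i.i.d.\ standard Gaussians, which concentrates around $\sqrt{2\log t}$; conditioned on its location $m$, the probability that a fresh sample $\mu(a_t)$ lands in the window $(m-2\epsilon, m+2\epsilon)$ (really $(m-2\epsilon,m]$, since it cannot exceed the current max by more than the new contribution) is at most $4\epsilon \cdot \phi(m-2\epsilon) \le 4\epsilon\,\phi(\sqrt{2\log t})\cdot e^{2\epsilon\sqrt{2\log t}}\cdot e^{2\epsilon^2}$ by the Gaussian density's log-concavity, using $\phi(\sqrt{2\log t}) = \tfrac{1}{\sqrt{2\pi}}\,t^{-1}$. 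Summing the resulting $O(t^{-1}e^{2\epsilon\sqrt{2\log t}})$ over $t\le T$ and absorbing the slowly varying factor gives $\mathbb{E}[M]=O(\log(T)\,e^{2\epsilon\sqrt{2\log T}})$; the running-maximum deviations below $\sqrt{2\log t}$ contribute only lower-order terms via a standard tail bound on $\mathbb{P}(\max_{s\le t}\mu(a_s) < \sqrt{2\log t} - c)$.

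Next I would bound $\mathbb{E}[(\Delta_{\max}^T)^2]$ and $\mathbb{E}[(\alpha(G_T^{\mathcal{P}}))^2]$. For the spread, $\Delta_{\max}^T \le \max_{i\le T}\mu(a_i) - \min_{i\le T}\mu(a_i) = 2\max_i|\mu(a_i)|$ in distribution (up to symmetrization), and the maximum of $T$ standard Gaussians has mean $O(\sqrt{\log T})$ with sub-Gaussian fluctuations, so $\mathbb{E}[(\Delta_{\max}^T)^2] = O(\log T)$ after a union bound plus tail integration. For the independence number I would invoke Lemma~\ref{ind_number_new}: it supplies a high-probability bound of order $O(\sqrt{\log T}/\epsilon)$ on $\alpha(G_T^{\mathcal{P}})$ for the random geometric graph on the line with threshold $\epsilon$ and Gaussian-placed vertices (an independent set must have all pairwise distances $\ge\epsilon$, and the points essentially lie in an interval of length $O(\sqrt{\log T})$ with high probability, so at most $O(\sqrt{\log T}/\epsilon)$ of them fit); squaring and using the deterministic bound $\alpha\le T$ on the exceptional event (whose probability is polynomially small) yields $\mathbb{E}[(\alpha(G_T^{\mathcal{P}}))^2] = O(\log T/\epsilon^2)$.

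Finally I would substitute these three estimates into the bound of Theorem~\ref{bound_ducb_bl}. The first term becomes $\sqrt{O(\log T/\epsilon^2)\cdot O(\log T)}\cdot(O(\log T/\epsilon^2)+1) = O(\mathrm{polylog}(T))$, the second term becomes $4\sqrt{2T\cdot O(\log T\,e^{2\epsilon\sqrt{2\log T}})\cdot\log T} = O(\log(T)\sqrt{T\,e^{2\epsilon\sqrt{2\log T}}})$, and the $2\epsilon$ is a constant; the second term dominates, giving the claimed order $O(\log(T)\sqrt{T\,e^{2\epsilon\sqrt{2\log T}}})$. The subexponential growth remark ($e^{\sqrt{2\log T}}\le T^{\sqrt{2/n}}$ when $T>e^n$) is immediate: $\sqrt{2\log T} = \sqrt{2/n}\cdot\sqrt{n\log T} \le \sqrt{2/n}\cdot\log T$ precisely when $\log T \ge n$. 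I expect the main obstacle to be the $\mathbb{E}[M]$ computation: pinning down the constant in the exponent $e^{2\epsilon\sqrt{2\log T}}$ requires carefully handling the conditional window probability against the fluctuating running maximum rather than its deterministic proxy $\sqrt{2\log t}$, and making sure the contributions from atypically small or large running maxima really are lower-order; everything else is routine extreme-value and concentration bookkeeping.
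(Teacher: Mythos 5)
Your proposal is correct and follows the same overall plan as the paper: bound the three quantities $\mathbb{E}[M]$, $\mathbb{E}[(\alpha(G_T^{\mathcal{P}}))^2]$, $\mathbb{E}[(\Delta_{\max}^T)^2]$ separately and substitute into Theorem~\ref{bound_ducb_bl}; your treatments of the independence number (via Lemma~\ref{ind_number_new} plus the trivial bound $\alpha\le T$ on the small exceptional event) and of the final subexponential-growth remark coincide with the paper's. The one substantive difference is the $\mathbb{E}[M]$ computation, and you correctly identify it as the delicate step. The paper does not condition on the running maximum sitting near its typical value $\sqrt{2\log t}$; instead it writes the sum exactly as an integral against the density $(t-1)F(x)^{t-2}F'(x)$ of the running maximum, sums the geometric series in closed form to get the integrand $\frac{1-TF^{T-1}+(T-1)F^{T}}{(1-F)^2}\,(F(x+2\epsilon)-F(x-2\epsilon))F'(x)$, and splits the integral at $m=\sqrt{2\log T}+\epsilon$, using the two-sided Gaussian tail estimates of Lemma~\ref{Gaussian} to bound $\frac{1}{(1-\Phi(x))^2}\le 2\pi(\tfrac1x+x)^2e^{x^2}$ on the bulk. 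This automatically absorbs the contribution of atypically small running maxima, which in your version must be controlled separately by the doubly-exponential left tail of the maximum --- a balancing act you flag but do not carry out; it does go through, but it is the only place where your sketch is not yet a proof. Your bound on $\mathbb{E}[(\Delta_{\max}^T)^2]$ by tail integration of $2\max_i|\mu(a_i)|$ is a harmless substitute for the paper's route via $\mathbb{E}[Y]^2+\mathrm{Var}(Y)$ and the variance bound of Lemma~\ref{var}.
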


\subsubsection{Regret Lower Bounds of Double-UCB-BL} Define $\mathcal{B}:= \{a_t: t \in [T], \frac{\epsilon}{2}< \mu(i_t^{*})-\mu(a_t) < \epsilon\}$. Then we have $\mathcal{B} \subset \mathcal{A}$.
Let 
\[ B := \mathbb{E}\Big[ \sum_{t=1}^{T} \mathbbm{1}\{ \frac{\epsilon}{2}< \mu(i_t^{*})-\mu(a_t) < \epsilon\}  \Big] .\]
We have $\mathbb{E}[|\mathcal{B}|] = B $. 
Assume that arm $a_t$ arrives and falls into set $\mathcal{B}$ at round $t$. If the algorithm selects $j_t = \alpha_t^{*} $ in Step 9, the upper confidence bound of arm $a_t$ is highest ($+\infty$). The algorithm will select arm $a_t$ and lead to a regret larger than $\frac{\epsilon}{2}$. If the algorithm selects $j_t \neq \alpha_t^{*} $ in Step 9, the resulting regret is also larger than $\frac{\epsilon}{2}$. Therefore,
if we estimate the size of $|\mathcal{B}|$,  we get a simple regret lower bound $\frac{|\mathcal{B}|\epsilon}{2}$. 
\begin{theorem}
    \label{lowerbound_ducb_bl}
    Under Assumption \ref{assumption0}, the regret lower bound of Double-UCB-BL must satisfy 
    $R_T(\pi) \geq \frac{B\epsilon}{2}.$ 
    Specifically, 
    \begin{enumerate}
        \item If $\mathcal{P}$ is $\mathcal{N}(0,1)$, we have $B=\Omega(\log (T) e^{\frac{3\epsilon}{4}\sqrt{\log T}} )$. 
        \item If $\mathcal{P}$ is the uniform distribution $U(0,1)$ and $0 < \epsilon <1$, we can calculate that $B \geq \frac{(1-\epsilon)\epsilon}{2}(T-1)$.
        \item If $\mathcal{P}$ is the half-triangle distribution with probability density function as 
    $ f(x)=2(1-x) \mathbbm{1}\{0 < x < 1\}$ and $0<\epsilon<1$, 
    we can also calculate that $B \geq \frac{3\epsilon^2(1-\epsilon)^2}{4}(T-1) $.
    \end{enumerate}
\end{theorem}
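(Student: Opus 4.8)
The plan is to establish the two halves of the statement separately: first the deterministic reduction $R_T(\pi)\geq \frac{B\epsilon}{2}$, then, for each of the three laws, a direct lower bound on $B$. For the reduction, note that every instantaneous regret $\mu(i_t^{*})-\mu(i_t)$ is nonnegative (since $i_t^{*}$ is optimal in $K(t)$), so $R_T(\pi)\geq \mathbb{E}\big[\sum_{t=1}^{T}\mathbbm{1}\{a_t\in\mathcal{B}\}(\mu(i_t^{*})-\mu(i_t))\big]$, and it suffices to show that on $\{a_t\in\mathcal{B}\}$ the round-$t$ regret is deterministically $>\epsilon/2$; combined with linearity of expectation and $B=\mathbb{E}[\sum_t\mathbbm{1}\{a_t\in\mathcal{B}\}]$ this gives the claim. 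Fix $t$ with $a_t\in\mathcal{B}$. First $|\mu(\alpha_t^{*})-\mu(a_t)|<\epsilon$ — indeed $\mu(i_t^{*})-\epsilon<\mu(\alpha_t^{*})\leq\mu(i_t^{*})$ (adjacency to $i_t^{*}$ and optimality) while $\mu(i_t^{*})-\epsilon<\mu(a_t)<\mu(i_t^{*})-\epsilon/2$ by the definition of $\mathcal{B}$ — so $a_t\in N_{\alpha_t^{*}}\subseteq N_{\mathcal{I}_t}$ and $a_t$ is not inserted into $\mathcal{I}$ in Lines~6--8. Now split on whether $a_t\in N_{j_t}$. If $a_t\in N_{j_t}$, then $O_t(a_t)=0$ makes the exploration bonus of $a_t$ in Line~10 infinite, so (under the convention that index ties are broken toward the most recently arrived arm) $i_t=a_t$ and the regret equals $\mu(i_t^{*})-\mu(a_t)\in(\epsilon/2,\epsilon)$. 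If $a_t\notin N_{j_t}$, then $j_t\neq\alpha_t^{*}$, hence $j_t$ is not adjacent to $i_t^{*}$ and $\mu(j_t)\leq\mu(i_t^{*})-\epsilon<\mu(a_t)$; together with $a_t\notin N_{j_t}$ this forces $\mu(j_t)\leq\mu(a_t)-\epsilon<\mu(i_t^{*})-\frac32\epsilon$, so every arm in $N_{j_t}$ (in particular $i_t$) has mean below $\mu(j_t)+\epsilon<\mu(i_t^{*})-\epsilon/2$, and again the regret exceeds $\epsilon/2$.

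Next I would reduce $B$ to an order-statistics estimate. Under Assumption~\ref{assumption0}, $\mu(i_t^{*})=\max\{\mu(a_1),\dots,\mu(a_t)\}$; writing $M_{t-1}$ for the maximum of the first $t-1$ draws (independent of $\mu(a_t)\sim\mathcal{P}$, with $M_0=-\infty$) and observing that the event $\{\epsilon/2<\mu(i_t^{*})-\mu(a_t)<\epsilon\}$ already forces $\mu(a_t)<M_{t-1}$, one gets the exact identity $B=\sum_{t=2}^{T}\mathbb{P}\big(\epsilon/2<M_{t-1}-\mu(a_t)<\epsilon\big)$. Conditioning on $M_{t-1}=m$ turns each term into $\mathbb{E}\big[F(M_{t-1}-\epsilon/2)-F(M_{t-1}-\epsilon)\big]$, where $F$ is the c.d.f.\ of $\mathcal{P}$ and $F^{t-1}$ the c.d.f.\ of $M_{t-1}$; the remaining work is a per-distribution lower bound on this quantity followed by a summation over $t$.

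For $U(0,1)$ ($F(x)=x$) and the half-triangle law ($F(x)=1-(1-x)^2$) this is elementary: restrict the integral to $m$ in a short window near the right endpoint of the support, where $F(m-\epsilon/2)-F(m-\epsilon)$ is of order $\epsilon$, respectively $\epsilon^2$ (the extra power of $\epsilon$, and the $(1-\epsilon)$-type factors, coming from the linearly decaying triangle density and from truncation at $0$ when $\epsilon$ is close to $1$), while $\mathbb{P}(M_{t-1}$ lands in that window$)=1-(1-c\,\epsilon^{j})^{t-1}\to1$; summing these per-term bounds over $t=2,\dots,T$, with the usual geometric tail correction, yields the stated $\Omega(\epsilon T)$ and $\Omega(\epsilon^{2}T)$. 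The Gaussian case is the genuine obstacle and where essentially all the effort goes. Here I would calibrate a level $m_t$ by $1-\Phi(m_t)=1/(t-1)$, so that $m_t=\sqrt{2\log t}\,(1+o(1))$ and $\phi(m_t)=\Theta(m_t/t)$; show that $M_{t-1}$ falls in the short window $(m_t,\,m_t+1/m_t)$ with probability bounded below by an absolute constant; and on that window bound $\Phi(m-\epsilon/2)-\Phi(m-\epsilon)$ from below via $\phi(m-s)=\phi(m)\exp(ms-s^{2}/2)$, which produces a factor $\exp(\Theta(\epsilon)m_t)$. This gives $\mathbb{P}(\epsilon/2<M_{t-1}-\mu(a_t)<\epsilon)$ at least of order $\frac{\sqrt{\log t}}{t}\exp(\Theta(\epsilon)\sqrt{\log t})$, and since $\sqrt{\log t}\exp(\Theta(\epsilon)\sqrt{\log t})$ grows more slowly than any power of $t$, the sum over $t\leq T$ is controlled by its upper tail, contributing the harmonic factor $\log T$ and the claimed $\exp(\frac{3\epsilon}{4}\sqrt{\log T})$ once the constants are tracked through the $m_t\approx\sqrt{2\log t}$ substitution. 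The delicate points — and where I expect to spend all the care — are choosing the window width so that the distribution of the maximum and the Gaussian-density estimate are simultaneously under control, and keeping the exponential constant explicit; the reduction and the two bounded-support computations are routine by comparison.
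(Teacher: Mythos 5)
Your proposal is correct, and it follows the same overall architecture as the paper (reduce to $\frac{B\epsilon}{2}$ via the forced-exploration argument, then lower-bound $B$ through the law of the running maximum $M_{t-1}$), but two pieces are genuinely different and worth noting. First, your reduction is more careful than the paper's: the paper simply asserts that when $j_t\neq\alpha_t^{*}$ the regret "is also larger than $\epsilon/2$", which is not immediate (an arm of $N_{j_t}$ can a priori have mean within $\epsilon/2$ of $\mu(i_t^{*})$); your case split on $a_t\in N_{j_t}$ versus $a_t\notin N_{j_t}$ — using in the second case that $\mu(j_t)\leq\mu(a_t)-\epsilon<\mu(i_t^{*})-\tfrac32\epsilon$, so all of $N_{j_t}$ sits below $\mu(i_t^{*})-\epsilon/2$ — is exactly the missing justification (both you and the paper still need the tie-breaking/infinite-bonus convention for unobserved arms, which you at least flag). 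Second, for the Gaussian case the paper works with the exact integral representation $B=\int\sum_{t\le T-1}tF^{t-1}(F(x-\epsilon/2)-F(x-\epsilon))F'\,dx$, lower-bounds the increment by $\tfrac{\epsilon}{2}\phi(x-\tfrac{3\epsilon}{4})$ via convexity of $\phi$, and integrates over $[1+\epsilon/2,\sqrt{\log T}]$ using the Mills-ratio bounds; you instead localize $M_{t-1}$ in a width-$1/m_t$ window at the $1/(t-1)$-quantile $m_t\approx\sqrt{2\log t}$, which costs a constant probability per round but evaluates the Gaussian increment at the \emph{typical} value of the maximum rather than at the paper's truncation point $\sqrt{\log T}$ — so your route would actually yield the slightly stronger exponent $e^{\frac{3\epsilon}{4}\sqrt{2\log T}}$, which of course implies the stated $\Omega(\log(T)e^{\frac{3\epsilon}{4}\sqrt{\log T}})$. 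The uniform and half-triangle cases are only sketched in your writeup, and matching the paper's explicit constants $\frac{(1-\epsilon)\epsilon}{2}$ and $\frac{3\epsilon^2(1-\epsilon)^2}{4}$ requires carrying out the elementary integrals the paper does rather than a window heuristic, but the order $\Omega(T)$ — which is all the theorem is used for — follows from your argument as stated.
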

This lower bound is far from optimal but is sufficient to show if $\mathcal{P}$ is a uniform distribution or half-triangle distribution, the regret must be of linear order.


\subsection{\textcolor{blue}{Conservative-UCB-BL for Ballooning settings}}

\begin{algorithm}[tb]
    \caption{Conservative-UCB-BL for Ballooning Settings}
    \label{alg:CUCB_BL_new}
 \begin{algorithmic}[1]
    \STATE {\bfseries Input:}  Horizon $T, $ $\delta \in (0,1)$
    \STATE Initialize $\mathcal{I}=\emptyset,t=0, O_t(i)=0$ for all $i$
    \FOR{$t=1$ {\bfseries to} $T$}
     \STATE  Steps 4-9 in Double-UCB-BL
    \IF{$N_{j_t}$ is not a complete graph}
    \STATE $\mathcal{I}_{j_t} = \text{ arbitrarily select a set from }  S_{j_t} $
    \STATE $j'_t= \arg\max_{j \in \mathcal{I}_{j_t}} \bar{\mu}_t(j)+\sqrt{\frac{\log(\sqrt{2}/\delta)}{O_t(j)}}$
    \STATE Select arm $i_t= \arg \max_{i \in N_{j'_t}\bigcap N_{j_t}} \bar{\mu}_t(i)-\sqrt{\frac{\log(\sqrt{2}/\delta)}{O_t(i)}}$
    \ELSE 
    \STATE Select arm $i_t= \arg \max_{i \in N_{j_t}} \bar{\mu}_t(i)-\sqrt{\frac{\log(\sqrt{2}/\delta)}{O_t(i)}}$
    \ENDIF
    \STATE $\forall i \in N_{i_t},$ update $O_t(i),\bar{\mu}_t(i)$
    \ENDFOR
 \end{algorithmic}
 \end{algorithm}
The poor performance under the uniform distribution limits the applicability of Double-UCB-BL. The fundamental cause lies in the aggressive exploration strategy of the UCB algorithm, which attempts to pull every arm that enters the set $\mathcal{A}$.
In this section, we adapt the Conservative-UCB approach to the ballooning setting.
\cref{alg:CUCB_BL_new} presents the pseudocode for our Conservative-UCB-BL algorithm. The independent-set search component is identical to that of Double-UCB-BL, whereas the arm-selection component matches that in  Algorithm~\ref{alg:C_UCB_Graph}. This modification avoids exploring every arm that enters $N_{j_t}$; instead, it selects only the arm that has been observed a sufficient number of times and has the highest lower confidence bound.


Based on the proofs of Theorem 1 and Theorem 2, we can obtain the regret upper bound for Algorithm~\ref{alg:CUCB_BL_new}. The detailed proof is deferred to Appendix B.
\begin{theorem}
    \label{bound_cucb_bl}
    Under the same conditions as \cref{bound_ducb} along with Assumption \ref{assumption0}, by setting $\delta = \frac{1}{T},$ the regret of Conservative-UCB-BL is  bounded by
    \begin{equation}
        \begin{aligned}
       R_T(\pi_{\text{Cons-BL}}) &\leq \sqrt{\mathbb{E}\big[(\alpha(G_T^{\mathcal{P}}))^2\big] \mathbb{E}\big[ (\Delta_{\max}^T)^2\big]} ( \frac{4\log(\sqrt{2}T)}{\epsilon^2} + 1 ) + 8 \sqrt{2T\log T}\log(\sqrt{2}T) \\
       &\quad + \frac{16\log^2(\sqrt{2}T)}{\epsilon} + 6\epsilon \log T .
        \end{aligned}
    \end{equation}
\end{theorem}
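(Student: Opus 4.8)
The plan is to decompose the regret exactly as in the analysis of Conservative-UCB (Theorem~\ref{bound_cucb}) and Algorithm~\ref{alg:C_UCB_Graph} (Theorem~\ref{bound_cucb_new}), but carry the bookkeeping through the ballooning-specific quantities introduced before Theorem~\ref{bound_ducb_bl}. Write the per-round regret as
\[
\sum_{t=1}^{T}\mu(i_t^{*})-\mu(i_t) = \sum_{t=1}^{T}\mathbbm{1}\{j_t\neq\alpha_t^{*}\}\big(\mu(i_t^{*})-\mu(i_t)\big) + \sum_{t=1}^{T}\mathbbm{1}\{j_t=\alpha_t^{*}\}\big(\mu(i_t^{*})-\mu(i_t)\big).
\]
The first sum is the ``wrong neighborhood'' term: it occurs only when the UCB-maximizing element $j_t$ of the online independent set $\mathcal{I}_t$ is not the one whose neighborhood contains $i_t^{*}$. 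As in Double-UCB-BL, each such event forces a confidence-interval failure for one of the at most $\alpha(G_T^{\mathcal{P}})$ arms of $\mathcal{I}_t$, and because distinct independent-set arms differ in mean by more than $\epsilon$, each contributes $O(\log(\sqrt{2}T)/\epsilon^2)$ pulls at cost at most $\Delta_{\max}^T$ each. Taking expectation over $v\sim\mathcal{P}$ and applying Cauchy--Schwarz (exactly the step used in Theorem~\ref{bound_ducb_bl}) gives the term $\sqrt{\mathbb{E}[(\alpha(G_T^{\mathcal{P}}))^2]\,\mathbb{E}[(\Delta_{\max}^T)^2]}\,(4\log(\sqrt{2}T)/\epsilon^2+1)$.

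For the second sum I would split on whether $N_{\alpha_t^{*}}$ is a complete graph, mirroring the regret analysis of Algorithm~\ref{alg:C_UCB_Graph}. If it is complete, Lemma~\ref{lemma_layer} applied with $V^{*}=N_{\alpha_t^{*}}$ bounds the cumulative cost of the clique containing $i_t^{*}$ by $8\sqrt{T}\log(\sqrt{2}T)+\epsilon$; if not, the algorithm runs Steps 6--8, and by the decomposition in \eqref{three_part} the regret splits into (i) a $2\epsilon\sum_t\mathbbm{1}\{j_t'=\alpha_1^{*}\}$ term, which is the ``pick the wrong one of the two elements of $\mathcal{I}_{j_t}$'' event and is again $O(\log(\sqrt{2}T)/\epsilon)$ since the two means differ by more than $\epsilon$, plus (ii) a clique term bounded by Lemma~\ref{lemma_layer} with $V^{*}=N_{\alpha_t^{*}}\cap N_{\alpha_2^{*}}$. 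The subtlety relative to the stationary case is that $\alpha_t^{*}$ and the relevant clique change over time as arms arrive; I would handle this by noting that the conservative (lower-confidence-bound) selection rule only ever pulls an arm observed $\Omega(\log T)$ times, and — exactly as argued in the Conservative-UCB section via the $E_1,E_2$ partition — the bridging arm $j_t$ (or $j_t'$) guarantees the optimal arm is observed enough times, so the layer argument of Lemma~\ref{lemma_layer} still applies round by round. Summing the clique contributions over the (at most $T$) rounds and using that the total number of distinct ``effective'' arms in these cliques is controlled gives the $8\sqrt{2T\log T}\log(\sqrt{2}T)$ term, with the $\log T$ factor inside the root coming from the high-probability bound on the number of independent-set/clique layers and the residual $16\log^2(\sqrt{2}T)/\epsilon + 6\epsilon\log T$ collecting the lower-order ``wrong element'' and initialization costs.

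The main obstacle I anticipate is the ballooning analogue of Lemma~\ref{lemma_layer}: in the stationary proof $V^{*}$ is a fixed clique and the layer-peeling argument is applied once, but here the clique containing the current optimal arm grows and shifts as new arms arrive, so I must argue that the cumulative cost over all rounds is still governed by a single ``$\sqrt{T}\log T$''-type bound rather than paying per layer per round. The resolution is that Lemma~\ref{lemma_layer} as stated already bounds $\sum_{t=1}^{T}\sum_{i\in V^{*}}\mathbb{P}(i_t=i)\Delta_i$ for any fixed complete subgraph containing the optimal arm; applied to the (time-varying) union of all such cliques, together with the observation that within any clique the conservative rule behaves like UCB on a graph of independence number $1$, this yields the stated rate. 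I would also need the high-probability independence-number bound Lemma~\ref{ind_number_new} and the i.i.d.\ structure of Assumption~\ref{assumption0} precisely to make the Cauchy--Schwarz step and the counting of effective arms rigorous; these are already available in the excerpt, so the remaining work is careful accounting rather than a new idea.
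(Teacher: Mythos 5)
Your high-level decomposition is the paper's: split the regret into the ``wrong neighborhood'' part ($j_t\neq\alpha_t^{*}$, bounded exactly as in Theorem~\ref{bound_ducb_bl} via Cauchy--Schwarz) and the part inside $N_{\alpha_t^{*}}$, which is further split via \cref{three_part} into the $2\epsilon\sum_t\mathbbm{1}\{j_t'=\alpha_1^{*}\}$ cost and a clique term handled by Lemma~\ref{lemma_layer}. Part (A) of your argument is fine. The gap is in the step you yourself flag as the main obstacle: the time-varying clique. Your proposed resolution --- apply Lemma~\ref{lemma_layer} to ``the time-varying union of all such cliques'' --- does not work. That union is generally not a complete subgraph, and, more fundamentally, the optimal arm $i_t^{*}$ changes over time, so the gaps $\Delta_i$ that define the layer sets $V_\phi$ in the lemma's proof are not fixed; the layer-peeling argument is only valid while $i_t^{*}$ (hence $\alpha_t^{*}$ and the clique) is held fixed.

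The paper's actual resolution is to partition $[T]$ into $H+1$ epochs at the record times of the arrival sequence, where $H=\sum_{t=1}^{T}\mathbbm{1}\{\mu(a_t)=\mu(i_t^{*})\}$ is the number of changes of the optimal arm. Within each epoch $(t_j,t_{j+1}]$ both $i_t^{*}$ and $\alpha_t^{*}$ are stationary, so a restated Lemma~\ref{lemma_layer} gives $8\sqrt{t_{j+1}-t_j}\log(\sqrt{2}T)+\epsilon$ for the clique term plus $8\log(\sqrt{2}T)/\epsilon+O(\epsilon)$ for the wrong-element term, per epoch. Cauchy--Schwarz over epochs yields $8\sqrt{TH}\log(\sqrt{2}T)$, and Assumption~\ref{assumption0} gives $\mathbb{E}[H]=\sum_{t=1}^{T}\mathbb{P}(\mu(a_t)=\mu(i_t^{*}))=\sum_t 1/t\le 2\log T$ by the record-value argument. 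This is where every piece of the stated bound comes from: the $\log T$ inside the square root is $\mathbb{E}[H]$, \emph{not} ``the number of independent-set/clique layers'' as you assert, and the terms $16\log^2(\sqrt{2}T)/\epsilon$ and $6\epsilon\log T$ are exactly the per-epoch constants multiplied by $\mathbb{E}[H]$. Since your proposal never introduces $H$, the epoch decomposition, or the $\sum_t 1/t$ bound, its accounting cannot produce the claimed terms, and the proof does not close as written.
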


Note that the regret upper bound of Conservative-UCB-BL   does not involve $M$, i.e., the potential many arms with means very close to the optimal arm. The distribution $\mathcal{P}$ only affects the independence number $\alpha(G_T^{\mathcal{P}})$ and the maximum mean gap $\Delta^T_{\mathrm{max}}$ in the upper bound. 
\begin{enumerate}
    \item If $\mathcal{P}$ is $\mathcal{N}(0,1)$, from Corollary~\ref{gapfree_ducb_bl}, we have $\mathbb{E}\big[(\alpha(G_T^{\mathcal{P}}))^2\big] = O(\frac{\log T}{\epsilon^2})$ and $\mathbb{E}\big[ (\Delta_{\max}^T)^2\big] =O(\log T)$. Hence, 
    \[ R_T(\pi_{\text{Cons-BL}}) =O( \sqrt{T\log T}\log T).\]
    \item If  $\mathcal{P}$ is uniform distribution $U(0,1)$ or half-triangle distribution the same as Theorem~\ref{bound_cucb_bl}, we have $\mathbb{E}\big[(\alpha(G_T^{\mathcal{P}}))^2\big] \leq \frac{1}{\epsilon^2}$ and $\mathbb{E}\big[ (\Delta_{\max}^T)^2\big] = 1$. The regret upper bound is also $ R_T(\pi_{\text{Cons-BL}}) =O( \sqrt{T\log T}\log T).$
\end{enumerate}

\begin{remark}
\label{remark_without}
    Recall that $M= \sum_{t=1}^{T} \mathbbm{1}\{|\mu(a_t)-\mu(i_t^{*})| < 2\epsilon\}$ denotes the upper bound of $|\mathcal{A}|$.
    We impose Assumption~\ref{assumption0} in order to derive a more explicit upper bound.
    Since the magnitude of $M$ is not intuitive and $M$ plays a dominant role in the regret bound of Doubel-UCB-BL. Under Assumption~\ref{assumption0}, the regret bound of Double-UCB-BL can be stated in a more transparent and interpretable manner.
    Without Assumption 1, let $H := \sum_{t=1}^T  \mathbbm{1}\{\mu(a_t)=\mu(i_t^{*})\} $ denote the number of changes in the optimal arm and $\alpha(G_T)$ denote the independence number of the graph at time step $T$, the regret upper bounds of Theorem~\ref{bound_ducb_bl} becomes 
    \[ \alpha(G_T)\Delta^T_{\mathrm{max}}( \frac{4\log(\sqrt{2}T)}{\epsilon^2} + 1 ) + 4\sqrt{2TM\log(\sqrt{2}T)} +2\epsilon,\]
    and  the regret upper bound of Theorem~\ref{bound_cucb_bl} is
    \[  \alpha(G_T)\Delta^T_{\mathrm{max}}( \frac{4\log(\sqrt{2}T)}{\epsilon^2} + 1 ) + 8\sqrt{2TH}\log(\sqrt{2}T) + \frac{8H\log(\sqrt{2}T)}{\epsilon} + H\epsilon + 3\epsilon. \]

    If $H=o(\sqrt{T})$, the regret bound of  Conservative-UCB-BL is sublinear.

\end{remark}

\subsection{\textcolor{blue}{Gap-dependent regret bound }}
Algorithm~\ref{alg:CUCB_BL_new} selects arms in $N_{j_t}$ or  $N_{j_t'} \bigcap N_{j_t}$ based on whether $N_{j_t}$ is a complete graph. Since $N_{j_t'} \bigcap N_{j_t} \subset N_{j_t}$, our arm selection can be regarded as being performed within $ N_{j_t}$. We can replace steps 5-11 with 
\[ i_t= \arg \max_{i \in N_{j_t}} \bar{\mu}_t(i)-\sqrt{\frac{\log(\sqrt{2}/\delta)}{O_t(i)}}.  \]
Then the regret upper bound of this new algorithm can also serve as the regret upper bound for Algorithm~\ref{alg:CUCB_BL_new}. Using an analysis method similar to Theorem~\ref{bound_cucb}, we can obtain the following gap-dependent regret upper bound without Assumption~\ref{assumption0}. The detailed proof is deferred to Appendix B.

\begin{theorem}
\label{gap_cucb_bl}
Let $\Delta_{\min}^T := \min_{i \neq j}  | \mu(i)-\mu(j) |>0 $. 
    Under the same conditions as \cref{bound_ducb}, by setting $\delta = \frac{1}{T},$ the regret of Conservative-UCB-BL is  bounded by
    \begin{equation}
        \begin{aligned}
        R_T({\pi_{\text{Cons-BL}}})  \leq \alpha(G_T)\Delta_{\max}^T \left( \frac{4\log(\sqrt{2}T)}{\epsilon^2} + 1 \right)+  \frac{16H\epsilon\log(\sqrt{2}T)}{(\Delta_{\min}^T)^2} + 4\epsilon,
        \end{aligned}
    \end{equation}
    where $H= \sum_{t=1}^{T} \mathbbm{1}\{\mu(a_t)=\mu(i_t^{*})\}$.
\end{theorem}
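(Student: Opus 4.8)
The plan is to mimic the two-part regret decomposition used for Conservative-UCB (Theorem~\ref{bound_cucb}), carefully tracking how each term behaves in the ballooning setting. As argued in the paragraph preceding the statement, it suffices to analyze the simplified algorithm whose arm selection is $i_t = \arg\max_{i \in N_{j_t}} \bar\mu_t(i) - \sqrt{\log(\sqrt2/\delta)/O_t(i)}$, since the true algorithm selects within $N_{j_t'} \cap N_{j_t} \subseteq N_{j_t}$ and hence its regret is dominated by that of the simplified version. Write the regret as
\begin{equation}
\sum_{t=1}^T \Delta_{i_t} = \sum_{t=1}^T \Delta_{i_t}\mathbbm{1}\{i^*_t \notin N_{j_t}\} + \sum_{t=1}^T \Delta_{i_t}\mathbbm{1}\{i^*_t \in N_{j_t}\},
\end{equation}
where $\Delta_{i_t} = \mu(i^*_t) - \mu(i_t)$. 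First I would handle the first sum: when the optimal arm is \emph{not} in $N_{j_t}$, the arm $j_t$ chosen in Step~9 (max UCB over the independent set $\mathcal{I}_t$) must be separated from $i^*_t$ by more than $\epsilon$ in mean, so a standard UCB concentration argument — now applied to the online-constructed independent set, whose size is at most $\alpha(G_T)$ — shows that each such arm in $\mathcal{I}_t$ is pulled only $O(\log T / \epsilon^2)$ times before its upper confidence bound drops below that of $\alpha^*_t$. Summing over the at most $\alpha(G_T)$ members of the independent set and weighting by the worst-case per-step loss $\Delta^T_{\max}$ gives the first term $\alpha(G_T)\Delta^T_{\max}(4\log(\sqrt2 T)/\epsilon^2 + 1)$; the additive ``$+1$'' absorbs the single pull made when an arm is first inserted into $\mathcal{I}$.

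For the second sum, I would reproduce the Conservative-UCB argument localized to each ``epoch'' during which the optimal arm is fixed. The key structural fact (the same one used in the proof of Theorem~\ref{bound_cucb}) is that, conditioned on $i^*_t \in N_{j_t}$, the conservative lower-confidence-bound rule discards every arm whose mean is below $\mu(j_t)$, and the surviving arms — those with mean between $\mu(j_t)$ and $\mu(i^*_t)$ — together with $j_t$ form a clique containing $i^*_t$; since $j_t$ is observed on every pull in $N_{j_t}$, after enough rounds $i^*_t$ itself is observed more than $4\log(\sqrt2/\delta)/(\Delta^T_{2\epsilon})^2$ times, and from then on only the optimal arm is pulled. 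Bounding $\Delta^T_{2\epsilon}$ below by $\Delta^T_{\min}$, each epoch thus contributes at most $O(\epsilon\log T/(\Delta^T_{\min})^2)$ regret (the per-step loss inside such an epoch is at most $2\epsilon$, since all arms in play lie within $2\epsilon$ of the optimal mean). The number of epochs is exactly $H = \sum_{t=1}^T \mathbbm{1}\{\mu(a_t) = \mu(i^*_t)\}$, the number of times the incoming arm becomes the new optimum, which bounds the number of distinct optimal arms; multiplying gives the $16H\epsilon\log(\sqrt2 T)/(\Delta^T_{\min})^2$ term. Collecting the two parts, plus a residual $O(\epsilon)$ from boundary rounds and the probability-$\delta$ failure events (with $\delta = 1/T$ making $\delta T = 1$), yields the claimed bound.

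The main obstacle I anticipate is the bookkeeping across epochs in the second sum: when the optimal arm changes, the identity of $\alpha^*_t$, the relevant clique, and the observation counts all shift, and one must argue that the concentration events remain valid uniformly (a union bound over all $T$ rounds and all arms, which is why $\log(\sqrt2/\delta) = \log(\sqrt2 T)$ appears rather than $\log(\sqrt2)$), and that the ``warm-up'' cost of $O(\log T/(\Delta^T_{2\epsilon})^2)$ observations is paid at most once per epoch rather than once per round. A secondary subtlety is verifying that the set $S_{j_t}$ in Eq.~\eqref{sjt} is nonempty in the ballooning graph — but this follows verbatim from the construction given after Eq.~\eqref{sjt} (take the pair realizing the maximum mean-gap within $N_{j_t}$), since that argument used nothing about the graph being static.
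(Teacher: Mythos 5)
Your proposal is correct and follows essentially the same route as the paper: the same two-part decomposition (rounds where the selected neighborhood misses the optimal arm versus rounds where it contains it), the same $O(\alpha(G_T)\Delta_{\max}^T\log T/\epsilon^2)$ bound for the first part via standard UCB concentration on the online independent set, and the same epoch-wise Conservative-UCB argument for the second part — splitting $N_{\alpha_t^*}$ into the clique of arms below $\alpha_t^*$ (each pull of which reveals $\alpha_t^*$) and the clique containing $i_t^*$ (each pull of which reveals $i_t^*$), paying $O(\epsilon\log T/(\Delta_{\min}^T)^2)$ once per each of the $H$ epochs. The reduction to the simplified selection rule over $N_{j_t}$ is exactly the paper's own justification stated before the theorem, so no new gap is introduced.
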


\section{Unknown Graph Structure in Ballooning Environments}
Algorithms \ref{alg:D_UCB_BL} and \ref{alg:CUCB_BL_new} both require knowledge of the graph structure, particularly in the process of updating the independent set in Algorithm \ref{alg:D_UCB_BL} (Steps 4-8). However, in practical applications, it is difficult to know the connection information of each arm.  In this section, we  design an algorithm built on Double-UCB-BL (or Conservative-UCB-BL) that does not rely on graph information. 

We first introduce the idea of algorithm design.
Without loss of generality, we assume that $T$ is an integer multiple of $\tau$. 
The time horizon can be divided into uniform intervals: 
\[ [1,\tau],[\tau+1,2\tau],...,[T-\tau+1,T], \]
where we use the interval to denote finite discrete time steps. In each interval, we 
find an independent set in the current feedback graph. As long as the order of $\tau$ is  larger than $\alpha(G_T)$, the time steps spent for finding an independent set can be ignored. We can identify the independent set by checking if all arms have been observed at least once (Steps 5-8 in Algorithm \ref{alg:D_UCB}), and this process does not require knowledge of the graph structure. Once we obtain the independent set, we can use Double-UCB-BL (Steps 9-10 in \cref{alg:D_UCB_BL}) or Conservative-UCB-BL to select arms in the current interval.

\cref{alg:U_UCB_BL} shows the pseudocode of our method. This algorithm has a tuning parameter $\tau$. If $t \leq \tau$, the algorithm pulls arm $a_t$ directly. 
At every $\tau$ steps, an independent set is selected from the first $t-1$ arms (Steps 8-14). This process does not require knowledge of the graph structure and spends  at most $ \alpha(G_T) $ time steps. 
Based on the execution method of Step 17, we obtain the U-Double-UCB and U-Conservative-UCB algorithms respectively.

\begin{algorithm}[tb]
    \caption{U-Double-UCB (or U-Conservative-UCB) for Ballooning Setting}
    \label{alg:U_UCB_BL}
 \begin{algorithmic}[1]
    \STATE {\bfseries Input:}  Horizon $T,$ $\delta \in(0,1)$, positive integer $\tau$. 
    \STATE Initialize $\mathcal{I}=\emptyset, t=0, \forall i, O_t(i)=0$
    \FOR{$t=1$ {\bfseries to} $T$}
    \STATE Arm $a_t$ arrives
    \IF {$t \leq \tau$ }
    \STATE Pulls arm $i_t=a_t$
    \ELSIF {$t-1 \% \tau ==0$}
    \STATE Pulls each arm $i_t$ in $\mathcal{I}$, update $t$ and  $O_t(i),\bar{\mu}_t(i), \forall i \in N_{i_t}$
    \REPEAT
    \STATE Select an arm $i_t < t$ that has not been observed.
    \STATE $\mathcal{I}=\mathcal{I} \bigcup \{i_t\}$
    \STATE $\forall i \in N_{i_t},$ update $O_t(i),\bar{\mu}_t(i)$
    \STATE $t=t+1$
    \UNTIL{the previous $t-1$ arms  have been observed at least once}
    \ELSE 
    \STATE 
    $j_t= \arg\max_{j \in \mathcal{I}} \bar{\mu}_t(j)+\sqrt{\frac{\log(\sqrt{2}/\delta)}{O_t(j)}}$
    \STATE Pulls arm $i_t= \arg \max_{i \in N_{j_t}} \bar{\mu}_t(i)+\sqrt{\frac{\log(\sqrt{2}/\delta)}{O_t(i)}}$ \bigg(or $i_t= \arg \max_{i \in N_{j_t}} \bar{\mu}_t(i)-\sqrt{\frac{\log(\sqrt{2}/\delta)}{O_t(i)}}\bigg)$
    
    \STATE $\forall i \in N_{i_t},$ update $O_t(i),\bar{\mu}_t(i)$
    \ENDIF

    \ENDFOR
    
 \end{algorithmic}
 \end{algorithm}

 \begin{figure}[!htbp] 
    \centering 
    \centerline{	\includegraphics[width=12cm]{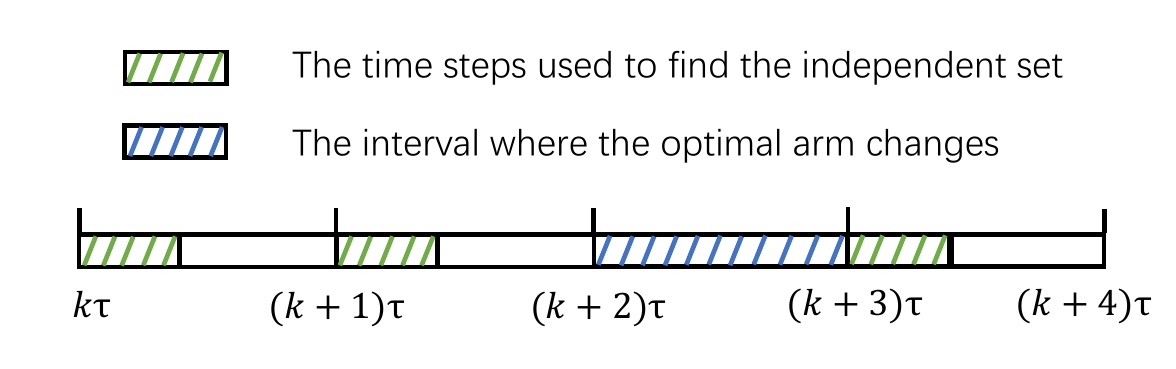}}
    \caption{Illustration of regret analysis. Green color denotes the time steps used to find independent sets, and the blue interval indicates where the optimal arm changed within that interval.  } 
    \label{fig:0}
\end{figure}

\subsection{Regret Analysis}

We use U-Double-UCB as an example to explain the  idea of  regret analysis. The analysis for U-Conservative-UCB is similar.
The regret analysis of Algorithm \ref{alg:U_UCB_BL} differs from the previous algorithms in two aspects. First, selecting the independent set causes additional regrets.
Since selecting an independent set within each interval takes at most $O(\frac{\sqrt{\log T}}{\epsilon})$ time steps with high probability, the regret for this part can be bounded by $O(\frac{T}{\tau}\sqrt{\log T})$.

Second, the independent set may not be connected to the optimal arms. 
Consider the interval $[k\tau+1,(k+1)\tau]$ (where $k$ is some positive integer). Our independent  set (denoted as $\mathcal{I}$) only considers the first $k\tau$ arms. Thus $\mathcal{I}$ may not be connected to the new optimal arm, i.e., $\exists t \in [k\tau+1,(k+1)\tau],  i_t^{*} \notin N_{\mathcal{I}} $.  If we select an arm from $N_{\mathcal{I}}$, we may miss the optimal arm.   Note that the event $\{ \exists t \in [k\tau,(k+1)\tau],  i_t^{*} \notin N_{\mathcal{I}}\} $ implies that the optimal arm has changed in $ [k\tau+1,(k+1)\tau]$.
Since the mean of each arm is randomly and independently sampled from  distribution $\mathcal{P}$, we have 
$ P(\mu(a_t)=\mu(i_t^{*}))=\frac{1}{t} $. Then the expectation of  times the optimal arm changes can be bounded by $\log T+1$. Therefore, the regret for the second part can be bounded by $O(\tau \log T)$. Figure \ref{fig:0}  illustrates the above analysis.

The other parts of the algorithm can be viewed as using Double-UCB-BL or Conservative-UCB-BL, thus the analysis is  similar to the previous algorithms. 

\begin{theorem}
    \label{u_ucb}
    Assume that the reward distribution is $\frac{1}{2}$-subGaussian or bounded in $[0,1]$. Let $\delta = \frac{1}{T}$. Under Assumption \ref{assumption0}, the regret of U-Double-UCB  can be bounded by
    \begin{equation}
        \begin{aligned} 
        R_T(\pi_{\text{U-Double-BL}})  \leq \frac{T}{\tau}\sqrt{\mathbb{E}\big[(\alpha(G_T^{\mathcal{P}}))^2\big] \mathbb{E}\big[ (\Delta_{\max}^T)^2\big]}+ 11\tau \log T \sqrt{ \mathbb{E}\big[ (\Delta_{\max}^T)^2\big]} + R_T(\pi_{\text{Double-BL}}).
        \end{aligned}
    \end{equation}
    Without Assumption \ref{assumption0}, the regret of  U-Conservative-UCB can be bounded by
    \begin{equation}
        \begin{aligned} 
        R_T(\pi_\text{U-Cons-BL})  \leq \frac{T}{\tau} \alpha(G_T) \Delta_{\max}^T + \tau H \Delta_{\max}^T+ R_T(\pi_\text{Cons-BL}).
        \end{aligned}
    \end{equation}
  
\end{theorem}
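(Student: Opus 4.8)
\textbf{Proof proposal for Theorem~\ref{u_ucb}.}

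The plan is to decompose the total regret of U-Double-UCB (resp. U-Conservative-UCB) into three additive sources and bound each separately. First, I would partition the time horizon into the $T/\tau$ intervals $[k\tau+1,(k+1)\tau]$. Within each interval, the algorithm spends an initial block of rounds re-discovering an independent set of the feedback graph restricted to the arms that have arrived so far; call these the ``exploration rounds.'' The remaining rounds of the interval run exactly Double-UCB-BL (resp.\ Conservative-UCB-BL) on a fixed independent set. So the regret splits as (i) regret incurred during exploration rounds, (ii) regret incurred in intervals where the optimal arm changed and the frozen independent set $\mathcal{I}$ fails to dominate the new optimal arm, and (iii) the regret of running the base algorithm with a good independent set, which is at most $R_T(\pi_{\text{Double-BL}})$ (resp.\ $R_T(\pi_{\text{Cons-BL}})$) by Theorems~\ref{bound_ducb_bl} and \ref{bound_cucb_bl}.

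For source (i): in each interval, finding the independent set requires at most $\alpha(G_t)$ rounds (one pull per node of the eventual independent set, since once every arm has been observed we stop). Under Assumption~\ref{assumption0}, $G_t$ is the random geometric graph $G_t^{\mathcal{P}}$, and Lemma~\ref{ind_number_new} gives a high-probability bound on $\alpha(G_t^{\mathcal{P}})$; more crudely, $\mathbb{E}[(\alpha(G_T^{\mathcal{P}}))^2]$ controls the (squared) cost. Each such round costs at most $\Delta_{\max}^T$ in regret. Since there are $T/\tau$ intervals, the total exploration regret is at most $(T/\tau)\,\mathbb{E}[\alpha(G_T^{\mathcal{P}})\,\Delta_{\max}^T]$, which by Cauchy--Schwarz is at most $(T/\tau)\sqrt{\mathbb{E}[(\alpha(G_T^{\mathcal{P}}))^2]\,\mathbb{E}[(\Delta_{\max}^T)^2]}$; without Assumption~\ref{assumption0} this is simply $(T/\tau)\,\alpha(G_T)\Delta_{\max}^T$. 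For source (ii): the key observation, already stated in the text, is that if the frozen $\mathcal{I}$ (built from arms arriving before the current interval) fails to dominate $i_t^{*}$, then $i_t^{*}$ must be a newly arrived arm, i.e.\ the optimal arm changed inside this interval. The number of such ``bad'' intervals is at most the number $H$ of optimal-arm changes (or, under Assumption~\ref{assumption0}, $\mathbb{E}[H]\le \log T + 1$ since $\mathbb{P}(\mu(a_t)=\mu(i_t^{*}))=1/t$). Each bad interval contributes at most $\tau$ rounds of regret, each of size $\le \Delta_{\max}^T$, giving $\tau H\,\Delta_{\max}^T$ (resp.\ $\mathbb{E}[\cdot]$-version bounded via Cauchy--Schwarz by $\tau(\log T+1)\sqrt{\mathbb{E}[(\Delta_{\max}^T)^2]}$, with the constant $11$ absorbing the explicit high-probability bound on the per-interval exploration length $O(\sqrt{\log T}/\epsilon)$ plus the $\log T + 1$ expected change count).

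Summing (i)+(ii)+(iii) yields the claimed bounds, taking care that the constant in front of $\tau\log T\sqrt{\mathbb{E}[(\Delta_{\max}^T)^2]}$ is chosen large enough (the paper uses $11$) to cover both the optimal-arm-change intervals and any slack from the high-probability event on $\alpha(G_t^{\mathcal{P}})$. The main obstacle I expect is source (ii): one must argue carefully that, conditioned on the optimal arm \emph{not} changing within an interval, the frozen independent set $\mathcal{I}$ \emph{does} dominate $i_t^{*}$ throughout the interval, so that the base-algorithm analysis of Theorems~\ref{bound_ducb_bl}/\ref{bound_cucb_bl} applies verbatim on the remaining rounds; and then that the ``bad'' intervals, though their arm statistics are entangled across intervals (the empirical means $\bar\mu_t$ carry over), can still be charged at worst $\tau\Delta_{\max}^T$ each without double-counting the regret already attributed to (iii). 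A secondary technical point is handling the randomness of $\alpha(G_t^{\mathcal{P}})$ uniformly over all $T/\tau$ intervals (a union bound over the high-probability events of Lemma~\ref{ind_number_new}), which is why the final bound is phrased via $\mathbb{E}[(\alpha(G_T^{\mathcal{P}}))^2]$ rather than a pathwise quantity.
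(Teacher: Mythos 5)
Your proposal matches the paper's proof essentially step for step: the same three-way split into independent-set-discovery rounds ($\mathcal{T}_1$, charged $\frac{T}{\tau}\alpha\cdot\Delta_{\max}^T$ via Cauchy--Schwarz), intervals containing an optimal-arm change ($\mathcal{T}_2$, charged $\tau\Delta_{\max}^T$ each with $\mathbb{E}[H]\le\log T+1$), and the remaining rounds analyzed exactly as in Theorems~\ref{bound_ducb_bl}/\ref{bound_cucb_bl}. The only detail you gloss over is that after Cauchy--Schwarz the $\mathcal{T}_2$ term involves $\sqrt{\mathbb{E}[H^2]}$ rather than $\mathbb{E}[H]$, which the paper handles with a Chernoff bound giving $\mathbb{E}[H^2]\le 25(\mathbb{E}[H])^2+1$ (this is where the constant $11$ actually comes from), but your appeal to absorbing slack into that constant covers the same ground.
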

If $\mathcal{P}$ is $U(0,1)$ or half-triangle distribution, the regret of U-Double-UCB is also linear with $T$. 
If $\mathcal{P}$ is $\mathcal{N}(0,1)$, we have $\mathbb{E}\big[(\alpha(G_T^{\mathcal{P}}))^2\big] = O(\frac{\log T}{\epsilon^2})$ and  $ \mathbb{E}\big[ (\Delta_{\max}^T)^2\big] =O(\log T)$. Seting $\tau= O(\sqrt{T})$, the regret is dominanted by $R_T(\pi_\text{Double-BL})$. Thus,  
\[ R_T(\pi_\text{U-Double-BL})= O\Big(\log(T)\sqrt{Te^{2\epsilon\sqrt{2\log(T)}}}\Big).\] For U-Conservative-UCB, setting $\tau=\sqrt{\frac{\alpha(G_T)T}{H}}$, the regret is of order 
\[ R_T(\pi_\text{U-Cons-BL}) = O\left (\sqrt{\alpha(G_T)TH} + \frac{H\log(T)}{(\Delta_{\min}^T)^2}\right) .\]

\begin{remark}
   When the graph structure is unknown, we do not rely on Assumption~\ref{assumption0} in analyzing the regret of U-CUCB. This is because, without the graph structure, the arm-selection rule of U-CUCB is
   $i_t= \arg \max_{i \in N_{j_t}} \bar{\mu}_t(i)-\sqrt{\frac{\log(\sqrt{2}T/\delta)}{O_t(i)}}. $
   The regret bound involves  $ \frac{1}{(\Delta^T_{\mathrm{min}})^2}$, which is a problem-dependent bound, and Assumption~\ref{assumption0} is no longer applicable.

\end{remark}

\begin{figure*}[!htbp] 
    \centering 
       \subfloat[Bernoulli rewards]{
        \includegraphics[width=8cm]{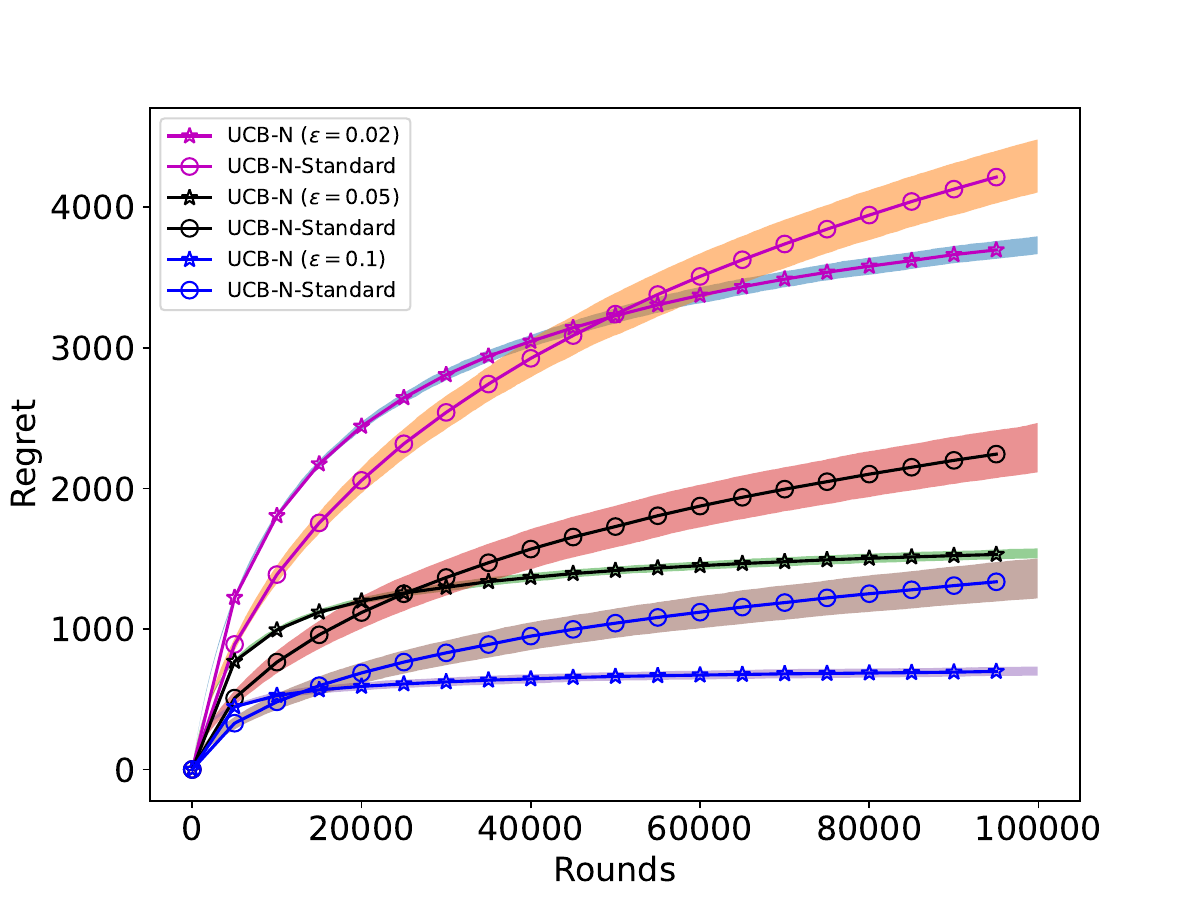}
           }
        \subfloat[Gaussian rewards]{
           \includegraphics[width=8cm]{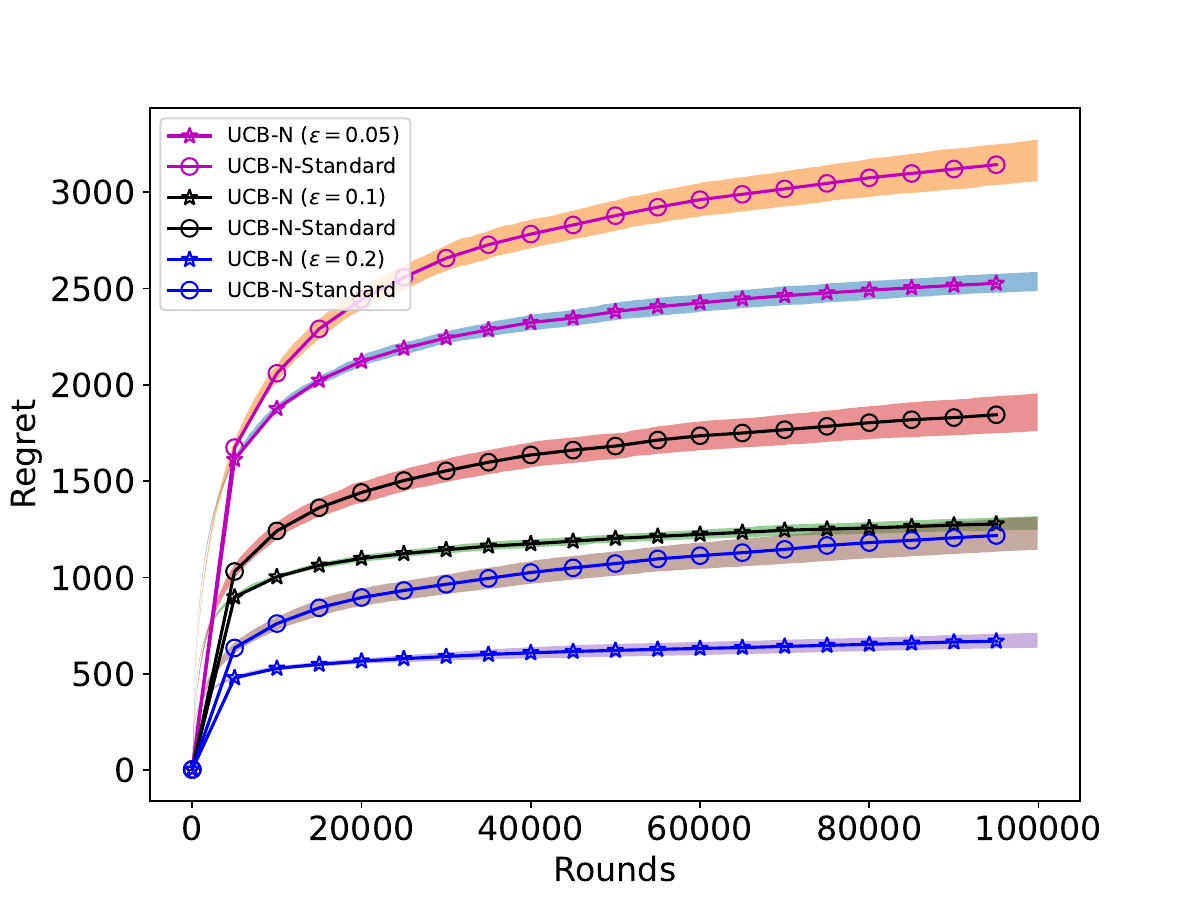} 
            }

    \caption{We run UCB-N on two different settings with either Bernoulli rewards or Gaussian rewards. ``UCB-N'' indicates that we run UCB-N on the graph with similarity structure, while ``UCB-N-Standard" means the graph feedback does not have a similarity structure. To ensure fairness in comparison,  the graphs used in both settings have  the same structure. The same color indicates that the same graph structure is used, while different colors correspond to different values of $\epsilon$. We set $T=10^5$ and $K=10^4$ in the experiment.} 
    \label{fig:1}
\end{figure*}
\section{Experiments}
\label{experiment}
\subsection{Stationary Settings}

We first compare the performance of UCB-N under standard graph feedback and graph feedback with similar arms.\footnote{Our code is available at  \url{https://github.com/qh1874/GraphBandits_SimilarArms}} The purpose of the first experiment is to show that the similarity structure improves the performance of the  UCB-N algorithm. 
To ensure fairness, we keep the graph structure of the problem instances the same in both cases. In the first instance, the arms satisfy the similarity assumption, whereas in the second instance, this assumption does not hold. We run UCB-N on both instances and refer to them as ``UCB-N" and ``UCB-N-Standard", respectively.
For each value of $\epsilon$, we generate $50$ different problem instances. The  regret is averaged on the $50$ instances. The 95\% confidence interval  is shown as a semi-transparent region in the figure.
\cref{fig:1} shows the performance of UCB-N under Gaussian and Bernoulli rewards. 
It can be observed that the regret of UCB-N in our settings is smaller than that in the standard graph feedback setting, thanks to the similarity structure. Additionally, the regret decreases as  $\epsilon$ increases, which is consistent  with theoretical results. 

\begin{figure}[!htbp] 
    \centering 
    
        \subfloat[Bernoulli rewards]{
         \includegraphics[width=8cm]{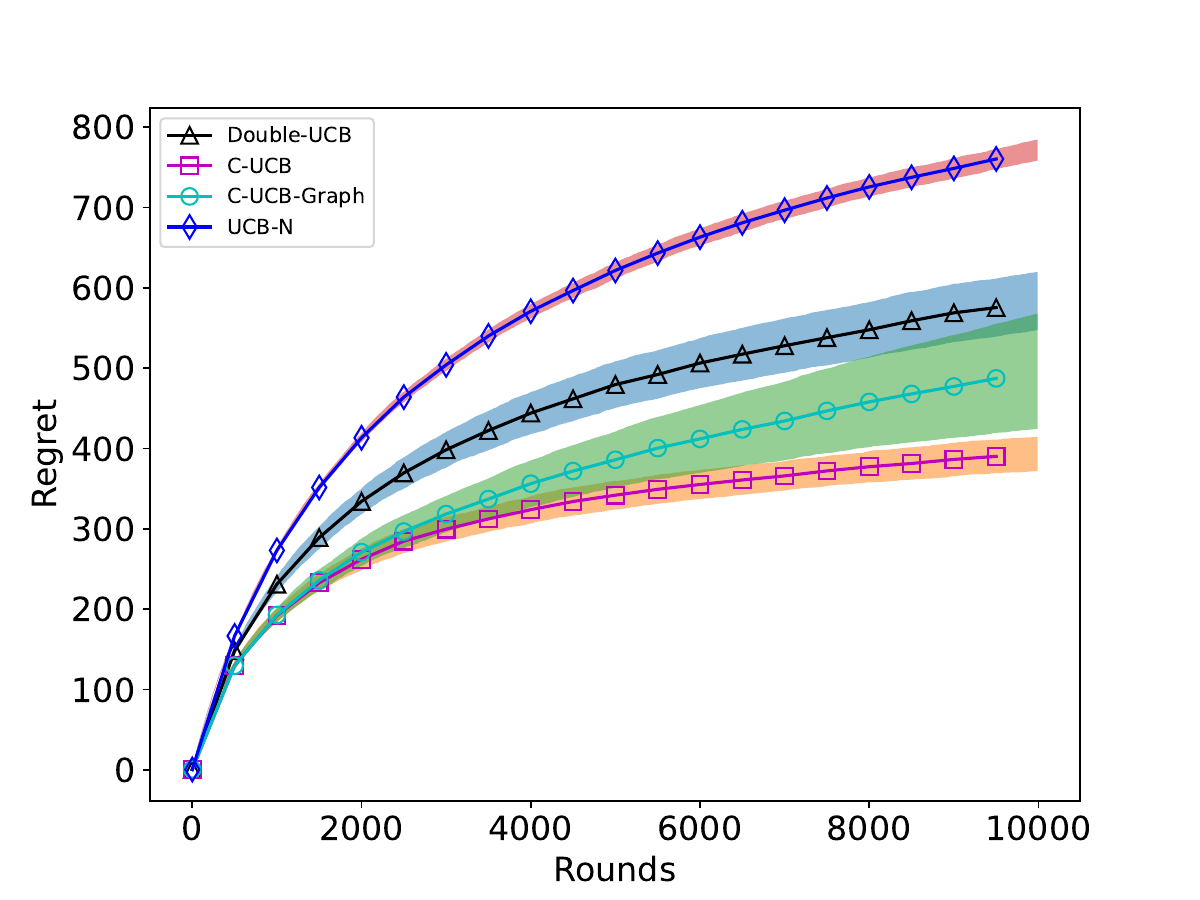}}
         \subfloat[Gaussian rewards]{
            	\includegraphics[width=8cm]{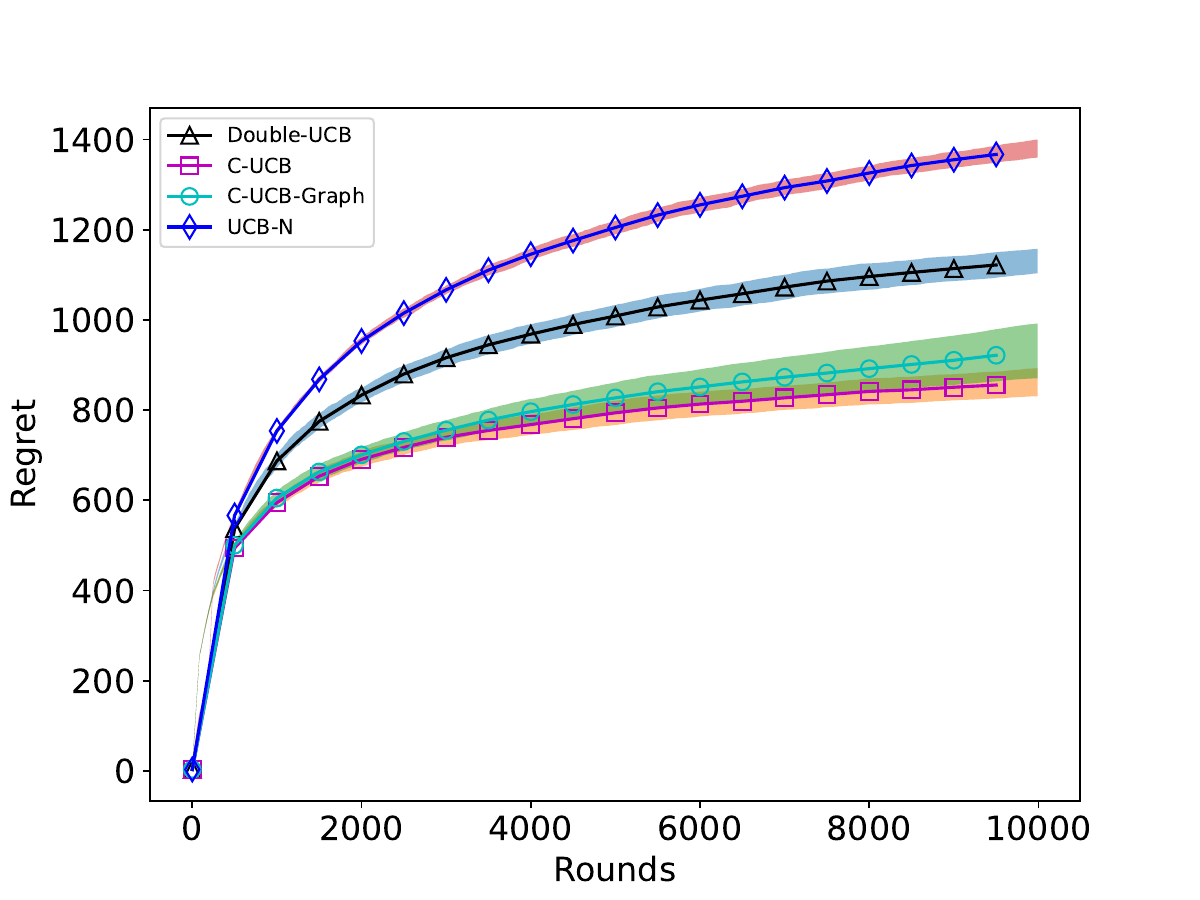} }
      
    \caption{Experimental results of UCB-N, Double-UCB, and Conservative-UCB  with $T=10^5,K=10^4,\epsilon=0.01$.   } 
    \label{fig:2}
\end{figure}

We then compare the performance of UCB-N, Double-UCB, and Conservative-UCB algorithms.  \cref{fig:2} shows the performance of the three algorithms with Gaussian and Bernoulli rewards. Although Double-UCB and UCB-N have similar regret bounds, the experimental performance of Double-UCB and Conservative-UCB is better than UCB-N. This may be because Double-UCB and Conservative-UCB directly learn on an independent set, effectively leveraging the graph structure features of similar arms.

\subsection{Ballooning Settings}
\begin{figure*}[!htbp] 
    \centering 
     \subfloat[]{
        \includegraphics[width=5.5cm]{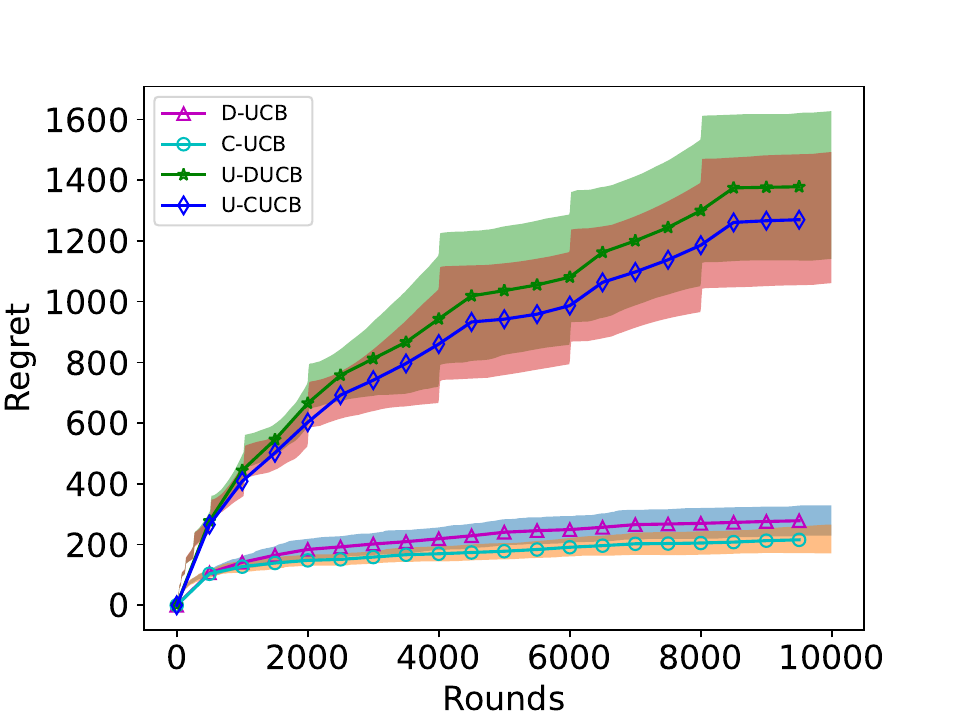}}
    \subfloat[]{
        \includegraphics[width=5.5cm]{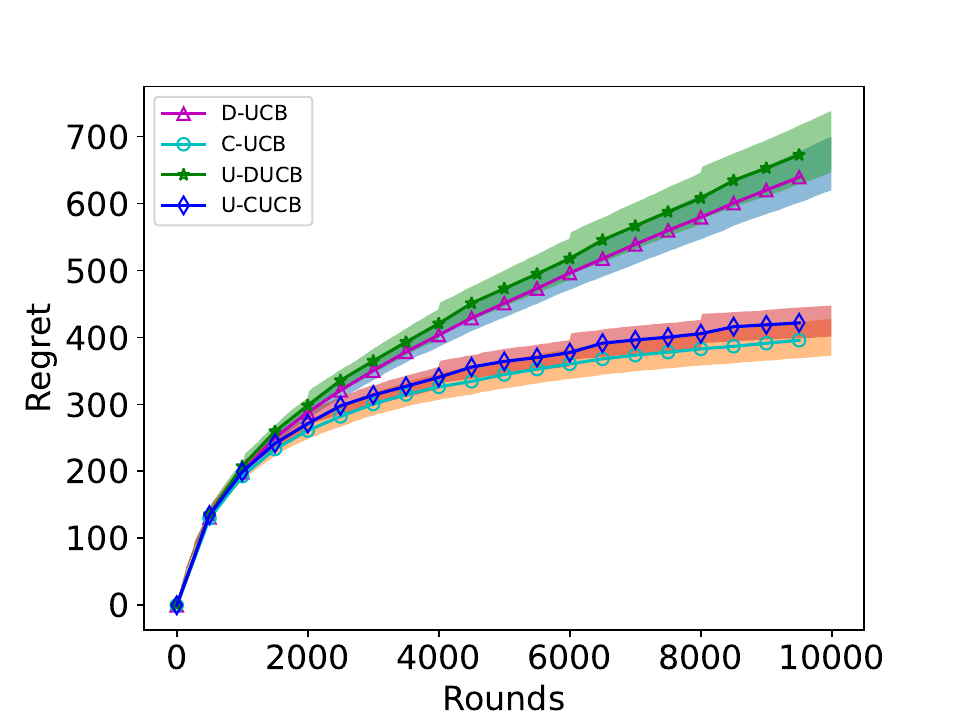} }
     \subfloat[]{   
        \includegraphics[width=5.5cm]{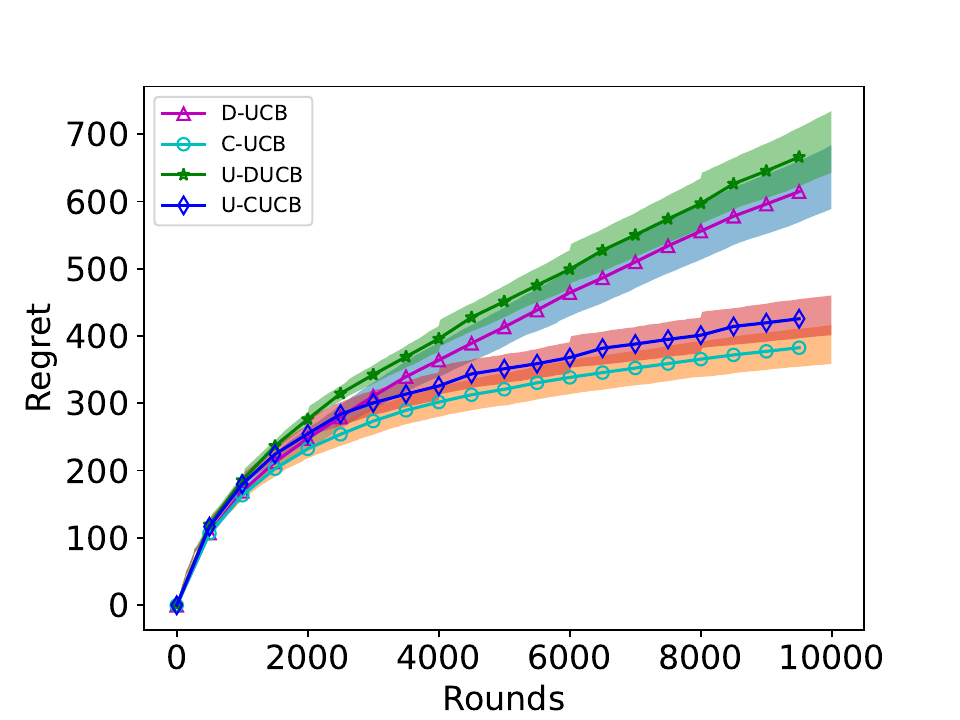} }
       
    \caption{Experimental results of Double-UCB, Conservative-UCB, U-Double-UCB and U-Conservative-UCB on ballooning Settings. Fig.~4(a) considers Gaussian rewards with $\mathcal{P}= \mathcal{N}(0,1)$ and $\epsilon=0.3$. Fig.~4(b) considers Bernoulli rewards with $\mathcal{P}=U(0,1)$ and $\epsilon=0.05$. Fig.~4(c) considers Bernoulli rewards with $\mathcal{P}$ being the half-triangle distribution and $\epsilon=0.05$. }
    \label{fig:3}
\end{figure*}
UCB-N is not suitable for ballooning settings since it selects each arrived arm at least once. The BL-Moss  algorithm \citep{ghalme2021ballooning} is specifically designed for the ballooning setting. However, this algorithm assumes that the optimal arm is more likely to appear in the early rounds and requires prior knowledge of the parameter $\lambda$ to characterize this likelihood, which is not consistent with our setting. Thus, we only compare  our proposed four algorithms: Double-UCB, Conservative-UCB, U-Double-UCB and U-Conservative-UCB.
The problem instances are generated under Assumption \ref{assumption0}
with different distributions $\mathcal{P}$. 
For each  $\mathcal{P}$ and $\epsilon$, we also generate $50$ different problem instances. The 95\% confidence interval is obtained by performing $50$
independent runs and is shown as a semi-transparent region in the figure.

\cref{fig:3}  shows the experimental results 
of ballooning settings.
When $\mathcal{P}$ follows from a standard normal distribution, Double-UCB and Conservative-UCB exhibit similar performance. However, when $\mathcal{P}$ is a uniform distribution $U(0,1)$ or half-triangle distribution with distribution function as $1-(1-x)^2$, Double-UCB fails to achieve sublinear regret, while Conservative-UCB still performs well.  

\section{Conclusion}
In this paper, we have introduced a new graph feedback bandit model with similar arms. For this model, we proposed two different UCB-based algorithms (Double-UCB, Conservative-UCB) and provided corresponding regret upper bounds. We then extended these two algorithms to the ballooning setting, in which the application of Conservative-UCB is more extensive than Double-UCB. Double-UCB can only achieve sublinear regret when the mean distribution is Gaussian, while Conservative-UCB can achieve sublinear regret regardless of the mean distribution. Furthermore, we proposed U-Double-UCB and U-Conservative-UCB algorithms, which do not require knowledge of the graph information. Under the new graph feedback setting, we can obtain regret bounds based on the minimum dominating set without using the feedback graph to explicitly exploration.
More importantly, this similar structure helps us investigate the bandit problem with ballooning settings, which was difficult to explore in previous studies.

\appendix

\section{Facts and Lemmas}
\label{appendix_a}

\begin{lemma}
    \label{lemma1}
    Assume that $\{X_i\}_{i=1}^n$ are independent random variables. Let $\mu=\mathbb{E}[X_i]$ and $\bar{\mu}=\frac{1}{n}\sum_{i=1}^{n}X_i$. If $\{X_i\}_{i=1}^n$ are all bounded in $[0,1]$ or $\frac{1}{2}$-subGaussian,  for any $\delta \in (0,1)$, with probability at least $1-\delta^2$, 
    \[
      |\bar{\mu}-\mu|  \leq \sqrt{\frac{\log(\sqrt{2}/\delta)}{n}}.
    \]
\end{lemma}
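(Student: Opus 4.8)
The plan is to prove the two-sided concentration inequality via a standard sub-Gaussian (Hoeffding-type) tail bound, then convert the single-sided bound into a two-sided statement by a union bound, and finally observe that bounded-in-$[0,1]$ random variables are automatically $\frac12$-sub-Gaussian, so the two hypotheses collapse into one case.

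First I would recall the sub-Gaussian tail estimate: if $X_1,\dots,X_n$ are independent and each $X_i-\mu$ is $\frac12$-sub-Gaussian, then $\sum_{i=1}^n (X_i-\mu)$ is $\frac{\sqrt n}{2}$-sub-Gaussian, hence for any $\lambda>0$,
\[
\mathbb{P}\!\left(\bar\mu - \mu \geq \lambda\right) \leq \exp\!\left(-\frac{2 n \lambda^2}{4\cdot\frac14}\right) = \exp(-2n\lambda^2)\,;
\]
wait — with the $\frac12$-sub-Gaussian normalization the variance proxy of the average is $\frac{1}{4n}$, giving $\mathbb{P}(\bar\mu-\mu\geq\lambda)\leq \exp(-2n\lambda^2)$. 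I would then set this tail probability equal to $\delta^2/2$, i.e. solve $\exp(-2n\lambda^2) = \delta^2/2$, which yields $\lambda = \sqrt{\frac{\log(\sqrt2/\delta)}{n}}$ after taking logs ($2n\lambda^2 = \log(2/\delta^2) = 2\log(\sqrt2/\delta)$). Applying the symmetric bound to $-(X_i-\mu)$ gives the same estimate for the lower tail, and a union bound over the two tails shows
\[
\mathbb{P}\!\left(|\bar\mu-\mu| > \sqrt{\tfrac{\log(\sqrt2/\delta)}{n}}\right) \leq 2\cdot\frac{\delta^2}{2} = \delta^2,
\]
which is exactly the claim.

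For the bounded case, I would invoke the classical fact (Hoeffding's lemma) that a random variable supported on an interval of length $1$ is $\frac12$-sub-Gaussian; this means the bounded-reward hypothesis is subsumed by the sub-Gaussian hypothesis and no separate argument is needed — the footnote in the problem setup already signals that this unification is intended.

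The only mild subtlety — not really an obstacle — is bookkeeping the constant $\sqrt2$ inside the logarithm: one must be careful that the factor of $2$ from the union bound over the two tails is absorbed so that the final confidence level is $\delta^2$ rather than $2\delta^2$, and this is precisely why the radius carries $\log(\sqrt2/\delta)$ instead of $\log(1/\delta)$. Everything else is a direct application of standard concentration, so I expect the proof to be short; the main care is simply matching constants so the statement comes out verbatim.
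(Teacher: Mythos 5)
Your proof is correct and is the standard argument: the paper itself states this lemma in Appendix~A without proof, treating it as a routine Hoeffding/sub-Gaussian concentration fact, and your constant bookkeeping (tail level $\delta^2/2$ per side, giving the $\sqrt{2}$ inside the logarithm after the union bound) checks out exactly. The reduction of the bounded-in-$[0,1]$ case to the $\frac{1}{2}$-sub-Gaussian case via Hoeffding's lemma is also the intended unification.
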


\begin{lemma}\citep{allan1978domination}
    \label{claw-free}
    If $G$ is a claw-free graph, then $\gamma(G)=i(G)$.
\end{lemma}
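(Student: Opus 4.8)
The plan is to establish the only nontrivial direction, namely $i(G)\le\gamma(G)$, since $\gamma(G)\le i(G)$ holds for every graph (an independent dominating set is a dominating set). It therefore suffices to produce an independent dominating set of size exactly $\gamma(G)$. First I would choose, among all minimum dominating sets of $G$, a set $D$ that \emph{minimizes the number of edges} in the induced subgraph $G[D]$; the aim is to show that this extremal $D$ has no edges at all, i.e.\ is independent. This is the classical Allan--Laskar style argument.

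Suppose toward a contradiction that $G[D]$ contains an edge $xy$. Because $D$ is a minimum (hence minimal) dominating set, the private neighborhood $P:=\{\,w:\ N[w]\cap D=\{x\}\,\}$ (closed neighborhoods) is nonempty. Since $x$ is adjacent to $y\in D$, we have $x\notin P$, so every $w\in P$ lies outside $D$, is adjacent to $x$, and has no neighbor in $D\setminus\{x\}$; in particular no vertex of $P$ is adjacent to $y$. The step that actually uses the hypothesis is to show $P$ is a \textbf{clique}: for distinct $w_1,w_2\in P$, the four vertices $x,y,w_1,w_2$ would form an induced claw centered at $x$ unless one of the edges among $\{y,w_1,w_2\}$ is present; as $yw_1\notin E$ and $yw_2\notin E$, claw-freeness forces $w_1w_2\in E$.

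Now I would fix any $w\in P$ and set $D':=(D\setminus\{x\})\cup\{w\}$. One checks $D'$ is again a dominating set of size $\gamma(G)$: every vertex of $G$ whose closed neighborhood meets $D$ only in $\{x\}$ is, by definition, a vertex of $P$, and $w$ dominates all of $P$ (it is adjacent to every other vertex of $P$ since $P$ is a clique, and dominates itself); all remaining vertices are dominated by $D\setminus\{x\}\subseteq D'$, and $x$ itself is dominated by $y\in D'$. Moreover $w\in P$ means $w$ has no neighbor in $D\setminus\{x\}$, so $w$ is isolated in $G[D']$, whence $|E(G[D'])|=|E(G[D])|-\deg_{G[D]}(x)<|E(G[D])|$, contradicting the minimality of $D$. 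Hence $G[D]$ is edgeless, $D$ is an independent dominating set of size $\gamma(G)$, and $i(G)\le\gamma(G)$.

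The main obstacle is precisely the clique claim for $P$: this is the only place claw-freeness enters, and the rest is a routine but slightly fussy private-neighbor swap that must be verified to preserve both the dominating property and the cardinality. Minor care is also needed when $P$ is a singleton (the clique claim is then vacuous, but the swap still goes through) and in noting that $x$ can never be its own private neighbor here, exactly because the edge $xy$ witnesses $x\in N[y]$ with $y\in D\setminus\{x\}$.
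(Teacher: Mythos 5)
Your proof is correct and is precisely the classical Allan--Laskar argument for claw-free graphs (minimum dominating set with fewest induced edges, private-neighbor clique via claw-freeness, then a swap that strictly decreases the edge count). The paper does not reprove this lemma but simply cites \citep{allan1978domination}, so your write-up is a faithful, self-contained reconstruction of the cited proof rather than a different route.
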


\begin{lemma}
\label{ind_number_new}
\textcolor{blue}{
Assume that $\mu(a_t) \stackrel{i.i.d.}{\sim} \mathcal{P}$.
Let $G_T^{\mathcal{P}}$ denote the graph constructed by $\mu(a_1),...,\mu(a_T)$, and $\alpha(G_T^{\mathcal{P}})$ is the independence number of $G_T^{\mathcal{P}}$. 
Let $\mathcal{P}$ denote the Gaussian distribution $\mathcal{N}(0,1)$, we have
\begin{equation}
    \mathbb{P}( \alpha(G_T^{\mathcal{P}}) > \frac{2\sqrt{6\log T}}{\epsilon} +1   ) \leq \frac{2}{T^2}
\end{equation}
}

\end{lemma}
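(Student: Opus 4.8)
The plan is to bound the independence number of the random geometric graph on the line by a simple covering argument combined with a concentration bound for the number of points falling into bounded intervals. First I would observe that if $S$ is an independent set in $G_T^{\mathcal{P}}$, then by definition any two arms $a_i, a_j \in S$ satisfy $|\mu(a_i) - \mu(a_j)| \geq \epsilon$; hence the points $\{\mu(a_i) : a_i \in S\}$ are $\epsilon$-separated on the real line. Consequently, for any interval $I$ of length $L$, at most $\lceil L/\epsilon \rceil$ of the independent-set points can lie in $I$. Thus $\alpha(G_T^{\mathcal{P}})$ is at most $\lceil L/\epsilon\rceil$ plus the number of arms whose means fall outside $[-L/2, L/2]$ (each such arm can contribute at most one additional element trivially, but we bound it by the total count of such outliers).

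Next I would choose the window length $L$ to balance the two terms. Since $\mathcal{P} = \mathcal{N}(0,1)$, a standard Gaussian tail bound gives $\mathbb{P}(|\mu(a_t)| > L/2) \leq 2\exp(-L^2/8)$. Taking $L = 2\sqrt{6\log T}$ makes this probability at most $2 T^{-3}$; a union bound over the $T$ arms then shows that with probability at least $1 - 2T^{-2}$, \emph{no} arm has $|\mu(a_t)| > \sqrt{6\log T}$, i.e. all means lie in the interval $[-\sqrt{6\log T}, \sqrt{6\log T}]$ of length $2\sqrt{6\log T}$. On this event, the $\epsilon$-separation argument forces $\alpha(G_T^{\mathcal{P}}) \leq \lceil 2\sqrt{6\log T}/\epsilon \rceil \leq 2\sqrt{6\log T}/\epsilon + 1$, which is exactly the claimed bound. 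The probability statement $\mathbb{P}(\alpha(G_T^{\mathcal{P}}) > 2\sqrt{6\log T}/\epsilon + 1) \leq 2/T^2$ follows by taking complements.

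The main obstacle — though it is minor — is getting the constants in the Gaussian tail bound to line up so that the union bound over $T$ arms yields a clean $2/T^2$ rather than an unwieldy expression; this just requires being slightly generous in the choice of $L$ (the factor $\sqrt{6}$ is chosen precisely so that $L^2/8 = 3\log T$ after accounting for the union bound). A secondary subtlety is that the event "all means lie in a length-$L$ window centered at $0$" is stronger than needed; one could instead directly bound the expected number of outliers and use Markov, but the union-bound route gives the stated high-probability form most directly. I would also note for completeness that the rounding $\lceil x \rceil \leq x + 1$ absorbs the "$+1$" in the final bound, so no separate accounting for boundary arms is needed once we are on the good event.
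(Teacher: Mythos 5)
Your proposal is correct and follows essentially the same route as the paper: both arguments combine the observation that an independent set is an $\epsilon$-separated point set (hence has at most $L/\epsilon+1$ elements inside a window of length $L$) with a Gaussian tail bound plus a union bound showing all $T$ means lie in $[-\sqrt{6\log T},\sqrt{6\log T}]$ with probability at least $1-2/T^2$. The only cosmetic difference is that the paper phrases the good event in terms of the sample range $L_T=M_T-m_T$ while you use a window centered at $0$; these events coincide, and your final count $L/\epsilon+1$ matches the correct packing bound even though the intermediate "$\lceil L/\epsilon\rceil$" claim is off by one when $L/\epsilon$ is an integer.
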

\begin{proof}
   For any instance $\mu(a_1),...,\mu(a_T) \stackrel{i.i.d.}{\sim} \mathcal{N}(0,1)$, denote $M_T,m_T,L_T$ as 
   \[ M_T:=\max_t \mu(a_t), m_T:=\min_t \mu(a_t), L_T:=M_T-m_T. \] 
   Since for any independent set $S$, it holds that $ |\mu(a_i)-\mu(a_j)| > \epsilon$ for all  $a_i, a_j \in S$ such that $a_i \ne a_j$. Then we can bound $|S|$ by 
   \[ 1 + \frac{L_T}{\epsilon}. \]
Next, we derive a high probability bound for $L_T$. By the union bound,
\[ \mathbb{P}(M_T > u) = \mathbb{P}(\bigcup_{i=1}^T\{\mu(a_i) > u\} ) \leq \sum_{i=1}^T \mathbb{P}(\mu(a_i)>u)=T(1-\Phi(u)). \]
Using Lemma \ref{Gaussian}, we have
\[ \mathbb{P}(M_T > u)  \leq \frac{T}{u+\sqrt{u^2+4}}e^{-\frac{u^2}{2}}.     \]

Now let $u = \sqrt{6\log T}$. If $T> 1$, we have
\[ \mathbb{P}(M_T > u) \leq \frac{1}{2\sqrt{6\log T}}\frac{1}{T^2} \leq \frac{1}{T^2}. \]
By symmetry, 
\[ \mathbb{P}(m_T < -u )=\mathbb{P}(\max_i(-\mu(a_i)) > u) \leq \frac{1}{T^2}. \]
Hence,
\begin{equation}
    \begin{aligned}
        \mathbb{P}(L_T \leq 2u) &\geq \mathbb{P}( \{M_T <u \} \bigcap \{ m_T >-u\})\\
        &= 1- \mathbb{P}(\{M_T >u\} \bigcup \{m_T <-u\})\\
        &\geq 1-\mathbb{P}(M_T>u)-\mathbb{P}(m_T<-u)\\
        &\geq 1-\frac{2}{T^2}.
    \end{aligned}
\end{equation}
Therefore,
\[ \mathbb{P}(|S| \geq 1 + \frac{2u}{\epsilon}) \leq \mathbb{P}(L_T \geq 2u) \leq \frac{2}{T^2}. \]
Since $S$ was chosen arbitrarily, the proof is complete.

\end{proof}

\begin{lemma}[Chernoff Bounds]
    \label{chernoff}
Suppose $X_1,...,X_n$ are independent Bernoulli random variable. 
Let $X$ denote their sum and let $\mu=\mathbb{E}[X]$ denote the sum's expected value. 
\[
    \mathbb{P}(X \geq (1+\delta)\mu) \leq e^{-\frac{\delta^2 \mu}{2+\delta}}, \delta \geq 0,
\]
\[
    \mathbb{P}(X \leq (1-\delta)\mu) \leq e^{-\frac{\delta^2 \mu}{2}}, 0 < \delta <1.
\]
\end{lemma}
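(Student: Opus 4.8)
The plan is to establish both tail inequalities by the standard exponential-moment (Chernoff) method, handling the upper and lower tails by symmetric arguments. Write $p_i := \mathbb{E}[X_i]$, so that $\mu = \sum_{i=1}^{n} p_i$, and recall the elementary bound $1 + x \le e^x$, valid for every real $x$; this is the only inequality needed to control the moment generating functions.

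First I would bound the upper tail. Fix $s > 0$. Markov's inequality applied to the nonnegative variable $e^{sX}$ gives $\mathbb{P}(X \ge (1+\delta)\mu) = \mathbb{P}(e^{sX} \ge e^{s(1+\delta)\mu}) \le e^{-s(1+\delta)\mu}\,\mathbb{E}[e^{sX}]$. By independence, $\mathbb{E}[e^{sX}] = \prod_{i=1}^{n} \mathbb{E}[e^{sX_i}]$, and for each Bernoulli factor $\mathbb{E}[e^{sX_i}] = 1 + p_i(e^s - 1) \le \exp(p_i(e^s-1))$, so $\mathbb{E}[e^{sX}] \le \exp(\mu(e^s-1))$. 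Combining, $\mathbb{P}(X \ge (1+\delta)\mu) \le \exp(\mu(e^s-1) - s(1+\delta)\mu)$; the exponent is minimized at $s = \ln(1+\delta) > 0$, which yields the familiar quantity $\left(e^{\delta}/(1+\delta)^{1+\delta}\right)^{\mu}$. It then remains only to check the scalar inequality $\delta - (1+\delta)\ln(1+\delta) \le -\delta^2/(2+\delta)$ for $\delta \ge 0$, which is equivalent to $\ln(1+\delta) \ge 2\delta/(2+\delta)$; this holds because the difference vanishes at $\delta = 0$ and has derivative $\delta^2/((1+\delta)(2+\delta)^2) \ge 0$.

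The lower tail is the mirror image. For $s > 0$, Markov applied to $e^{-sX}$ gives $\mathbb{P}(X \le (1-\delta)\mu) \le e^{s(1-\delta)\mu}\,\mathbb{E}[e^{-sX}]$, and the same per-factor estimate $\mathbb{E}[e^{-sX_i}] \le \exp(p_i(e^{-s}-1))$ gives $\mathbb{E}[e^{-sX}] \le \exp(\mu(e^{-s}-1))$. Since $0 < \delta < 1$, the choice $s = \ln\frac{1}{1-\delta} > 0$ is admissible and optimal, producing $\left(e^{-\delta}/(1-\delta)^{1-\delta}\right)^{\mu}$; one finishes with the scalar inequality $-\delta - (1-\delta)\ln(1-\delta) \le -\delta^2/2$ on $(0,1)$, i.e. $g(\delta) := (1-\delta)\ln(1-\delta) + \delta - \delta^2/2 \ge 0$, which holds since $g(0) = 0$ and $g'(\delta) = -\ln(1-\delta) - \delta \ge 0$ (because $\ln(1-\delta) \le -\delta$).

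There is no serious obstacle: the result is classical and every step is routine. The only mildly non-mechanical points are the two elementary calculus inequalities quoted above and the observation that, although the $X_i$ may be Bernoulli with distinct parameters $p_i$, the estimate $\mathbb{E}[e^{\pm sX_i}] \le \exp(p_i(e^{\pm s}-1))$ holds factor by factor, so the final exponent depends on the $p_i$ only through their sum $\mu$ — which is exactly the form the statement requires.
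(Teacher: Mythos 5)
Your proof is correct: both tails are handled by the standard exponential-moment argument, the per-factor bound $\mathbb{E}[e^{\pm sX_i}] \le \exp(p_i(e^{\pm s}-1))$ correctly reduces everything to $\mu$, the optimizing choices of $s$ are right, and the two closing scalar inequalities (equivalent to $\ln(1+\delta)\ge 2\delta/(2+\delta)$ and $-\ln(1-\delta)\ge\delta$) are verified soundly. The paper states this lemma as a classical fact without proof, so there is nothing to compare against; your argument is exactly the standard derivation one would supply.
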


\begin{lemma}\citep{abramowitz1988handbook}
    \label{Gaussian}
    For a Gaussian  random variable $X$ with mean $\mu$ and variance $\sigma^2$, for any $a >0$,
    \[
    \frac{1}{\sqrt{2\pi}}\frac{a}{1+a^2}e^{-\frac{a^2}{2}} \leq \mathbb{P}(X-\mu > a\sigma) \leq \frac{1}{a+\sqrt{a^2+4}}e^{-\frac{a^2}{2}}.    
    \]
\end{lemma}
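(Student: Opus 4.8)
\emph{Proof proposal.} The plan is to standardize and then compare $Q(a):=\mathbb{P}(Z>a)$ for $Z\sim\mathcal{N}(0,1)$ against each of the two candidate functions via an elementary monotonicity argument. Writing $Z=(X-\mu)/\sigma$ gives $\mathbb{P}(X-\mu>a\sigma)=Q(a)=\frac{1}{\sqrt{2\pi}}\int_a^\infty e^{-t^2/2}\,dt$, so that $Q'(a)=-\frac{1}{\sqrt{2\pi}}e^{-a^2/2}$ and $Q(a)\to 0$ as $a\to\infty$. Everything reduces to controlling $Q$ against $g(a):=\frac{1}{\sqrt{2\pi}}\frac{a}{1+a^2}e^{-a^2/2}$ (lower bound) and $h(a):=\frac{1}{a+\sqrt{a^2+4}}e^{-a^2/2}$ (upper bound).

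For the lower bound I would set $D(a):=Q(a)-g(a)$ and compute $D'$ directly. Differentiating $g$ and combining with $Q'$, the rational parts telescope: since $(1+a^2)^2+(1-2a^2-a^4)=2$, one obtains $D'(a)=-\frac{e^{-a^2/2}}{\sqrt{2\pi}}\cdot\frac{2}{(1+a^2)^2}<0$, so $D$ is strictly decreasing on $[0,\infty)$. As $D(a)\to 0$ at infinity, strict monotonicity forces $D(a)\ge 0$ for every $a\ge 0$, which is exactly the claimed lower bound.

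For the upper bound, put $G(a):=h(a)-Q(a)$. Two facts come for free: $G(0)=\frac12-\frac12=0$ and $G(a)\to 0$ as $a\to\infty$. A short computation, using $u'/u=1/\sqrt{a^2+4}$ with $u=a+\sqrt{a^2+4}$, gives $G'(a)=e^{-a^2/2}\big(\frac{1}{\sqrt{2\pi}}-\rho(a)\big)$ where $\rho(a):=\frac{a+1/\sqrt{a^2+4}}{\,a+\sqrt{a^2+4}\,}$. The key step is to show $\rho$ is non-decreasing: after clearing denominators in $\rho'(a)$, its sign is that of $a^2+6-a\sqrt{a^2+4}$, which is nonnegative because squaring reduces it to $8a^2+36\ge 0$. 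Since $\rho(0)=\frac14<\frac{1}{\sqrt{2\pi}}<\frac12=\lim_{a\to\infty}\rho(a)$ and $\rho$ is monotone, $\rho$ crosses $\frac{1}{\sqrt{2\pi}}$ exactly once, so $G'\ge 0$ on an initial interval and $G'\le 0$ afterwards; hence $G$ is unimodal, and together with $G(0)=\lim_{a\to\infty}G(a)=0$ this yields $G(a)\ge 0$ on $[0,\infty)$, i.e.\ the upper bound.

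The main obstacle is the upper bound. Unlike a typical tail-bound comparison, the candidate $h$ matches $Q$ at \emph{both} endpoints $a=0$ and $a=\infty$, so a one-sided "$G$ is monotone, hence $G\ge\lim G$" argument does not apply; one genuinely needs the unimodality of $G$, which hinges on the monotonicity of the auxiliary ratio $\rho$. Getting that monotonicity down to a single squaring — so that it becomes the trivial inequality $8a^2+36\ge 0$ — is the step requiring the most care; the rest is routine bookkeeping.
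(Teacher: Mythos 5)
Your argument is correct in all its details. I checked the two computations that carry the weight: for the lower bound, the telescoping identity $(1+a^2)^2+(1-2a^2-a^4)=2$ does give $D'(a)=-\tfrac{2}{\sqrt{2\pi}}\,\tfrac{e^{-a^2/2}}{(1+a^2)^2}<0$, and strict decrease together with $D(a)\to 0$ forces $D\ge 0$; for the upper bound, writing $s=\sqrt{a^2+4}$ and clearing denominators in $\rho'$ indeed reduces its sign to that of $2(a^2+6-as)$, and $(a^2+6)^2-a^2(a^2+4)=8a^2+36\ge 0$ settles the monotonicity of $\rho$, after which the unimodality of $G$ with $G(0)=\lim_{a\to\infty}G(a)=0$ gives $G\ge 0$. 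You are also right that the upper bound is the delicate half, precisely because $h$ and $Q$ agree at both endpoints. The one thing to note is that the paper does not prove this lemma at all: it is imported as a classical Gaussian tail estimate from the Abramowitz--Stegun handbook and used as a black box (in Lemma~\ref{ind_number_new}, Corollary~\ref{gapfree_ducb_bl}, and Theorem~\ref{lowerbound_ducb_bl}). So there is no in-paper argument to compare against; what you have supplied is a correct, self-contained, purely elementary derivation of the cited inequality, which is a strictly stronger deliverable than the paper's citation. If you wanted to streamline further, the upper bound is sometimes obtained from the known sharp bound $Q(a)\le \phi(a)/\bigl(\tfrac{a}{2}+\tfrac{1}{2}\sqrt{a^2+8/\pi}\bigr)$ or via an integration-by-parts chain, but your unimodality route is clean and requires nothing beyond calculus.
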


\begin{lemma}
    \label{var}
Let $X_1,...,X_n \sim \mathcal{N}(0,1)$ and $Y:=\max_{1\leq i \leq n}X_i - \min_{1\leq i \leq n}X_i $. For any $n \geq 3$, we have 
\begin{equation}
    \mathbb{E}\big[ Y^2] \leq 8\log n + \frac{32}{\log 2}\frac{1}{\log 2n -\log(1+4\log\log2n)} .
\end{equation} 

\end{lemma}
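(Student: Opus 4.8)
The plan is to bound the second moment of the range $Y = \max_i X_i - \min_i X_i$ for $X_1,\dots,X_n$ i.i.d.\ standard normal by first bounding the second moment of $M_n := \max_i X_i$ and of $-m_n := -\min_i X_i$, and then using $Y \le M_n + (-m_n)$ together with $(a+b)^2 \le 2a^2 + 2b^2$ and the symmetry of the Gaussian to reduce everything to controlling $\mathbb{E}[M_n^2]$. So the core task is: show $\mathbb{E}[M_n^2] \le 2\log n + (\text{lower-order correction})$ with constants that produce the stated bound after doubling.

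\textbf{Step 1: Tail bound for $M_n$.} For $u > 0$, by a union bound and the Gaussian upper tail from Lemma~\ref{Gaussian} (with $a = u$, $\mu=0$, $\sigma = 1$), we have
\[
\mathbb{P}(M_n > u) \le n\,\mathbb{P}(X_1 > u) \le \frac{n}{u + \sqrt{u^2 + 4}}\, e^{-u^2/2} \le \frac{n}{2}\, e^{-u^2/2}
\]
for $u \ge 0$. I would choose a threshold $u_0 = \sqrt{2\log n}$ so that $\tfrac{n}{2} e^{-u_0^2/2} = \tfrac12$, making the tail summable beyond $u_0$.

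\textbf{Step 2: Integrate the tail.} Write $\mathbb{E}[M_n^2] = \int_0^\infty \mathbb{P}(M_n^2 > s)\,ds = \int_0^\infty 2u\,\mathbb{P}(M_n > u)\,du$. Split this at $u_0$: on $[0, u_0]$ bound $\mathbb{P}(M_n > u) \le 1$ to get a contribution of at most $u_0^2 = 2\log n$; on $[u_0, \infty)$ use the tail bound from Step~1, so the contribution is at most $\int_{u_0}^\infty 2u \cdot \tfrac{n}{2} e^{-u^2/2}\,du = n\, e^{-u_0^2/2} = 1$. Actually to get the sharper correction term matching the statement, I would instead split at a slightly larger point or keep the full $\frac{n}{u+\sqrt{u^2+4}}$ factor and change variables $v = u^2/2$; the residual integral $\int 2u \cdot \frac{n}{2u} e^{-u^2/2} du = n\int e^{-u^2/2} du$ over the tail evaluates via the substitution $v = u^2/2$, $dv = u\,du$, giving something of the form $\frac{C}{\log 2n - \log(1 + 4\log\log 2n)}$ after bounding $1/u$ on the tail. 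This is the one place where a careful (but still routine) estimate of the form $\int_{u_0}^\infty \frac{n}{u} e^{-u^2/2}\,du$ must be pushed through to land exactly on the claimed constants.

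\textbf{Step 3: Assemble.} By symmetry, $-m_n = \max_i(-X_i)$ has the same distribution as $M_n$, so $\mathbb{E}[(-m_n)^2] = \mathbb{E}[M_n^2]$. Then
\[
\mathbb{E}[Y^2] = \mathbb{E}[(M_n + (-m_n))^2] \le 2\,\mathbb{E}[M_n^2] + 2\,\mathbb{E}[(-m_n)^2] = 4\,\mathbb{E}[M_n^2] \le 8\log n + (\text{correction}),
\]
and tracking the correction term from Step~2 (multiplied by $4$, and using $\frac{4 \cdot (\text{something})}{\dots}$) yields the stated $\frac{32}{\log 2}\cdot\frac{1}{\log 2n - \log(1 + 4\log\log 2n)}$. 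The hypothesis $n \ge 3$ is needed so that $\log\log 2n$ is positive and the denominator is well-defined and positive.

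\textbf{Main obstacle.} The conceptually easy part is the $8\log n$ leading term; the delicate part is obtaining the \emph{specific} second-order term with its exact constants. This requires choosing the split point in the tail integral judiciously (not simply $\sqrt{2\log n}$ but something tuned so that the $1/u$ factor from the Gaussian density's $\frac{1}{u+\sqrt{u^2+4}}$ prefactor contributes the $\log\log$ term), and carefully bounding $\int_{u_0}^\infty \frac{1}{u} e^{-u^2/2}\,du$ from above — likely by writing $\frac{1}{u} \le \frac{1}{u_0}$ on the tail and then using $\int_{u_0}^\infty e^{-u^2/2}\,du \le \frac{1}{u_0}e^{-u_0^2/2}$, with $u_0$ chosen as (a constant times) $\sqrt{\log 2n - \log(1+4\log\log 2n)}$ or similar. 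Everything else is bookkeeping.
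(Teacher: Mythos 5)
Your Step 3 reduction $\mathbb{E}[Y^2]\le 4\,\mathbb{E}[M_n^2]$ is fine (and is in fact exactly as strong as the paper's decomposition, since $4\mathbb{E}[M_n^2]=4(\mathbb{E}[M_n])^2+4\mathrm{Var}(M_n)$ and the paper bounds $(\mathbb{E}[Y])^2\le 4(\mathbb{E}[M_n])^2$ and $\mathrm{Var}(Y)\le 4\mathrm{Var}(M_n)$). The gap is in Step 2: a union-bound tail integration cannot produce the claimed correction term, which is of order $1/\log n$. With your split at $u_0=\sqrt{2\log n}$, the rectangle $\int_0^{u_0}2u\,du=2\log n$ already exhausts the entire leading budget, and the best the tail can give via Mills' ratio is $n\int_{u_0}^\infty e^{-u^2/2}\,du\le \tfrac{n}{u_0}e^{-u_0^2/2}=\tfrac{1}{\sqrt{2\log n}}$. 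After multiplying by $4$ this yields $8\log n+\tfrac{2\sqrt2}{\sqrt{\log n}}$, and since $\tfrac{2\sqrt2}{\sqrt{\log n}}$ decays like $(\log n)^{-1/2}$ while the claimed correction $\tfrac{32}{\log 2}\cdot\tfrac{1}{\log 2n-\log(1+4\log\log 2n)}$ decays like $(\log n)^{-1}$, your bound fails to imply the lemma for all sufficiently large $n$ (roughly $n>e^{266}$). Moving the split point does not fix this: pushing it up only inflates the rectangle past $2\log n$, and pulling it down requires a two-regime analysis whose outcome is an $O(1)$ (not $o(1)$) additive error. You correctly flagged this as "the one place where a careful estimate must be pushed through" --- but it is not a matter of bookkeeping; the approach structurally cannot land on a vanishing $O(1/\log n)$ correction.

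The missing ingredient is a genuine concentration result. The paper writes $\mathbb{E}[Y^2]=(\mathbb{E}[Y])^2+\mathrm{Var}(Y)$, gets $(\mathbb{E}[Y])^2\le(2\sqrt{2\log n})^2=8\log n$ from the standard MGF/Jensen bound $\mathbb{E}[\max_iX_i]\le\tfrac{\log n}{\lambda}+\tfrac{\lambda}{2}$ with $\lambda=\sqrt{2\log n}$, and then bounds $\mathrm{Var}(\max_iX_i)\le\tfrac{8}{\log 2}\cdot\tfrac{1}{\log 2n-\log(1+4\log\log 2n)}$ by citing Proposition 4.7 of Boucheron--Lugosi--Massart; the factor $4$ from $\mathrm{Var}(Y)\le 4\mathrm{Var}(\max_iX_i)$ (Cauchy--Schwarz on the covariance plus symmetry) produces the $\tfrac{32}{\log 2}$. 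That variance bound rests on a Poincar\'e-type inequality for Gaussian maxima, not on a union bound, and the exact form of the correction term is inherited verbatim from that cited proposition. To complete your argument you would need to separate the squared mean from the variance and import (or reprove) that concentration result; the direct tail integral of $M_n^2$ conflates the two and loses exactly the $\sqrt{\log n}$ factor that the lemma's second term requires.
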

\begin{proof}
    First, we bound the expectation of random variable $\max_{1\leq i \leq n}X_i $. 
    Since $X_i \sim \mathcal{N}(0,1)$, for any $\lambda>0$, we have
    \[ \mathbb{E}\big[\exp(\lambda X_i )\big] \leq \exp(\frac{\lambda^2}{2}). \]
    Since the  exponential function is convex, we have
    \[ \exp(\lambda \mathbb{E}\big[ \max_{1 \leq i \leq n}X_i \big]) \leq \mathbb{E}\big[\exp(\lambda  \max_{1 \leq i \leq n}X_i) \big]= \mathbb{E}\big[ \max_{1 \leq i \leq n}\exp(\lambda X_i) \big] \leq \mathbb{E}\big[ \sum_{i=1}^{n}\exp(\lambda X_i) \big] \leq n\exp(\frac{\lambda^2}{2}). \]
    Then,
    \[ \mathbb{E}\big[ \max_{1 \leq i \leq n}X_i \big] \leq \frac{\log n}{\lambda}+ \frac{\lambda }{2} .\]
    By setting $\lambda=\sqrt{2\log n}$, we have 
    \[\mathbb{E}[\max_{1\leq i \leq n}X_i] \leq  \sqrt{2\log n}. \]
   Thus, we have 
   \[ \mathbb{E}[Y] \leq 2\sqrt{2\log n}. \]
   Now we focus on bounding $\mathrm{Var}(Y)$.
   From Proposition 4.7 in \citep{boucheron2012concentration}, we know
   \[ \mathrm{Var}(\max_{1\leq i \leq n}X_i) \leq \frac{8}{\log 2} \cdot \frac{1}{\log 2n -\log(1+4\log\log2n)}, \ \ \forall n \geq 3. \]
   Then 
   \[
   \begin{aligned}  
    \mathrm{Var}(Y) &= \mathrm{Var}(\max_{1\leq i \leq n}X_i)+ \mathrm{Var}(\min_{1\leq i \leq n}X_i)-2\mathrm{Cov}(\max_{1\leq i \leq n}X_i,\min_{1\leq i \leq n}X_i)\\
    & \leq\mathrm{Var}(\max_{1\leq i \leq n}X_i)+ \mathrm{Var}(\min_{1\leq i \leq n}X_i)+2\sqrt{\mathrm{Var}(\max_{1\leq i \leq n}X_i)\mathrm{Var}(\min_{1\leq i \leq n}X_i)}\\
    & \leq \frac{32}{\log 2} \cdot \frac{1}{\log 2n -\log(1+4\log\log2n)}.
   \end{aligned} 
   \]
   Therefore,
    \[ \mathbb{E}[Y^2]=(\mathbb{E}[Y])^2 + \mathrm{Var}(Y) \leq 8\log n + \frac{32}{\log 2} \cdot \frac{1}{\log 2n -\log(1+4\log\log2n)}  . \]

\end{proof}

\section{Detailed Proofs}
\label{appendix_b}

\subsection{Proofs of Proposition \ref{pro1}}
\label{proof_pro1}

We first introduce the concept of $K_{1,3}$ and claw-free graph.  $K_{1,3}$ is a specific  bipartite graph consisting of two disjoint sets of vertices. One set containing a single "central" vertex and the other set containing three "peripheral" vertices. The central vertex is connected to each of the peripheral vertices by an edge, but there are no edges among the peripheral vertices themselves.
This type of graph is also called a ``claw graph" because its shape resembles a central vertex connected to three ``claws'' extending to the other three vertices.
A claw-free graph is a graph that does not have a claw as an induced subgraph or contains no induced subgraph isomorphic to $K_{1,3}$ .

(1) We first prove $\gamma(G)=i(G)$.
From \cref{claw-free}, we just need to prove $G$ is claw-free. 

Assuming $G$ has a claw, meaning that there exist nodes $a, b, c, d$, such that $a$ is connected to $b, c, d$, while $b, c, d$ are mutually unconnected. The mean values of $b$, $c$, and $d$ can be divided into two categories: greater than the mean value of $a$ and less than the mean value of $a$.  By the pigeonhole principle, at least two nodes among $(b,c,d)$ must belong to the same category. Without loss of generality, we  assume $b$ and $c$ are in the same category. Since the absolute difference between their means and the mean of $a$ is less than $\epsilon$, the absolute difference between the means of $b$ and $c$ is also less than $\epsilon$. Therefore, $b$ and $c$ are connected. This is a contradiction. Thus, $G$ is claw-free.


(2) We then prove $\alpha(G) \leq 2i(G) $.
Let $I^{*}$ be a maximum independent set and $I$ be a minimum independent dominating set. Then, we have $\alpha(G)=|I^{*}|$ and $ i(G)=|I| $. Since $G$ is claw-free,  each vertex of $I$ is adjacent (including the vertex itself in the neighborhood) to at most two vertices in $I^{*}$. Note that each vertex of $I^{*}$ is adjacent to at least one vertex of $I$.  So by a double counting argument, when counting once the vertices of $I^{*}$,   we can choose one adjacent vertex in $I$, and we will have counted at most twice the vertices of $I$.  Therefore, $|I^{*}| \leq 2|I|$.

\subsection{\textcolor{blue}{Proof of Lemma~\ref{kldecom}}}
    The proof is similar to Lemma 15.1 in \citep{lattimore2020bandit} and we use consistent notations with them.  Assume that $ \mathrm{KL}(P_j,P_j')< \infty$ for all $i \in [K]$. Let the random variables of arms and rewards be denoted as $(A_1,X_1,...,A_T,X_T)$ and the outcome are defined as $(a_1,x_1,...,a_T,x_T)$. 
    The difference is that selecting arm $a_t$  allows observing the rewards of multiple arms, i.e., $\{x_{t,j} \}_{j \in N_{a_t}}$. Hence, the observation sequence  is 
    \[ (a_1,\{x_{1,j} \}_{j \in N_{a_1}},...,a_t,\{x_{t,j} \}_{j \in N_{a_t}}...). \]
    Then the Radon-Nikodym derivative of $\mathbb{P}_{v,\pi}$ is 
    \[ p_{v,\pi}(a_1,\{x_{1,j} \}_{j \in N_{a_1}},...,a_T,\{x_{T,j} \}_{j \in N_{a_T}}) = \prod_{t=1}^T \pi_t( a_t | a_1,\{x_{1,j} \}_{j \in N_{a_1}},...,a_{t-1},\{x_{t-1,j} \}_{j \in N_{a_{t-1}}} ) \prod_{ j \in N_{a_t}} p_j(x_{t,j}). \]
    The Radon-Nikodym of $\mathbb{P}_{v',\pi}$ is the same as $\mathbb{P}_{v,\pi}$, except $p_j(x_{t,j})$ needs to be replaced with $p'_j(x_{t,j})$. Then 
    \[  \log \frac{\mathrm{d}\mathbb{P}_{v,\pi} }{\mathrm{d} \mathbb{P}_{v',\pi}}( a_1,\{x_{1,j} \}_{j \in N_{a_1}},...,a_T,\{x_{T,j} \}_{j \in N_{a_T}})  = \sum_{t=1}^T \sum_{j \in N_{a_t}} \log \frac{p_j(x_{t,j})}{p'_j(x_{t,j})}. \]
    Taking expectations we have
    \[ \mathbb{E}_v \left[ \log \frac{\mathrm{d}\mathbb{P}_{v,\pi} }{\mathrm{d} \mathbb{P}_{v',\pi}}( A_1,\{X_{1,j} \}_{j \in N_{A_1}},...,A_T,\{X_{T,j} \}_{j \in N_{A_T}})\right]  = \sum_{t=1}^T \sum_{j \in N_{A_t}}\mathbb{E}_v \left[ \log \frac{p_j(X_{t,j})}{p'_j(X_{t,j})}\right].  \]
    Note that
    \[ \mathbb{E}_v \left[ \log \frac{\mathrm{d}\mathbb{P}_{v,\pi} }{\mathrm{d} \mathbb{P}_{v',\pi}}\right] = \mathrm{KL}(\mathbb{P}_{v,\pi},\mathbb{P}_{v',\pi} ), \]
    \[ \sum_{j \in N_{A_t}}\mathbb{E}_v \left[ \log \frac{p_j(X_{t,j})}{p'_j(X_{t,j})}\right] = \mathbb{E}_v \left[\mathbb{E}_v \left[ \sum_{j \in N_{A_t}} \log \frac{p_j(X_{t,j})}{p'_j(X_{t,j})} \Bigg| A_t \right]  \right]=\sum_{j \in N_{A_t}} \mathbb{E}_v [ \mathrm{KL}(P_j,P'_j)], \]  
    where the last equality uses the fact that under $\mathbb{P}_{v,\pi}(\cdot | A_t)$, $  \log \frac{p_j(X_{t,j})}{p'_j(X_{t,j})} = \log \frac{\mathrm{d}P_j}{\mathrm{d}P'_j}$.  
    We have
    \begin{equation}
        \begin{aligned}
     \mathrm{KL}(\mathbb{P}_{v,\pi},\mathbb{P}_{v',\pi} ) &= \sum_{t=1}^T  \sum_{j \in N_{A_t}} \mathbb{E}_v [ \mathrm{KL}(P_j,P'_j)] \\
     &= \sum_{i=1}^K \mathbb{E}_v \left[ \sum_{t=1}^T \mathbbm{1}\{ A_t =i\} \sum_{j \in N_{A_t}}  \mathrm{KL}(P_j,P'_j) \right]\\
     &=\sum_{i=1}^K \sum_{j \in N_i} \mathbb{E}_v [T_i(T)] \mathrm{KL}(P_j,P'_j).
        \end{aligned}
    \end{equation}

\subsection{Proofs of Theorem \ref{bound_ducb}}
\label{proof_bound1}
Since the independent set $\mathcal{I}$ obtained after running Step 4-9 in \cref{alg:D_UCB} may vary across runs, 
we first fix one realization of $\mathcal{I}$ for analysis.
We then take the supremum over all possible $\mathcal{I}$, yielding an upper bound independent of $\mathcal{I}$.

Recall that $\mathcal{I}=\{\alpha_1,\alpha_2,...,\alpha^{*},...,\alpha_{|\mathcal{I}|}  \}$, where $\alpha^*$ denotes the arm whose neighborhood  includes the optimal arm, i.e., $i^* \in N_{\alpha^{*}}$. 
The regret can be divided into two parts: the first  part comes from  selecting  arms $i \notin N_{\alpha^{*}}$  and the second part comes from the  selection of arms $ i \in N_{\alpha^{*}}$:
\begin{equation}
    \label{temp0}
\begin{aligned}
\sum_{t=1}^{T}\sum_{i \in V} \Delta_i\mathbbm{1}\{i_t=i\}= \sum_{t=1}^{T}\sum_{i\notin N_{\alpha^{*}}}\Delta_i\mathbbm{1}\{i_t=i\} + \sum_{t=1}^{T}\sum_{i\in N_{\alpha^{*}}}\Delta_i\mathbbm{1}\{i_t=i\}.
\end{aligned}
\end{equation}

Due to the  randomness of the algorithm, $\alpha_i$'s are also  random variables. We have
\[ \mathbb{E}\left[ \sum_{t=1}^{T}\sum_{i \in V} \Delta_i\mathbbm{1}\{i_t=i\} \right] = \mathbb{E}\left[ \mathbb{E}\left[ \sum_{t=1}^{T}\sum_{i \in V} \Delta_i\mathbbm{1}\{i_t=i\} \bigg| \mathcal{I}   \right]   \right] .\]
In regret analysis, we should first condition on $\mathcal{I}$ to derive  the inner regret bound, and then take expectation over $\mathcal{I}$ to obtain the final regret bound. As shown below, the conditional regret bound is in fact independent of $\alpha_i$'s. For convenience, in the proofs of other theorems, we directly treat $\mathcal{I}$ as fixed and do not consider its randomness.


We first focus on the expected regret incurred by the first part. Let $\Delta_{\alpha_j}':=\mu(\alpha^{*})-\mu(\alpha_j)$ and $j_t \in \mathcal{I}$ denote the arm linked to the selected arm $i_t$ (Step 11 in \cref{alg:D_UCB}). We have 
\begin{align}
    \label{temp2}
    \sum_{t=1}^{T}\sum_{i\notin N_{\alpha^{*}}}\Delta_i\mathbbm{1}\{i_t=i\} &\leq \sum_{j=1,\alpha_j \neq \alpha^{*}}^{|\mathcal{I}|} \sum_{t=1}^{T} \sum_{i \in N_{\alpha_j}}\Delta_i\mathbbm{1}\{i_t=i,i \notin N_{\alpha^{*}}\} \nonumber \\
    &  \leq \sum_{j=1,\alpha_j \neq \alpha^{*}}^{|\mathcal{I}|}  (\Delta_{\alpha_j}'+2\epsilon)\sum_{t=1}^{T}\mathbbm{1}\{ j_t=\alpha_j,\alpha_j \neq \alpha^{*}\}.
\end{align}
The last inequality uses the following two facts:
\[ \Delta_{i}=\mu(i^{*})-\mu(i)=\mu(i^{*})-\mu(\alpha^{*})+\mu(\alpha^{*})-\mu(\alpha_j)+\mu(\alpha_j)-\mu(i)\leq \Delta_{\alpha_j}'+2\epsilon, \]
and
\[ \sum_{t=1}^{T} \sum_{i \in N_{\alpha_j}}\mathbbm{1}\{i_t=i,i \notin N_{\alpha^{*}}\} = \sum_{t=1}^{T}\mathbbm{1}\{j_t=\alpha_j,\alpha_j \neq \alpha^{*}\}.  \]
Recall that $O_t(i)$ denotes the number of observations of arm $i$ till time $t$. For any $\alpha_j \neq \alpha^{*}$, if $O_t(\alpha_j) > \frac{4\log(\sqrt{2}/\delta)}{(\Delta_{\alpha_j}')^2},$ we have, with probability at least $1-\delta^2$
\begin{equation}
    \label{conf1}
     \bar{\mu}_t(\alpha_j) + \sqrt{\frac{\log(\sqrt{2}/\delta)}{O_t(\alpha_j)}} \leq \mu(\alpha_j) + 2\sqrt{\frac{\log(\sqrt{2}/\delta)}{O_t(\alpha_j)}} < \mu(\alpha_j) + \Delta_{\alpha_j}'=\mu(\alpha^{*}),
\end{equation}
and
\begin{equation}
     \label{conf2}
    \bar{\mu}(\alpha^{*}) + \sqrt{\frac{\log(\sqrt{2}/\delta)}{O_t(\alpha^{*})}}  \geq \mu(\alpha^{*}). 
\end{equation}
Thus, for any $\alpha_j \neq \alpha^{*}$, 
\[ \mathbb{P}\left(j_t =\alpha_j,O_t(\alpha_j) >  \frac{4\log(\sqrt{2}/\delta)}{(\Delta_{\alpha_j}')^2}\right) \leq \delta^2. \]
Hence, 
\[ \sum_{t=1}^{T}\mathbb{P}( j_t=\alpha_j,\alpha_j \neq \alpha^{*} ) \leq  \frac{4\log(\sqrt{2}/\delta)}{(\Delta_{\alpha_j}')^2} + \delta^2T. \]

Plugging into \cref{temp0} and \cref{temp2}, we get
\begin{equation}
    \label{temp3}
    \begin{aligned}
        \sum_{t=1}^{T}\sum_{i\notin N_{\alpha^{*}}}\Delta_i\mathbb{P}(i_t=i) 
        &\leq \sum_{j=1,\alpha_j \neq \alpha^{*}}^{|\mathcal{I}|}  (\Delta_{\alpha_j}'+2\epsilon)\sum_{t=1}^{T}\mathbb{P}( j_t=\alpha_j,\alpha_j \neq \alpha^{*})\\
        &\leq \sum_{j=1,\alpha_j \neq \alpha^{*}}^{|\mathcal{I}|} \bigg( \frac{(\Delta_{\alpha_j}'+2\epsilon)4\log(\sqrt{2}/\delta)}{(\Delta_{\alpha_j}')^2} +\Delta_{\max}\delta^2T \bigg) \\
        &= \sum_{j=1,\alpha_j \neq \alpha^{*}}^{|\mathcal{I}|} \bigg(\frac{1}{\Delta_{\alpha_j}'}+\frac{2\epsilon}{(\Delta_{\alpha_j}')^2}\bigg)4\log(\sqrt{2}/\delta) + \sum_{j=1}^{|\mathcal{I}|}\Delta_{\max}\delta^2T\\
    \end{aligned}
\end{equation}
\textcolor{blue}{
We bound the summation of the first term ($\sum_{j=1,\alpha_j \neq \alpha^{*}}^{|\mathcal{I}|} \frac{1}{\Delta_{\alpha_j}'}$). 
We discuss three cases. 
\begin{itemize}
	\item \textbf{Case 1: }$\Delta_{\text{min}}<\epsilon$. We have
	\begin{align}
		 \sum_{j=1,\alpha_j \neq \alpha^{*}}^{|\mathcal{I}|} \frac{1}{\Delta_{\alpha_j}'} \leq \sum_{k=1}^{|\mathcal{I}|-1}\frac{1}{k\epsilon} \leq \frac{\log(\alpha(G))+1}{\epsilon} 
         = \frac{\log(\alpha(G))+1}{\max\{\epsilon,\Delta_{\min}\}}.
	\end{align}
	\item \textbf{Case 2: } $\Delta_{\text{min}}\geq \epsilon$, $\Delta_{\text{min}}$ and  $\epsilon$ are of same order. Since $\epsilon \leq \Delta_{\text{min}}$, the optimal arm $i^{*}$ is isolated in the feedback graph, i.e., it has no connections to other arms. Thus, we have $i^{*} \in \mathcal{I}$ and $i^{*}=\alpha^{*}$. 
    Therefore,
	\[ \sum_{j=1,\alpha_j \neq \alpha^{*}}^{|\mathcal{I}|} \frac{1}{\Delta_{\alpha_j}'} \leq \sum_{k=0}^{|\mathcal{I}|-2}\frac{1}{\Delta_{\text{min}}+ k\epsilon}.\] 
	Using the condition that  $\Delta_{\text{min}}\geq \epsilon$, we also have the following  bound:  
    \begin{equation}
        \sum_{j=1,\alpha_j \neq \alpha^{*}}^{|\mathcal{I}|} \frac{1}{\Delta_{\alpha_j}'}\leq \sum_{k=0}^{|\mathcal{I}|-2}\frac{1}{\epsilon+ k\epsilon} \leq \frac{\log(\alpha(G))+1}{\epsilon}.
    \end{equation}
    \item \textbf{Case 3: }$ \epsilon \ll \Delta_{\text{min}}$.  We have 
    \begin{align}
        \sum_{j=1,\alpha_j \neq \alpha^{*}}^{|\mathcal{I}|} \frac{1}{\Delta_{\alpha_j}'} & \leq \sum_{k=0}^{|\mathcal{I}|-2}\frac{1}{\Delta_{\text{min}}+ k\epsilon} \nonumber \\
        &\leq \frac{1}{\Delta_{\text{min}}} +  \frac{1}{\epsilon} \log\left(1 + \frac{(\alpha(G)-2)\epsilon}{\Delta_{\text{min}}}\right) \nonumber \\
        &\leq \frac{1+o(1)}{\Delta_{\text{min}}} \nonumber \\
        &\leq  \frac{\log(\alpha(G))+1}{\Delta_{\text{min}}} = \frac{\log(\alpha(G))+1}{\max\{\epsilon,\Delta_{\text{min}}\}}. 
    \end{align}
\end{itemize}
Then we bound the summation of the second term ($\sum_{j=1,\alpha_j \neq \alpha^{*}}^{|\mathcal{I}|}\frac{2\epsilon}{(\Delta_{\alpha_j}')^2}$).
\\
If $\Delta_{\min}<\epsilon$, we have
\[ \sum_{j=1,\alpha_j \neq \alpha^{*}}^{|\mathcal{I}|}\frac{2\epsilon}{(\Delta_{\alpha_j}')^2} \leq \sum_{j=1,\alpha_j \neq \alpha^{*}}^{|\mathcal{I}|} \frac{2\epsilon}{k^2\epsilon^2} \leq \frac{\pi^2}{3\epsilon}= \frac{\pi^2}{3\max\{\epsilon,\Delta_{\min}\}}.\]
If $\Delta_{\min} \geq \epsilon$, the optimal arm $i^{*}$ is isolated in the feedback graph. Then $i^{*} \in \mathcal{I}$ and $i^{*} = \alpha^{*}$. We have
\begin{equation}
    \sum_{j=1,\alpha_j \neq \alpha^{*}}^{|\mathcal{I}|}\frac{2\epsilon}{(\Delta_{\alpha_j}')^2} \leq \sum_{k=0}^{|\mathcal{I}|-2} \frac{2\epsilon}{(\Delta_{\min}+k\epsilon)^2} \leq \sum_{k=0}^{\infty} \frac{2\epsilon}{\epsilon^2(\Delta_{\min}/\epsilon+k)^2}=\frac{2}{\epsilon}\sum_{n=\Delta_{\min}/\epsilon}^{\infty}\frac{1}{n^2}.
\end{equation}
Define 
\[g(x):=x\sum_{n=x}^{\infty}\frac{1}{n^2} \quad (x \geq 1, x \in \mathbb{N}). \]
We have $g(1)=\frac{\pi^2}{6}$. For any $x \geq 1$,
\[ g(x+1)-g(x)=\sum_{n=x+1}^{\infty}\frac{1}{n^2} -\frac{1}{x} < \sum_{n=0}^{\infty}(\frac{1}{x+n}-\frac{1}{x+n+1}) -\frac{1}{x} = 0. \]
Hence, $g(x)$ is monotonically decreasing. We have
\[ \sum_{n=\Delta_{\min}/\epsilon}^{\infty}\frac{1}{n^2} = \frac{g(\Delta_{\min}/\epsilon)}{\Delta_{\min}/\epsilon} \leq \frac{g(1)}{\Delta_{\min}/\epsilon}=\frac{\pi^2\epsilon}{6\Delta_{\min}}. \]
Therefore,
\[ \sum_{j=1,\alpha_j \neq \alpha^{*}}^{|\mathcal{I}|}\frac{2\epsilon}{(\Delta_{\alpha_j}')^2} \leq \frac{\pi^2}{3\Delta_{\min}}=\frac{\pi^2}{3\max\{\epsilon,\Delta_{\min}\}}. \]
Define 
\begin{equation}
    C_1 := 
    \begin{cases}
        \frac{4(\log(2\gamma(G))+1)}{\max\{\epsilon,\Delta_{\text{min}}\}} + \frac{4\pi^2}{3\max\{\epsilon,\Delta_{\text{min}}\}}, & \Delta_{\min} < \epsilon \text{ or } \epsilon \ll \Delta_{\min}, \\
        \frac{4(\log(2\gamma(G))+1)}{\epsilon} + \frac{4\pi^2}{3\max\{\epsilon,\Delta_{\text{min}}\}}, &  \Delta_{\text{min}}\geq \epsilon, \Delta_{\text{min}} \text{ and }  \epsilon \text{ are of same order}.
    \end{cases}
\end{equation}
Therefore, we have
\begin{equation}
     \sum_{t=1}^{T}\sum_{i\notin N_{\alpha^{*}}}\Delta_i\mathbb{P}(i_t=i)  \leq C_1\log(\sqrt{2}/\delta) + \alpha(G)\Delta_{\max}\delta^2T.
\end{equation}
}

Now we focus on the second part in \cref{temp0}. 

For any $i \in N_{\alpha^{*}}$, we have $\Delta_i \leq 2\epsilon$. This means the gap between suboptimal and optimal arms is bounded. Therefore, 
this part can be seen as using UCB-N \citep{lykouris2020feedback} on the graph formed by $N_{\alpha^{*}}$.  We can directly use their results by adjusting some constant factors. Following Theorem 6 in \citep{lykouris2020feedback},  this part has a regret upper bound as
\begin{equation}
    \label{temp4}
16 \cdot \log(\sqrt{2}/\delta)\log_2T\max_{I \in \mathcal{I}(N_{\alpha^{*}})}\sum_{i \in I\backslash\{i^{*}\} }\frac{1}{\Delta_i} +2\epsilon TK\delta^2+2\epsilon.
\end{equation}

Combining \cref{temp3} and \cref{temp4}, by settings  $\delta=\frac{1}{T}$,  we have
\begin{equation}
    \begin{aligned}
R_T({\pi_{\text{Double}}}) &\leq C_1\log(\sqrt{2}/\delta)+ 32 \log(\sqrt{2}/\delta)\log(T)\max_{I \in \mathcal{I}(N_{\alpha^{*}})}\sum_{i \in I\backslash\{i^{*}\} }\frac{1}{\Delta_i} \\
&\quad + \alpha(G)\Delta_{\max}\delta^2T + 2\epsilon TK\delta^2+2\epsilon\\
&\leq C_1\log(\sqrt{2}T)+ 32\log^2(\sqrt{2}T)\max_{I \in \mathcal{I}(i^{*} )}\sum_{i \in I \backslash \{i^{*}\}}\frac{1}{\Delta_i} +\Delta_{\max}+4\epsilon,
    \end{aligned}
\end{equation}
where the last inequality uses the fact that $\mathcal{I}(N_{\alpha^{*}}) \subset \mathcal{I}(I^{*}) $. 

\subsection{Proofs of Corollary \ref{gapfree_ducb}}
\label{proof_corollary1}
Following the proofs of \cref{bound_ducb}, we have
\begin{equation}
    \begin{aligned}
        R_T(\pi_{\text{Double}}) &= \sum_{t=1}^{T}\sum_{i \in V} \Delta_i\mathbb{P}(i_t=i)\\
       & = \sum_{t=1}^{T}\sum_{i\notin N_{\alpha^{*}}}\Delta_i\mathbb{P}(i_t=i) + \sum_{t=1}^{T}\sum_{i\in N_{\alpha^{*}}}\Delta_i\mathbb{P}(i_t=i)\\
       &\leq C_1 \log(\sqrt{2}T) + \Delta_{\max}+ \sum_{t=1}^{T}\sum_{i\in N_{\alpha^{*}},\Delta_i > \Delta}\Delta_i\mathbb{P}(i_t=i)+ T\Delta \\
       &\leq C_1 \log(\sqrt{2}T) + \Delta_{\max} + \frac{64(\log(\sqrt{2}T))^2}{\Delta} + T\Delta + 4\epsilon \\
       &\leq C_1 \log(\sqrt{2}T) + 16\sqrt{T}\log(\sqrt{2}T) + \Delta_{\max}+4\epsilon,
    \end{aligned}
\end{equation}
where the last inequality holds since $\Delta=\sqrt{\frac{64(\log(\sqrt{2}T))^2}{T}}$.

\subsection{\textcolor{blue}{Proof of Theorem~\ref{bound_cucb}}}
Recall that $\mathcal{I}=\{\alpha_1,\alpha_2,...,\alpha^{*},...,\alpha_{|\mathcal{I}|}  \}$, where $\alpha^*$ denotes the arm whose neighborhood includes the optimal arm, i.e., $i^* \in N_{\alpha^{*}}$. 
Similar to the proof of Theorem~\ref{bound_ducb}, we divide the regret into two parts:
\begin{equation}
    \label{temp0_new}
\begin{aligned}
\sum_{t=1}^{T}\sum_{i \in V} \Delta_i\mathbbm{1}\{i_t=i\}= \sum_{t=1}^{T}\sum_{i\notin N_{\alpha^{*}}}\Delta_i\mathbbm{1}\{i_t=i\} + \sum_{t=1}^{T}\sum_{i\in N_{\alpha^{*}}}\Delta_i\mathbbm{1}\{i_t=i\}.
\end{aligned}
\end{equation}
The first part can be bounded by 
\[ C_1 \log(\sqrt{2}/\delta)+ \alpha(G)\Delta_{\max}T\delta^2. \]
For the second part, we divide the arms in $N_{\alpha^{*}}$ into two parts:
\[ E_1 = \{ i \in N_{\alpha^{*}}: \mu(i) < \mu(\alpha^{*})  \},  \]
\[ E_2 = \{ i \in N_{\alpha^{*}}: \mu(i) \geq \mu(\alpha^{*})  \}. \]
Hence, we have
\begin{align}
    \sum_{t=1}^{T}\sum_{i\in N_{\alpha^{*}}}\mathbbm{1}\{i_t=i\} &= \sum_{t=1}^T \sum_{i \in E_1} \mathbbm{1}\{i_t=i\} +  \sum_{t=1}^T \sum_{i \in E_2} \mathbbm{1}\{i_t=i\}\\
\end{align}
We first bound $  \sum_{t=1}^T \sum_{i \in E_1} \mathbbm{1}\{i_t=i\}$. Recall that $\Delta_{2\epsilon}= \min_{i,j \in G_{2\epsilon}}\{|\mu(i)-\mu(j)|\}$, where  $G_{2\epsilon}$ denote the subgraph with arms $\{ i \in V: \mu(i^{*})-\mu(i) < 2\epsilon \}$. Since $ N_{\alpha^{*}} \subset G_{2\epsilon}$, we have 
\begin{equation}
    \min_{i,j \in N_{\alpha^{*}}}\{|\mu(i)-\mu(j)|\} \geq \Delta_{2\epsilon}.
\end{equation}

Since $E_1$ is a complete graph, selecting any arm in $E_1$ will observe $\alpha^{*}$.
We have
\begin{equation}
    \sum_{t=1}^T \sum_{i \in E_1} \mathbbm{1}\{i_t=i\} \leq \frac{4\log(\sqrt{2}/\delta)}{(\Delta_{2\epsilon})^2} +  \sum_{t=1}^T \sum_{i \in E_1} \mathbbm{1}\{i_t=i,O_t(\alpha^{*}) > \frac{4\log(\sqrt{2}/\delta)}{(\Delta_{2\epsilon})^2}\}.
\end{equation}
If $O_t(\alpha^{*}) > \frac{4\log(\sqrt{2}/\delta)}{(\Delta_{2\epsilon})^2}$,
using the concentration inequality (Lemma~\ref{lemma1}), with probability at least $1-\delta^2$ we have
\begin{equation}
    \bar{\mu}_t(\alpha^{*})- \sqrt{\frac{\log(\sqrt{2}/\delta)}{O_t(\alpha^{*})}} >   \mu(\alpha^{*})-2\sqrt{\frac{\log(\sqrt{2}/\delta)}{O_t(\alpha^{*})}}\geq \mu(\alpha^{*})  - \Delta_{2\epsilon} \geq \mu(i),
\end{equation}
and 
\begin{equation}
    \bar{\mu}_t(i) - \sqrt{\frac{\log(\sqrt{2}/\delta)}{O_t(i)}} < \mu(i).
\end{equation}
Hence,
\[ \mathbb{P}\left(i_t=i,O_t(\alpha^{*}) > \frac{4\log(\sqrt{2}/\delta)}{(\Delta_{2\epsilon})^2}\right) \leq \delta^2. \]
Therefore,
\begin{align}
\label{eq:cucb}
     \sum_{t=1}^T \sum_{i \in E_1} \Delta_i \mathbb{P}(i_t=i) &\leq \frac{8\epsilon \log(\sqrt{2}/\delta)}{(\Delta_{2\epsilon})^2} +  \sum_{t=1}^T \sum_{i \in E_1} \Delta_i \mathbb{P}\left(i_t=i,O_t(\alpha^{*}) > \frac{4\log(\sqrt{2}/\delta)}{(\Delta_{2\epsilon})^2}\right) \nonumber \\
     &\leq  \frac{8\epsilon \log(\sqrt{2}/\delta)}{(\Delta_{2\epsilon})^2} + 2\epsilon T^2 \delta^2.
\end{align}
Using a similar method, 
\begin{align}
     \sum_{t=1}^T \sum_{i \in E_2} \Delta_i \mathbb{P}(i_t=i) &\leq \frac{8\epsilon \log(\sqrt{2}/\delta)}{(\Delta_{2\epsilon})^2} +  \sum_{t=1}^T \sum_{i \in E_2} \Delta_i \mathbb{P}\left(i_t=i,O_t(i^{*}) \geq \frac{4\log(\sqrt{2}/\delta)}{(\Delta_{2\epsilon})^2}\right)\\
     &\leq  \frac{8\epsilon \log(\sqrt{2}/\delta)}{(\Delta_{2\epsilon})^2} + 2\epsilon T^2 \delta^2.
\end{align}
Thus, the second part in \cref{temp0_new} can be bounded as
\[ \frac{16\epsilon \log(\sqrt{2}/\delta)}{(\Delta_{2\epsilon})^2} + 4\epsilon T^2 \delta^2 \]

Let $\delta=\frac{1}{T}$, 
the total regret upper bound is
\[  C_1 \log(\sqrt{2}T) + \frac{16\epsilon \log(\sqrt{2}T)}{(\Delta_{2\epsilon})^2} + 4\epsilon + \Delta_{\mathrm{max}}.  \]

\subsection{\textcolor{blue}{Proof of Lemma~\ref{lemma_layer}}}

\begin{unnumberedlemma}
 Let $V^{*}$ denote an arbitrary complete subgraph that contains the optimal arm. Let $\delta=\frac{1}{T}$. Then 
    \begin{equation}
         \sum_{t=1}^T\sum_{i \in V^{*} }\mathbb{P}(i_t=i)\Delta_i \leq  8\sqrt{T}\log(\sqrt{2}T) + \epsilon. 
    \end{equation}
\end{unnumberedlemma}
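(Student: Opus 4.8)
The plan is to exploit the one structural fact that makes a clique special: since $V^{*}$ is a complete subgraph containing $i^{*}$, \emph{every} round in which some arm of $V^{*}$ is pulled simultaneously yields an observation of $i^{*}$ (because $i_t$ and $i^{*}$ are adjacent, so $i^{*}\in N_{i_t}$). Consequently, writing $m_t$ for the number of rounds $s\le t$ with $i_s\in V^{*}$, we always have $O_t(i^{*})\ge m_{t-1}$. I would combine this with the conservative (lower-confidence-bound) rule $i_t=\arg\max_{i\in V^{*}}\bar\mu_t(i)-c_t(i)$, $c_t(j):=\sqrt{\log(\sqrt2/\delta)/O_t(j)}$, that governs the selection once the algorithm has restricted attention to $V^{*}$. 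First, I would introduce the clean event $\mathcal E$ that $|\bar\mu_t(i)-\mu(i)|\le c_t(i)$ holds for every $i\in V^{*}$ and every round $t$; by Lemma~\ref{lemma1} and a union bound over the $\le|V^{*}|\le T$ arms and their $\le T$ possible observation counts, $\mathbb P(\mathcal E^{c})\le|V^{*}|T\delta^{2}$, so with $\delta=1/T$ this event contributes only a lower-order amount (at most $O(\epsilon)$, since every arm of $V^{*}$ is $\epsilon$-close to the optimum) that I will fold into the constants.

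Working on $\mathcal E$, suppose $i_t=i\in V^{*}$ with $i\ne i^{*}$. Since the rule preferred $i$ to $i^{*}$, $\bar\mu_t(i)-c_t(i)\ge\bar\mu_t(i^{*})-c_t(i^{*})$, and on $\mathcal E$ this chains to $\mu(i)\ge\bar\mu_t(i)-c_t(i)\ge\bar\mu_t(i^{*})-c_t(i^{*})\ge\mu(i^{*})-2c_t(i^{*})$, i.e.\ $\Delta_i\le 2c_t(i^{*})=2\sqrt{\log(\sqrt2/\delta)/O_t(i^{*})}\le 2\sqrt{\log(\sqrt2/\delta)/m_{t-1}}$ by the clique observation, valid whenever $m_{t-1}\ge1$; for the very first $V^{*}$-pull ($m_{t-1}=0$) I just use $\Delta_i<\epsilon$, which holds because $i$ and $i^{*}$ lie in the same clique. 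Listing the rounds with $i_{t_k}\in V^{*}$ as $t_1<\dots<t_M$ ($M\le T$, $m_{t_k-1}=k-1$), on $\mathcal E$
\[
\sum_{k=1}^{M}\Delta_{i_{t_k}}\;\le\;\epsilon+\sum_{k=2}^{M}2\sqrt{\frac{\log(\sqrt2/\delta)}{k-1}}\;\le\;\epsilon+2\sqrt{\log(\sqrt2/\delta)}\sum_{j=1}^{M-1}\frac{1}{\sqrt j}\;\le\;\epsilon+4\sqrt{T\log(\sqrt2/\delta)},
\]
using $\sum_{j=1}^{n}1/\sqrt j\le 2\sqrt n$. Taking expectations, adding the $O(\epsilon)$ contribution of $\mathcal E^{c}$, and setting $\delta=1/T$ yields $\sum_t\sum_{i\in V^{*}}\mathbb P(i_t=i)\Delta_i\le 4\sqrt{T\log(\sqrt2 T)}+O(\epsilon)\le 8\sqrt T\log(\sqrt2 T)+\epsilon$, where the last step uses $\sqrt{\log(\sqrt2 T)}\le 2\log(\sqrt2 T)$ and the slack between the constants $4$ and $8$ to absorb the residual terms.

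The main care point is ensuring the concentration failure costs only $O(\epsilon)$ rather than $\Theta(\epsilon T)$. The right device is the standard peeling bound $\sum_t\mathbb P\!\big(i_t=i,\ \bar\mu_t(i^{*})<\mu(i^{*})-c_t(i^{*})\big)\le\sum_{n\ge1}\mathbb P\!\big(\bar X_n(i^{*})<\mu(i^{*})-\sqrt{\log(\sqrt2/\delta)/n}\big)\le T\delta^{2}$ (and analogously for $i$ itself), multiplied by $\Delta_i<\epsilon$ and summed over the $\le T$ arms of $V^{*}$, giving $\le\epsilon$ once $\delta=1/T$ — exactly as the $\delta^{2}T$-type remainders are handled in the proof of Theorem~\ref{bound_ducb}. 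The only other thing worth stating explicitly is the clique bound $\Delta_i<\epsilon$ for all $i\in V^{*}$, which both controls the first $V^{*}$-pull and guarantees the additive term is $\epsilon$ rather than $\Delta_{\max}$.
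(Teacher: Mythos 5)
Your proof is correct, and it takes a genuinely different route from the paper's. The paper first splits $V^{*}$ by a threshold $\Delta=4\log(\sqrt2T)/\sqrt T$ (small-gap arms contribute $T\Delta$), then peels the remaining arms into dyadic gap bands $V_\phi$ with $\Delta_i\in(2^{-\phi}\epsilon,2^{-\phi+1}\epsilon)$, introduces for each band a stopping time $\tau_\phi$ after which the cumulative pulls inside $V_\phi$ force $O_t(i^{*})>4\log(\sqrt2/\delta)/\Delta_i^2$, and argues that the LCB rule then rejects band-$\phi$ arms with probability $\le\delta^2$; summing $16\log(\sqrt2/\delta)/\Delta$ over the $\le\log T$ bands and balancing against $T\Delta$ gives $8\sqrt T\log(\sqrt2T)$. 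You use the same two ingredients — the clique forces every $V^{*}$-pull to observe $i^{*}$, and the LCB comparison against $i^{*}$ bounds $\Delta_{i_t}$ by $2c_t(i^{*})$ — but you convert them directly into the per-round bound $\Delta_{i_{t_k}}\le 2\sqrt{\log(\sqrt2/\delta)/(k-1)}$ at the $k$-th clique pull and sum the $1/\sqrt k$ series, with no gap bands, no threshold, and no stopping times. This is more elementary and in fact tighter: you get $4\sqrt{T\log(\sqrt2T)}$, a $\sqrt{\log T}$ improvement over the stated bound, which comfortably absorbs your slightly larger additive remainder. Your one delicate point — that a single global clean event over all $|V^{*}|$ arms and $T$ rounds would charge the failure at $\epsilon|V^{*}|$ rather than $O(\epsilon)$ — is real, but you correctly identify and apply the per-arm, per-round decomposition $\sum_t\mathbb P(i_t=i,\ \text{concentration fails})\le T\delta^2$ that the paper itself uses, so the argument closes.
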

\begin{proof}
For any arm $i \in V^{*}$, $\Delta_i \in [0,\epsilon)$. Let $\Delta:=\frac{4\log(\sqrt{2}T)}{\sqrt{T}}$. 
We divide the arms into two parts:
\[ S_1 = \{i \in V^{*}:\Delta_i \leq \Delta \}, \]
\[  S_2 = \{i \in V^{*}:\Delta_i > \Delta \}. \]
The arms in $S_1$ can only incur regret up to $T\Delta$. We focus the regret with $S_2$.  
Let $V_{\phi}$ denote the arms in $S_2$ with gap $\Delta_i \in (2^{-\phi}\epsilon,2^{-\phi+1}\epsilon)$, where $\phi = 1,...,\frac{\log(\epsilon/\Delta)}{\log 2}$.   
Define the time 
\begin{equation}
    \tau_{\phi} := \min \{t: \sum_{s=1}^t \sum_{i \in V_{\phi}} \mathbbm{1}\{i_s = i \} > \frac{4\log(\sqrt{2}/\delta)}{2^{-2\phi}\epsilon^2} \}.
\end{equation}

Since $V^{*}$ is a complete graph containing the optimal arm, selecting any arm in $V_{\phi}$ reveals the optimal arm. 
If $t \geq \tau_{\phi}$, for any $i \in V_{\phi} $, we have 
\[ O_t(i^{*}) \geq \frac{4\log(\sqrt{2}/\delta)}{2^{-2\phi}\epsilon^2}  > \frac{4\log(\sqrt{2}/\delta)}{\Delta_i^2}.  \]

Using Lemma~\ref{lemma1},  the following inequality holds with probability at least $1-\delta^2$,
\begin{equation}
    \bar{\mu}_t(i^{*})- \sqrt{\frac{\log(\sqrt{2}/\delta)}{O_t(i^{*})}} >   \mu(i^{*})-2\sqrt{\frac{\log(\sqrt{2}/\delta)}{O_t(i^{*})}}\geq \mu(i^{*})  - \Delta(i)=\mu(i),
\end{equation}
and 
\begin{equation}
    \bar{\mu}_t(i) - \sqrt{\frac{\log(\sqrt{2}/\delta)}{O_t(i)}} < \mu(i).
\end{equation}
Hence, if $t\geq \tau_{\phi}$, $\mathbb{P}(i_t =i) \leq \delta^2$ for any $i \in V_{\phi}$.
We have

\begin{align}
    \sum_{t=1}^{T}\sum_{i \in V_{\phi} }\mathbb{P}(i_t=i)\Delta_i &= \sum_{t=1}^{\tau_{\phi}-1}\sum_{i \in V_{\phi} }\mathbb{P}(i_t=i)\Delta_i + \sum_{t=\tau_{\phi}}^{T}\sum_{i \in V_{\phi} }\mathbb{P}(i_t=i)\Delta_i \nonumber \\
    &\leq   \max_{i \in V_{\phi}} \frac{4\log(\sqrt{2}/\delta) \Delta_i}{2^{-2\phi}\epsilon^2} + \epsilon |V_{\phi}|T\delta^2 \nonumber \\
    &\leq \max_{i \in V_{\phi}} \frac{16\log(\sqrt{2}/\delta)}{\Delta_i} + \epsilon |V_{\phi}|T\delta^2 \nonumber \\
    &\leq \frac{16\log(\sqrt{2}/\delta)}{\Delta} + \epsilon |V_{\phi}|T\delta^2
\end{align}
where the penultimate  inequality uses the fact that $ \frac{\Delta_i}{2^{-2\phi}\epsilon^2} \leq \frac{4}{\Delta_i} $. 

Summing over all $\phi =1,...,\frac{\log(\epsilon/\Delta)}{\log 2}$, we have
\[ \sum_{t=1}^{T} \sum_{i \in S_2} \mathbb{P} ( i_t=i) \Delta_i \leq \frac{16\frac{\log(\epsilon/\Delta)}{\log 2}\log(\sqrt{2}/\delta)}{\Delta} + \epsilon T^2\delta^2. \]

Since $\epsilon$ is constant, for large $T$, we have $ T \geq  \frac{e^{\frac{\epsilon}{4}}}{\sqrt{2}}$. Substituting $\Delta = \frac{4\log(\sqrt{2}T)}{\sqrt{T}}$,  we have 
\[ \frac{\log(\epsilon/\Delta)}{\log 2} \leq \log T.  \]
Therefore, 
\begin{align}
     \sum_{t=1}^{T} \sum_{i \in V^{*}} \mathbb{P} ( i_t=i) \Delta_i &\leq T\Delta + \frac{16\log^2(\sqrt{2}T)}{\Delta} + \epsilon \nonumber \\
     &= 8\sqrt{T}\log(\sqrt{2}T) + \epsilon.
\end{align}

\end{proof}

\subsection{\textcolor{blue}{Proof of Theorem~\ref{bound_cucb_new}}}
The regret can be divided into two parts: 
\begin{equation}
\label{qh_temp1}
\begin{aligned}
\sum_{t=1}^{T}\sum_{i \in V} \Delta_i\mathbbm{1}\{i_t=i\}= \sum_{t=1}^{T}\sum_{i\notin N_{\alpha^{*}}}\Delta_i\mathbbm{1}\{i_t=i\} + \sum_{t=1}^{T}\sum_{i\in N_{\alpha^{*}} }\Delta_i\mathbbm{1}\{i_t=i\}.
\end{aligned}
\end{equation}
Similar to Double-UCB, the first part of the regret arises from selecting a neighborhood $N_{j_t}$ that does not contain the optimal arm, which can be bounded by 
\begin{equation}
    C_1 \log(\sqrt{2}T) + \Delta_{\mathrm{max}}.
\end{equation}

For the second part in \cref{qh_temp1}, we analyze the  regret according to the following two cases. 

\begin{enumerate}
    \item \textbf{Case 1: }$\Delta_{\min} < \epsilon$. If $N_{\alpha^{*}}$ is not a complete graph,  the algorithm  proceeds to Steps 12-14. 
    We have the following bound,
    \begin{align}
    \label{qh_temp3}
     \sum_{t=1}^{T}\sum_{i\in N_{\alpha^{*}}}\Delta_i\mathbbm{1}\{i_t=i\} &\leq \sum_{t=1}^{T} \sum_{i\in N_{\alpha^{*}}} \Delta_i\mathbbm{1}\{i_t=i,j_t' = \alpha_1^{*}\} + \sum_{t=1}^{T}\sum_{i\in  N_{\alpha^{*}} }\Delta_i\mathbbm{1}\{i_t=i,j_t' = \alpha_2^{*}\}  \nonumber \\
     &\leq 2\epsilon \sum_{t=1}^{T}\mathbbm{1}\{j_t' = \alpha_1^{*}\} + \sum_{t=1}^{T}\sum_{i\in N_{\alpha^{*}_2} \bigcap N_{\alpha^{*}} }\Delta_i\mathbbm{1}\{i_t=i\} 
    \end{align}
    The regret of the first part can be bounded as 
    \begin{align}
        2\epsilon \sum_{t=1}^T \mathbb{P}(j_t'=\alpha_1^{*}) &\leq \frac{8\log(\sqrt{2}T)}{\epsilon} +  2\epsilon \sum_{t=1}^T\mathbb{P}(j_t'=\alpha_1^{*},O_t(\alpha_1^{*}) > 4\log(\sqrt{2}T)/\epsilon^2)  \nonumber\\
        &\leq \frac{8\log(\sqrt{2}T)}{\epsilon} + 2\epsilon/T .
    \end{align}
    Using Lemma~\ref{lemma_layer}, 
     the second part in \cref{qh_temp3} can be bounded by
     \begin{equation}
         \sum_{t=1}^{T}\sum_{i\in N_{\alpha^{*}_2} \bigcap N_{\alpha^{*}} }\Delta_i \mathbb{P}(i_t=i) \leq  8\sqrt{T}\log(\sqrt{2}T) + \epsilon. 
     \end{equation}
     Substitute into \cref{qh_temp3}, we have
     \begin{equation}
         \label{new_temp}
         \sum_{t=1}^{T}\sum_{i\in N_{\alpha^{*}}}\Delta_i\mathbb{P}(i_t=i) \leq  8\sqrt{T}\log(\sqrt{2}T) + \frac{8\log(\sqrt{2}T)}{\epsilon} +3\epsilon. 
     \end{equation}

 If $N_{\alpha^{*}}$ forms a complete graph, the algorithm proceeds to Step 16. The analysis is identical to the case where $N_{\alpha^{*}}$ is not complete, except that the term $8\log(\sqrt{2}T)/\epsilon$ disappears. Hence, \cref{new_temp} still holds as a worst-case upper bound.

    \item \textbf{Case 2: } $\Delta_{\min} \geq \epsilon $. The optimal arm $i^{*}$ is isolated in the feedback graph, we have $i^{*} \in \mathcal{I}$ and $i^{*}=\alpha^{*}$. Hence, $N_{\alpha^{*}}=\{i^{*}\}$ 
    \begin{equation}
         \sum_{t=1}^{T}\sum_{i\in N_{\alpha^{*}}}\Delta_i\mathbb{P}(i_t=i) =0.
     \end{equation}

\end{enumerate}
Therefore, we have
\begin{align}
    R_T(\pi_{\text{Cons}}) \leq  8\sqrt{T}\log(\sqrt{2}T)+ 
  (C_1 +\frac{8}{\epsilon})\log(\sqrt{2}T) + 3\epsilon +  \Delta_{\mathrm{max}}
\end{align}

\subsection{Proofs of Theorem \ref{bound_ucbn}}
\label{proof_bound_ucb}
We  discuss $\Delta_i$ in intervals 
\[ [0,\epsilon),[\epsilon,2\epsilon),...,[k\epsilon,(k+1)\epsilon),...
\]
Let $V_k$ denote the arms with $\Delta_i \in [k\epsilon,(k+1)\epsilon)$. We have
\begin{align}
&\sum_{t=1}^T \sum_{i \in V_k}\Delta_i\mathbb{P}(i_t=i) \nonumber \\
&=
    \sum_{t=1}^T \sum_{i \in V_k}\Delta_i \mathbb{P}(i_t= i,O_t(i) \geq \frac{4\log(\sqrt{2}/\delta)}{k^2\epsilon^2}) +  \sum_{t=1}^T \sum_{i \in V_k}\Delta_i \mathbb{P}(i_t= i,O_t(i) < \frac{4\log(\sqrt{2}/\delta)}{k^2\epsilon^2}) \nonumber \\
    &\leq  \sum_{t=1}^T \sum_{i \in V_k}\Delta_i \mathbb{P}(i_t= i,O_t(i) \geq \frac{4\log(\sqrt{2}/\delta)}{k^2\epsilon^2}) + \frac{4(k+1)\log(\sqrt{2}/\delta)}{k^2\epsilon} \nonumber \\
    &\leq \Delta_{\max}T|V_k|\delta^2 + \frac{4\log(\sqrt{2}/\delta)}{k\epsilon} + \frac{4\log(\sqrt{2}/\delta)}{k^2\epsilon}, \label{temp_ucbn}
\end{align}
where the first inequality is because $V_k$ is a clique. Selecting any arm in $V_k$ will observe all arms.  The second inequality follows from the fact that $O_t(i) \geq \frac{4\log(\sqrt{2}/\delta)}{k^2\epsilon^2} \geq \frac{4\log(\sqrt{2}/\delta)}{(\Delta_i)^2}$. Then by the UCB selection rule, the event $\{i_t=i\}$ holds with probability at most $\delta^2$. 

Since each interval contributes at most one element to the independent set, the number of intervals is bounded by $\alpha(G)$.
Similar to the proof of Theorem~\ref{bound_ducb},  the third term in \cref{temp_ucbn} can be bounded by $\frac{4\pi^2}{3\max\{\epsilon,\Delta_{\min}\}}$. For the second term,  
we discuss three cases. 
\begin{itemize}
	\item \textbf{Case 1: }$\Delta_{\text{min}}<\epsilon$. We have
	\begin{align}
		 \sum_{k=1}^{\alpha(G)-1}\frac{1}{k\epsilon} \leq \frac{\log(\alpha(G))+1}{\epsilon} \nonumber 
         = \frac{\log(\alpha(G))+1}{\max\{\epsilon,\Delta_{\min}\}}.
	\end{align}
	\item \textbf{Case 2: } $\Delta_{\text{min}}\geq \epsilon$, $\Delta_{\text{min}}$ and  $\epsilon$ are of same order. The second term in \cref{temp_ucbn} can be bounded by 
	\[ \sum_{k=0}^{\alpha(G)-2}\frac{1}{\Delta_{\text{min}}+ k\epsilon} \leq \sum_{k=0}^{\alpha(G)-2}\frac{1}{\epsilon+ k\epsilon} \leq \frac{\log(\alpha(G))+1}{\epsilon}.\] 
	
    \item \textbf{Case 3: }$ \epsilon \ll \Delta_{\text{min}}$.  The second term in \cref{temp_ucbn} can be bounded by 
    \begin{align}
        \sum_{k=0}^{\alpha(G)-2}\frac{1}{\Delta_{\text{min}}+ k\epsilon} 
        &\leq \frac{1}{\Delta_{\text{min}}} +  \frac{1}{\epsilon} \log\left(1 + \frac{(\alpha(G)-2)\epsilon}{\Delta_{\text{min}}}\right) \nonumber \\
        &\leq \frac{1+o(1)}{\Delta_{\text{min}}} \nonumber \\
        &\leq  \frac{\log(\alpha(G))+1}{\Delta_{\text{min}}} = \frac{\log(\alpha(G))+1}{\max\{\epsilon,\Delta_{\text{min}}\}}. 
    \end{align}
\end{itemize}

Therefore, let $\delta=\frac{1}{T}$
\[ \sum_{t=1}^T \sum_{i \in V_k}\Delta_i\mathbb{P}(i_t=i) \leq C_1\log(\sqrt{2}T) + \Delta_{\max}. \]

Recall that $G_{\epsilon}$ denote the subgraph with arms $\{ i \in V: \mu(i^{*})-\mu(i) < \epsilon \}$. The set of all independent dominating sets of graph $G_{\epsilon}$ is denoted as $\mathcal{I}(G_{\epsilon})$.
The regret for $\Delta_i$ in $[0,\epsilon)$ can be bounded  as 
\[ 32 (\log(\sqrt{2}T))^2 \max_{I \in \mathcal{I}(G_{\epsilon} )}\sum_{i \in I \backslash \{i^{*}\}}\frac{1}{\Delta_i} + 2\epsilon .\] 

Similar to Corollary~\ref{gapfree_ducb}, we can obtain the gap-free upper bound as
\[ R_T(\pi_{\text{UCB-N}}) \leq  16\sqrt{T}\log(\sqrt{2}T)+ C_1\log(\sqrt{2}T) + \Delta_{\max} +2\epsilon.\]


\subsection{Proofs of Theorem \ref{bound_ducb_bl}}
\label{proof_bound2}
Recall that $v$ denotes the  bandit instance generated from $\mathcal{P}$ with length $T$. And 
$\mathcal{I}_t$ denotes the independent set at time $t$ and  $\alpha_t^{*}\in \mathcal{I}_t$ denotes the arm whose neighborhood includes the optimal arm $i_t^{*}$. 

Since the optimal arm may change over time, this leads to a time-varying gap. We denote the new gap as $\Delta_t(i)$. Therefore, the analysis method in \cref{bound_ducb} is no longer applicable here. 
The regret can also be divided into two parts:
\begin{equation}
    \label{temp5}
\begin{aligned}
\mathbb{E}\Bigg[\sum_{t=1}^{T}\sum_{i \in K(t)} \Delta_t(i)\mathbbm{1}\{i_t=i\} \Bigg]= \underbrace{\mathbb{E}\Bigg[ \sum_{t=1}^{T}\sum_{i\notin N_{\alpha_t^{*}}}\Delta_t(i)\mathbbm{1}\{i_t=i\}\Bigg]}_{(A)} + \underbrace{ \mathbb{E}\Bigg[ \sum_{t=1}^{T}\sum_{i\in N_{\alpha_t^{*}}}\Delta_t(i)\mathbbm{1}\{i_t=i\}\Bigg]}_{(B)}.
\end{aligned}
\end{equation}
We focus on $(A)$ first:
\begin{equation}
    \label{temp8}
    \begin{aligned}
    (A)&= \mathbb{E}_{v \sim \mathcal{P}}\Bigg[\mathbb{E}\Bigg[ \sum_{t=1}^{T}\sum_{i\notin N_{\alpha_t^{*}}}\Delta_t(i)\mathbbm{1}\{i_t=i\} \Big| v \Bigg] \Bigg]\\
    &=\mathbb{E}_{v \sim \mathcal{P}}\Bigg[\mathbb{E}\Bigg[ \sum_{t=1}^{T}\Delta_t(i)\mathbbm{1}\{i_t=i,j_t\neq \alpha_t^{*}\} \Big| v \Bigg] \Bigg]\\
    &\leq \mathbb{E}_{v \sim \mathcal{P}}\Bigg[\mathbb{E}\Bigg[ \sum_{t=1}^{T}\Delta_{\max}^T\mathbbm{1}\{j_t\neq \alpha_t^{*}\} \Big| v \Bigg] \Bigg].  \\
    \end{aligned}
\end{equation}
Given a fixed $v$, $\mathcal{I}_T$ is deterministic. 
Since the gap between optimal  and suboptimal arms may be varying over time, we define 
\[\Delta_{\alpha_j}'':=\min_{t \in [T]}\{ \mu(\alpha_t^{*})-\mu(\alpha_j): \alpha_j \in \mathcal{I}_T \text{ and } \mu(\alpha_t^{*})-\mu(\alpha_j)>0 \}.\] 
Then, $\Delta_{\alpha_j}'' \geq \epsilon$.

Following the proofs of \cref{bound_ducb},
for any $\alpha_j \in \mathcal{I}_T \neq \alpha_t^{*}$,  the probability of the algorithm selecting it  will be less than $\delta^2$ after it has been selected $\frac{4\log(\sqrt{2}/\delta)}{\epsilon^2}$ times. Therefore, the inner expectation of \cref{temp8} is bounded as
\begin{equation}
    |\mathcal{I}_T| \Delta_{\max}^T( \frac{4\log(\sqrt{2}/\delta)}{\epsilon^2} + T\delta^2 ).
\end{equation}
Plugging into $(A)$, we get
\begin{equation}
\label{temp6}
    \begin{aligned}
    (A) 
    &\leq \mathbb{E}_{v \sim \mathcal{P}}\Bigg[   |\mathcal{I}_T| \Delta_{\max}^T( \frac{4\log(\sqrt{2}/\delta)}{\epsilon^2} + T\delta^2 )\Bigg]    \\
    &= \mathbb{E}\big[\alpha(G_T^{\mathcal{P}}) \Delta_{\max}^T\big] ( \frac{4\log(\sqrt{2}/\delta)}{\epsilon^2} + T\delta^2 )\\  
    &\leq \sqrt{\mathbb{E}\big[(\alpha(G_T^{\mathcal{P}}))^2\big] \mathbb{E}\big[ (\Delta_{\max}^T)^2\big]} ( \frac{4\log(\sqrt{2}/\delta)}{\epsilon^2} + T\delta^2 ). 
    \end{aligned}
\end{equation}

Now we consider part $(B)$:
\begin{equation}
    \label{(B)}
    \begin{aligned}
        (B)
        &=\mathbb{E}_{v \sim \mathcal{P}}\Bigg[\mathbb{E}\Bigg[ \sum_{t=1}^{T}\sum_{i\in N_{\alpha_t^{*}}}\Delta_i\mathbbm{1}\{i_t=i\}\Big|v \Bigg]\Bigg]. \\
    \end{aligned}
\end{equation}
Recall that  $\mathcal{A}=\{a_t:t \in [T],a_t \in N_{\alpha_t^{*}}\}, M= \sum_{t=1}^{T}\mathbbm{1}\{ |\mu(a_t)-\mu(i_t^{*})| \leq 2\epsilon \}$, and  $|\mathcal{A}| \leq M $.
Given a fixed instance $v$, we divide the rounds into $J+1$ parts (where $t_0=1,t_{J+1}=T$):
\[[1,t_1],(t_1,t_2],...,(t_{J},T],\]  
where we use $(t_j,t_{j+1})$ denote the set $\{t_j+1,t_j+2,...,t_{j+1} \}$. 
\textcolor{blue}{This partition satisfies that for any $t \in (t_j,t_{j+1}]$, 
$\alpha_t^{*}$ is stationary.  We can denote $\alpha_t^{*}$ as $\alpha_j$. Note that the optimal arm $i_t^{*}$ may change within interval $(t_j,t_{j+1}]$. }

We first provide an intuitive explanation for the analysis in part (B).
Let's focus on the intervals $(t_j,t_{j+1}]$, the analysis for other intervals is similar. Assume that the optimal arm remains unchanged within the interval $(t_j,t_{j+1}]$.
The best case  is that all arms in $N_{\alpha_j}$ are arrived  at the beginning (this is impossible because our setting only allows one arm to arrive per round). In this case, the regret for this part is equivalent to the regret of using the UCB algorithm on the subgraph formed by $N_{\alpha_j}$ for $t_{j+1}-t_j$ rounds. The independence number of the subgraph formed by $ N_{\alpha_j}$ is at most $2$, which leads to a regret upper bound independent of the number of arms arriving. However, we are primarily concerned with the worst case. The worst case  is that the algorithm cannot benefit from the graph feedback at all. That is, the algorithm spends $O(\frac{\log(T)}{(\Delta_1)^2})$ rounds  distinguishing the optimal arm from the first arriving suboptimal arm $a_1$. After this process, the second suboptimal arm $a_2$ arrives, and again $O(\frac{\log(T)}{(\Delta_2)^2})$ rounds are spent distinguishing the optimal arm from this arm, and so on $\ldots$ Therefore, intuitively, the regret on the interval $(t_j,t_{j+1})$ can be upper bounded by $ O(\sum_{i \in N_{\alpha_j}} \frac{\log T}{\Delta_i})$.

\textcolor{blue}{
For any arm $i \in N_{\alpha_j}$, if arm $i$ has not arrived at time $t$, we set $\mathbbm{1}\{i_t =i\}=0$. We bound the regret in interval $(t_{j},t_{j+1}]$ by two cases:
\begin{enumerate}
    \item The optimal arm $i_t^{*}$ is stationary among $(t_{j},t_{j+1}]$. Then $ \Delta_i = \mu(i_t^{*})-\mu(i)$ is also  stationary.  Using standard UCB analysis techniques, we can obtain 
    \begin{equation*}
        \sum_{t=t_j}^{t_{j+1}} \Delta_i \mathbb{P}(i_t = i) \leq \frac{4\log(\sqrt{2}/\delta)}{\Delta_i} +2\epsilon (t_{j+1}-t_j)\delta^2.
    \end{equation*}
    \item The optimal arm $i_t^{*}$  changes within the interval $(t_{j},t_{j+1}]$.  Without loss of generality, we assume that the optimal arm changes only once at time $t_{j'}$; the case with multiple changes can be handled similarly. Let $\text{Num}_i$ denote the number of times arm $i$ is observed within time steps $(t_j,t_{j'})$.  Let $\Delta'_i$ denote the suboptimal gap within time steps $(t_{j'},t_{j+1})$. We further divide the analysis into three cases. 
    \begin{itemize}
        \item $\text{Num}_i \geq \frac{4\log(\sqrt{2}/\delta)}{(\Delta_i)^2} $. Based on the UCB rule, arm $i$ can be selected with probability at most $\delta^2$.  When $t > t_{j'}$, the suboptimal gap increases to $ \Delta'_i$. The probability of arm $i$ being selected is still at most $\delta^2$. Since $\Delta'_i < 2\epsilon$, the regret caused by arm $i$ is at most 
        \[ 2\epsilon (t_{j+1}-t_j) \delta^2 + \frac{4\log(\sqrt{2}/\delta)}{\Delta_i}.\]
        \item   $\frac{4\log(\sqrt{2}/\delta)}{(\Delta'_i)^2}  \leq   \text{Num}_i < \frac{4\log(\sqrt{2}/\delta)}{(\Delta_i)^2} $. Since $ \text{Num}_i < \frac{4\log(\sqrt{2}/\delta)}{(\Delta_i)^2} $, the regret incurred in interval $(t_j,t_{j'})$ can be bounded by $\frac{4\log(\sqrt{2}/\delta)}{\Delta_i}  $. Moreover, since $\frac{4\log(\sqrt{2}/\delta)}{(\Delta'_i)^2}  \leq   \text{Num}_i$, the probability of selecting the suboptimal arm $i$ in interval $(t_{j'},t_{j+1})$ is less than $\delta^2$, resulting in at most $2\epsilon (t_{j+1}-t_j) \delta^2$ regret. Therefore, the regret in interval $(t_j,t_{j+1})$ is upper bounded by 
         \[ 2\epsilon (t_{j+1}-t_j) \delta^2 + \frac{4\log(\sqrt{2}/\delta)}{\Delta_i}.\]
         \item  $\text{Num}_i < \frac{4\log(\sqrt{2}/\delta)}{(\Delta'_i)^2}$. If the number of observations of arm $i$ within $(t_j,t_{j+1})$ is less than $\frac{4\log(\sqrt{2}/\delta)}{(\Delta'_i)^2} $, the regret can be bounded by $\frac{4\log(\sqrt{2}/\delta)}{\Delta'_i}$. If there exists some time step $t_{j''}>t_{j'}$ such that the number of observations of arm $i$ within $(t_j,t_{j''})$ is greater than $\frac{4\log(\sqrt{2}/\delta)}{(\Delta'_i)^2} $,  the probability of selecting the suboptimal arm $i$ in interval $(t_{j''},t_{j+1})$ is less than $\delta^2$. Therefore, the regret can also be bounded by 
         \[ 2\epsilon (t_{j+1}-t_j) \delta^2 + \frac{4\log(\sqrt{2}/\delta)}{\Delta_i}.\]
    \end{itemize}
\end{enumerate}
In summary, we have
    \begin{equation}
    \label{eq1}
        \sum_{t=t_j}^{t_{j+1}} \Delta_i \mathbb{P}(i_t = i) \leq \frac{4\log(\sqrt{2}/\delta)}{\Delta_i} +2\epsilon (t_{j+1}-t_j)\delta^2.
    \end{equation}
}
\textcolor{blue}{
Therefore, 
\begin{align}
    \sum_{t=t_j}^{t_{j+1}}\sum_{i \in N_{\alpha_t^{*}}}\Delta_i \mathbb{P}( i_t=i) &= \sum_{i \in N_{\alpha_j}}\sum_{t=t_j}^{t_{j+1}}\Delta_i \mathbb{P}( i_t=i) \nonumber \\
    &\leq \sum_{i \in N_{\alpha_j},\Delta_i >\Delta}\sum_{t=t_j}^{t_{j+1}}\Delta_i \mathbb{P}( i_t=i) +(t_{j+1}-t_j)\Delta \nonumber \\
    &\leq \frac{4|N_{\alpha_j}|\log(\sqrt{2}/\delta)}{\Delta} +2\epsilon |N_{\alpha_j}|(t_{j+1}-t_j)\delta^2 + (t_{j+1}-t_j)\Delta \nonumber \\
    &\leq 4\sqrt{|N_{\alpha_j}|(t_{j+1}-t_j)\log(\sqrt{2}/\delta)} + +2\epsilon |N_{\alpha_j}|(t_{j+1}-t_j)\delta^2,
\end{align}
where the last inequality comes from $\Delta = 2\sqrt{\frac{|N_{\alpha_j}|\log(\sqrt{2}/\delta)}{t_{j+1}-t_j}}$.
Summing over all $j$ and setting $\delta = \frac{1}{T}$,
the inner expectation in \cref{(B)} can be bounded as}
\textcolor{blue}{
\begin{equation}
    \label{temp16}
    \begin{aligned}
     \sum_{t=1}^{T}\sum_{i \in N_{\alpha_t^{*}}}\Delta_i \mathbb{P}( i_t=i) &=
        \sum_{j=0}^{J}\sum_{t=t_j}^{t_{j+1}}\sum_{i \in N_{\alpha_t^{*}}}\Delta_i \mathbb{P}( i_t=i)\\
        &\leq \sum_{j=0}^J 4\sqrt{|N_{\alpha_j}|(t_{j+1}-t_j)\log(\sqrt{2}/\delta)}  +2\epsilon |N_{\alpha_j}|(t_{j+1}-t_j)\delta^2 \\
        &\leq 4 \sqrt{T\sum_{j=1}^J|N_{\alpha_j}| \log(\sqrt{2}T)   } + 2\epsilon \\
        &\leq 4 \sqrt{2TM \log(\sqrt{2}T)   } + 2\epsilon,
    \end{aligned}
\end{equation}
where the penultimate  inequality uses the Cauchy-Schwarz inequality, and the last inequality uses the fact that $\sum_{j}|N_{\alpha_j}| \leq 2|\mathcal{A}| \leq 2M$.
}
Substituting into \cref{(B)}, we get
\begin{equation}
    \label{temp7}
    \begin{aligned}
        (B)
        \leq \mathbb{E}_{v \sim \mathcal{P}}\Bigg[ 4\sqrt{2TM\log(\sqrt{2}T)}+ 2\epsilon  \Bigg] 
        \leq 4\sqrt{2T\mathbb{E}[M]\log(\sqrt{2}T)}+ 2\epsilon, 
    \end{aligned}
\end{equation}
where the inequality uses the fact that $\mathbb{E}[\sqrt{X}] \leq \sqrt{\mathbb{E}[X]}$.

From \cref{temp6} and \cref{temp7}, let $\delta=\frac{1}{T}$, we get the total regret
\[ R_T(\pi_{\text{Double-BL}}) \leq   \sqrt{ \mathbb{E}\big[(\alpha(G_T^{\mathcal{P}}))^2\big]\mathbb{E}\big[ (\Delta_{\max}^T)^2\big]} ( \frac{4\log(\sqrt{2}T)}{\epsilon^2} +1) + 4\sqrt{2T\mathbb{E}[M]\log(\sqrt{2}T)}+ 2\epsilon .\]

\subsection{Proofs of Corollary \ref{gapfree_ducb_bl}}
\label{proof_corollary2}
First, we bound  $\mathbb{E}[M]$ by a constant that is independent of the distribution $\mathcal{P}$.
Given $X_1,X_2,...,X_T$ as independent random variables from $\mathcal{P}$. We have
\begin{equation*}
    \begin{aligned}
         \mathbb{E}[M] &=\sum_{t=1}^{T} \mathbb{P}(|X_t -\max_{i=1,...,t}X_i| < 2\epsilon) \\
         &\leq \sum_{t=1}^T \big(\mathbb{P}(|X_t -\max_{i=1,...,t-1}X_i| < 2\epsilon) + \mathbb{P}(X_t =\max_{i=1,...,t}X_i) \big)\\
         &\leq \sum_{t=1}^T \mathbb{P}(|X_t -\max_{i=1,...,t-1}X_i| < 2\epsilon) + \log T +1.
    \end{aligned}
\end{equation*}

Denote $F(x):=\mathbb{P}(X<x),M_{t}:= \max_{i \leq t} X_i$. Then
\textcolor{blue}{
\[
  \mathbb{P}(|X_t -M_{t-1}| < 2\epsilon|M_{t-1}=x)=F(x+2\epsilon)-F(x-2\epsilon), 
\]
}
and 
\[
    \mathbb{P}(|X_t -M_{t-1}| < 2\epsilon)=(t-1)\int_{\mathcal{D}}(F(x))^{t-2}(F(x+2\epsilon)-F(x-2\epsilon)  )F'(x) dx,
\]
where $\mathcal{D}$ is the support set of $\mathcal{P}$.
Since 
\begin{equation}
    \label{re_temp1}
    \sum_{r=1}^{R}rx^{r-1}= \frac{d}{dx}\frac{1-x^{R+1}}{1-x}=\frac{1-(R+1)x^R+Rx^{R+1}}{(1-x)^2} ,
\end{equation}

we have 
\begin{equation}
    \label{temp10}
    \sum_{t=1}^T \mathbb{P}(|X_t -\max_{i=1,...,t-1}X_i| < 2\epsilon)= \int_{\mathcal{D}}\frac{1-T(F(x))^{T-1}+(T-1)(F(x))^{T}}{(1-F(x))^2} (F(x+2\epsilon)-F(x-2\epsilon)  )F'(x) dx.
\end{equation}

For Gaussian distribution $\mathcal{N}(0,1)$, we have $F(x)=\Phi(x),F'(x)=\phi(x).$ 
We first prove $\mathbb{E}[M]=O(\log(T)e^{2\epsilon \sqrt{2\log(T)}}).$

Denote the integrand function  in \cref{temp10} as $H(x)$.
Let $m= \sqrt{2\log(T)}+\epsilon$, then
\begin{equation}
    \sum_{t=1}^T \mathbb{P}(|X_t -\max_{i=1,...,t-1}X_i| < 2\epsilon)=\int_{-\infty}^{m}H(x)dx+\int_{m}^{+\infty}H(x)dx .
\end{equation}

First, we have

\[\forall x \in \mathbb{R}, \frac{1-T(F(x))^{T-1}+(T-1)(F(x))^{T}}{(1-F(x))^2}=\sum_{t=1}^{T-1}t(F(x))^{t-1} \leq \frac{T(T-1)}{2} \leq T^2,  \]
\[\Phi(x+2\epsilon)-\Phi(x-2\epsilon) \leq 4\epsilon \phi(x-2\epsilon),  \]
and
\begin{equation}
    \begin{aligned}
        (F(x+2\epsilon)-F(x-2\epsilon))F'(x) 
        \leq  4\epsilon \phi(x-2\epsilon) \phi(x)
        \leq   \frac{2\epsilon e^{-\epsilon^2}}{\pi} e^{-(x-\epsilon)^2}.
    \end{aligned}
\end{equation}
Then, we have
\begin{equation}
    \begin{aligned}
        \int_{m}^{+\infty}H(x)dx 
        &\leq T^2 \frac{2\epsilon e^{-\epsilon^2}}{\pi} \int_{m}^{\infty}e^{-(x-\epsilon)^2}dx\\
        &= T^2 \frac{2\epsilon e^{-\epsilon^2}}{\sqrt{\pi}} \Phi(\sqrt{2}(m-\epsilon))\\
        &\stackrel{(a)}{\leq} T^2 \frac{2\epsilon e^{-\epsilon^2}}{\sqrt{\pi}} \frac{1}{\sqrt{2}(m-\epsilon)+\sqrt{2(m-\epsilon)^2+4}} e^{-(m-\epsilon)^2} \\
        &\leq \frac{2\epsilon e^{-\epsilon^2}}{\sqrt{\pi}}, 
    \end{aligned}
\end{equation}
where $(a)$ uses \cref{Gaussian}.

Now we calculate the second term:
\begin{equation}
    \begin{aligned}
        \int_{-\infty}^{m}H(x)dx=\int_{-\infty}^{1}H(x)dx+\int_{1}^{m} H(x)dx \leq \frac{\Phi(1)}{(1-\Phi(1))^2} + \int_{1}^{m} H(x)dx.
    \end{aligned}
\end{equation}
We only need to bound the integral  within $(1, m)$.
\begin{equation}
    \begin{aligned}
        \int_{1}^{m} H(x)dx 
        &\leq \int_{1}^{m} \frac{(\Phi(x+2\epsilon)-\Phi(x-2\epsilon))\phi(x)}{(1-\Phi(x))^2} dx\\
        &\leq \frac{2\epsilon e^{-\epsilon^2}}{\pi} \int_{1}^{m} \frac{e^{-(x-\epsilon)^2}}{(1-\Phi(x))^2} dx\\
        &\stackrel{(b)}{\leq}  4\epsilon e^{-\epsilon^2} \int_{1}^{m} (\frac{1}{x}+x)^2 e^{x^2} e^{-(x-\epsilon)^2}dx\\
        &= 4\epsilon e^{-2\epsilon^2} \int_{1}^{m} (\frac{1}{x}+x)^2 e^{2x\epsilon}dx\\
        &\leq  4m^2e^{2m\epsilon}e^{-2\epsilon^2}\\
        &\leq 8\log(T) e^{2\epsilon \sqrt{2\log(T)}}, 
    \end{aligned}
\end{equation}
where $(b)$ uses \cref{Gaussian}.

Therefore, 
\[ \mathbb{E}[M] \leq \frac{2\epsilon e^{-\epsilon^2}}{\sqrt{\pi}}+ \frac{\Phi(1)}{(1-\Phi(1))^2}+  8\log(T)e^{2\epsilon \sqrt{2\log(T)}} + \log T +1. \]

Next, we bound $\mathbb{E}\big[(\alpha(G_T^{\mathcal{P}}))^2\big] $.

Using Lemma \ref{ind_number_new},
\begin{equation}
    \begin{aligned}
        \mathbb{E}\big[(\alpha(G_T^{\mathcal{P}}))^2\big] &= \sum_{k=1}^{T}k^2\mathbb{P}(\alpha(G_T^{\mathcal{P}})=k) \\
        &\leq \sum_{k=1}^{1+2\sqrt{6\log T}/\epsilon} (1+\frac{2\sqrt{6\log T}}{\epsilon})^2\mathbb{P}(\alpha(G_T^{\mathcal{P}})=k) + \sum_{k=2+2\sqrt{6\log T}/\epsilon}^{T}T^2\mathbb{P}(\alpha(G_T^{\mathcal{P}})=k)\\
        &\leq (1+\frac{2\sqrt{6\log T}}{\epsilon})^2 + 2\\
        &\leq 2 (1+\frac{2\sqrt{6\log T}}{\epsilon})^2.
    \end{aligned}
\end{equation}

Since
$\Delta_{\max}^T=\max_{i,j \in [T]}\mu(a_i)-\mu(a_j)= \max_{1 \leq i \leq T}\mu(a_i)- \min_{1 \leq i \leq T} \mu(a_i) $. Using Lemma~\ref{var}, we have
\[ \mathbb{E}\big[(\Delta_{\max}^T)^2 \big] \leq  8\log T + \frac{32}{\log 2}\frac{1}{\log 2T -\log(1+4\log\log2T)} .\]
Therefore, the regret is of order $O\Big(\log(T)\sqrt{Te^{2\epsilon\sqrt{2\log(T)}}}\Big).$

\subsection{Proof of Theorem \ref{lowerbound_ducb_bl}}
\label{proof_bound4}
Recall that $B= \mathbb{E}\Big[ \sum_{t=1}^{T} \mathbbm{1}\{ \frac{\epsilon}{2}< \mu(i_t^{*})-\mu(a_t) < \epsilon\}  \Big] .$

Following the analysis in the main text,
the lower bound is 
\[ R_T(\pi_{\text{Double-BL}})   \geq \mathbb{E}_{v \sim \mathcal{P}}\Bigg[ \frac{\epsilon}{2}\sum_{t=1}^{T}\mathbbm{1}\{ \frac{\epsilon}{2} < \mu(i_t^{*})-\mu(a_t) < \epsilon \}  \Bigg] = \frac{B\epsilon}{2}. \]

Note that, if $\mu(i_t^{*})=\mu(a_t)$, $\mathbbm{1}\{ \frac{\epsilon}{2}< \mu(i_t^{*})-\mu(a_t) < \epsilon\}=0.$ Hence, 
\[ B= \mathbb{E}\Big[ \sum_{t=1}^{T} \mathbbm{1}\{ \frac{\epsilon}{2}< \mu(i_{t-1}^{*})-\mu(a_t) < \epsilon\}  \Big] . \]

Following the calculate method of $\mathbb{E}\big[ M\big]$ (\cref{temp10}), we have
\begin{equation}
    \label{temp15}
    B= \int_{\mathcal{D}}\frac{1-T(F(x))^{T-1}+(T-1)(F(x))^{T}}{(1-F(x))^2} (F(x-\frac{\epsilon}{2})-F(x-\epsilon)  )F'(x) dx,
\end{equation}
where $\mathcal{D}$ is the support set of distribution $\mathcal{P}$.

(1) When $\mathcal{P}$ is $ \mathcal{N}(0,1)$. We have $F(x)=\Phi(x)$ and $F'(x)=\phi(x)$.

Since $\phi(t)$ is convex in $[1,+\infty)$, we have
\[ \phi(t) \geq \phi(\frac{a+b}{2}) + \phi'(\frac{a+b}{2})(t-\frac{a+b}{2}), \ \ \forall t \in [a,b], a \geq 1 . \]
When $x-\frac{\epsilon}{2} \geq 1$,
\[ \Phi(x-\frac{\epsilon}{2})-\Phi(x-\epsilon)=\int_{x-\epsilon}^{x-\frac{\epsilon}{2}}\phi(t)dt \geq \frac{\epsilon}{2}\phi(x-\frac{3\epsilon}{4}). \]
Hence,
\[ (\Phi(x-\frac{\epsilon}{2})-\Phi(x-\epsilon))\phi(x) \geq \frac{\epsilon}{2}\phi(x)\phi(x-\frac{3\epsilon}{4})= \frac{\epsilon}{4\pi}e^{-x^2+\frac{3}{4}x\epsilon-\frac{9}{32}\epsilon^2}  .\]
Substituting into \cref{temp15}, we have
\begin{equation}
    \label{temp13}
    \begin{aligned}
        B &\geq \frac{\epsilon}{4\pi}\int_{1+\frac{\epsilon}{2}}^{\sqrt{\log(T)}} \frac{1-T(\Phi(x))^{T-1}+(T-1)(\Phi(x))^{T}}{(1-\Phi(x))^2} e^{-x^2+\frac{3}{4}x\epsilon-\frac{9}{32}\epsilon^2} dx\\
    & \geq e^{-\frac{9}{32}\epsilon^2}\frac{\epsilon}{2\pi} \int_{1+\frac{\epsilon}{2}}^{\sqrt{\log(T)}} e^{\frac{3}{4}x\epsilon}(x^2+2)(1-T\Big(1-\frac{1}{\sqrt{2\pi}}\frac{x}{1+x^2}e^{-\frac{x^2}{2}}\Big)^{T-1}     ) dx .\\
    \end{aligned}
\end{equation}
The function $h(x)= 1-\frac{1}{\sqrt{2\pi}}\frac{x}{1+x^2}e^{-\frac{x^2}{2}} $ is an increasing function in the interval $(1, +\infty)$. We have
\begin{equation}
    \begin{aligned}
        (1-\frac{1}{\sqrt{2\pi}}\frac{x}{1+x^2}e^{-\frac{x^2}{2}}\Big)^T 
        &\leq (1-\frac{1}{\sqrt{2\pi}}\frac{\sqrt{\log(T)}}{1+\log(T)}\frac{1}{\sqrt{T}}\Big)^T \\
        &\leq e^{-\sqrt{\frac{T}{4\pi\log(T)}}} .
    \end{aligned}
\end{equation}
Observe that for large $T$ ( $T \geq e^{12}$), we have $e^{-\sqrt{\frac{T}{4\pi\log(T)}}} \leq \frac{1}{T^2}$.
Therefore, for any $T\geq e^{12}$,

\[ 
\begin{aligned}
B &\geq e^{-\frac{9}{32}\epsilon^2}\frac{\epsilon}{2\pi} \int_{1+\frac{\epsilon}{2}}^{\sqrt{\log(T)}} e^{\frac{3}{4}x\epsilon} (x^2+2) (1-\frac{T}{T^2}) dx \\
& \geq e^{-\frac{9}{32}\epsilon^2}\frac{\epsilon}{2\pi} \int_{1+\frac{\epsilon}{2}}^{\sqrt{\log(T)}} e^{\frac{3}{4}x\epsilon} (x^2+2) dx \\
& = e^{-\frac{9}{32}\epsilon^2}\frac{\epsilon}{2\pi}  e^{\frac{3\epsilon}{4}\sqrt{\log T}}(\frac{4\log T}{3\epsilon} -\frac{32\log T}{9\epsilon^2} + \frac{128}{27\epsilon^2} + \frac{8}{3\epsilon} ) + C(\epsilon),
\end{aligned}
\]
where $C(\epsilon)$ is some constant related to $\epsilon$ (unrelated to $T$).
Therefore, we have 
\[B=\Omega(\log (T) e^{\frac{3\epsilon}{4}\sqrt{\log T}} ).\]

(2) When $\mathcal{P}$ is $U(0,1)$. We have $F(x)=x\mathbbm{1}\{0<x<1\}.$ and $F'(x)=1$. Note that, from \cref{re_temp1}, $B$ can also be denoted as 
\begin{equation}
     B= \int_{0}^1\sum_{t=1}^{T-1}tF(x)^{t-1} (F(x-\frac{\epsilon}{2})-F(x-\epsilon)  )F'(x) dx.
\end{equation}
Then 
\begin{equation}
    \begin{aligned}
      B&=  \left( \int_{0}^{\frac{\epsilon}{2}} +\int_{\frac{\epsilon}{2}}^{\epsilon}+\int_{\epsilon}^1\right)\sum_{t=1}^{T-1}tF(x)^{t-1} (F(x-\frac{\epsilon}{2})-F(x-\epsilon)  )F'(x) dx\\
      &= 0 + \int_{\frac{\epsilon}{2}}^{\epsilon}\sum_{t=1}^{T-1}tx^{t-1}(x-\frac{\epsilon}{2}) \mathrm{d}x + \int_{\epsilon}^{1} \sum_{t=1}^{T-1}tx^{t-1}\frac{\epsilon}{2} \mathrm{d}x \\
      &\geq \int_{\epsilon}^{1} \sum_{t=1}^{T-1}tx^{t-1}\frac{\epsilon}{2} \mathrm{d}x\\
      &\geq \frac{\epsilon(1-\epsilon)}{2}(T-1).
    \end{aligned}
\end{equation}

(3) When $\mathcal{P}$ is half-triangle distribution with probability density function as 
$ f(x)=2(1-x) \mathbbm{1}\{0 < x < 1\}$. First, we calculate that $F(x) = (2x-x^2)\mathbbm{1}\{0<x<1\}$ and $F(x-\frac{\epsilon}{2}) - F(x-\epsilon)=-x\epsilon+\epsilon+\frac{3}{4}\epsilon^2$.
We have
\begin{equation}
    \begin{aligned}
     B &=  \int_{0}^1\sum_{t=1}^{T-1}tF(x)^{t-1} (F(x-\frac{\epsilon}{2})-F(x-\epsilon)  )F'(x) \mathrm{d}x\\
     &\geq \int_{\epsilon}^1 \sum_{t=1}^{T-1}tF(x)^{t-1}F'(x)  (-x\epsilon+\epsilon+\frac{3}{4}\epsilon^2 ) \mathrm{d}x\\
     &\geq \frac{3}{4}\epsilon^2 \int_{\epsilon}^1 \sum_{t=1}^{T-1}tF(x)^{t-1}F'(x) \mathrm{d}x \\
     &=\frac{3}{4}\epsilon^2 \sum_{t=1}^{T-1} (1-(2\epsilon-\epsilon^2)^t)  \\
     &\geq \frac{3\epsilon^2(1-\epsilon)^2}{4}(T-1).
    \end{aligned}
\end{equation}

\subsection{Proofs of  Theorem \ref{gap_cucb_bl}}
\label{proof_bound3}
Similar to the proofs of \cref{bound_ducb_bl}, the regret can also be divided into two parts:
\begin{equation}
    \label{temp9}
\begin{aligned}
\mathbb{E}\Bigg[\sum_{t=1}^{T}\sum_{i \in K(t)} \Delta_t(i)\mathbbm{1}\{i_t=i\} \Bigg]= \underbrace{\mathbb{E}\Bigg[ \sum_{t=1}^{T}\sum_{i\notin N_{\alpha_t^{*}}}\Delta_t(i)\mathbbm{1}\{i_t=i\}\Bigg]}_{(A)} + \underbrace{ \mathbb{E}\Bigg[ \sum_{t=1}^{T}\sum_{i\in N_{\alpha_t^{*}}}\Delta_i\mathbbm{1}\{i_t=i\}\Bigg]}_{(B)}.
\end{aligned}
\end{equation}
The analysis of part $(A)$ is similar to the analysis in the corresponding part of \cref{bound_ducb_bl}. 
\begin{equation}
    \begin{aligned}
    (A)&= \mathbb{E}\Bigg[ \sum_{t=1}^{T}\sum_{i\notin N_{\alpha_t^{*}}}\Delta_t(i)\mathbbm{1}\{i_t=i\} \Big| v \Bigg] \\
    &\leq \mathbb{E}\Bigg[ \sum_{t=1}^{T}\Delta_{\max}^T\mathbbm{1}\{j_t\neq \alpha_t^{*}\} \Big| v \Bigg]   \\
    &\leq \alpha(G_T)\Delta_{\max}^T( \frac{4\log(\sqrt{2}/\delta)}{\epsilon^2} + T\delta^2 ).
    \end{aligned}
\end{equation}

We now focus on part $(B)$.
Let 
\[  H= \sum_{t=1}^{T}\mathbbm{1}\{ \mu(a_t)=\mu(i_t^{*}) \} \]
denote the  number of times the optimal arms changes.

We  divide the rounds into $H+1$ parts: $(t_0=1,t_{H+1}=T)$
\[[1,t_1],(t_1,t_2],...,(t_{H},T].\]  
This partition satisfies that for any $t \in (t_j,t_{j+1}]$, 
$i_t^{*}$ is stationary.
The $\alpha_t^{*}$ is also stationary. 
We can denote $\alpha_t^{*}$ as $\alpha_j$. When $\alpha_t^{*}$ remains unchanged, $i_t^{*}$ may change. Therefore, there may be duplicate elements in sequence $\{\alpha_1,\alpha_2,...,\alpha_H\}$.

Let's focus on the intervals $(t_j,t_{j+1}]$, the analysis for other intervals is similar. 
The arms in $N_{\alpha_j}$ can be divided into two parts: 
\[ E_1^j=\{i \in N_{\alpha_j}: \mu(i) <\mu(\alpha_j) \},\] and 
\[ E_2^j=\{i \in N_{\alpha_j}: \mu(i)\geq \mu(\alpha_j)\}.\]
Based on how our algorithm constructs independent sets, it can be deduced that all arms in $ N_{\alpha_j} $ are connected to $ \alpha_t^{*} = \alpha_j$ and both $E_1$ and $E_2$ form a clique. 

Note that selecting any arm in $E_1^j$ leads to the observation of $\alpha_j$. From \cref{eq:cucb}, we can bound the regret for arms in $E_1^j$ by
\begin{equation*}
    \sum_{i \in E_1^j}\sum_{t=t_j}^{t_{j+1}} \Delta_i \mathbb{P}(i_t = i) \leq \frac{8\epsilon\log(\sqrt{2}/\delta)}{(\Delta^T_{\mathrm{min}})^2} +2\epsilon T(t_{j+1}-t_j) \delta^2.
\end{equation*}

Since selecting any arm in $E_2^j$ leads to the observation of $i_t^{*}$, we can also bound the regret for arms in $E_2^j$ by
\[ \sum_{i \in E_2^j}\sum_{t=t_j}^{t_{j+1}} \Delta_i \mathbb{P}(i_t = i) \leq \frac{8\epsilon\log(\sqrt{2}/\delta)}{(\Delta^T_{\mathrm{min}})^2} +2\epsilon T(t_{j+1}-t_j) \delta^2. \]

In summary, we have
\begin{equation}
    \sum_{i \in N_{\alpha_j}}\sum_{t=t_j}^{t_{j+1}} \Delta_i \mathbb{P}(i_t = i) \leq \frac{16\epsilon\log(\sqrt{2}/\delta)}{(\Delta^T_{\mathrm{min}})^2} +4\epsilon T(t_{j+1}-t_j) \delta^2.
\end{equation}

Summing over $j=1,...,H$ we have
\begin{align}
    \sum_{t=1}^T\sum_{i \in N_{\alpha_t^{*}}}\Delta_i \mathbb{P}(i_t=i)&=
    \sum_{j=1}^{H}\sum_{t=t_j}^{t_{j+1}}\sum_{i\in N_{\alpha_t^{*}}}\Delta_i\mathbb{P}(i_t=i)\nonumber\\
    & =    \sum_{j=1}^{H}\sum_{t=t_j}^{t_{j+1}}\sum_{i\in N_{\alpha_j}}\Delta_i\mathbb{P}(i_t=i) \nonumber \\
    &\leq \sum_{j=1}^H \left( \frac{16\epsilon\log(\sqrt{2}/\delta)}{(\Delta^T_{\mathrm{min}})^2} +4\epsilon T(t_{j+1}-t_j)\delta^2 \right) \nonumber \\
    &\leq \frac{16H\epsilon\log(\sqrt{2}/\delta)}{(\Delta^T_{\mathrm{min}})^2} +4\epsilon T^2 \delta^2.
\end{align}

Therefore, by setting $\delta=\frac{1}{T}$, we get the total regret
\[ R_T({\pi_{\text{Cons-BL}}}) \leq \alpha(G_T)\Delta_{\max}^T( \frac{4\log(\sqrt{2}T)}{\epsilon^2} + 1 )+  \frac{16H\epsilon\log(\sqrt{2}T)}{(\Delta_{\min}^T)^2} + 4\epsilon.\]

\subsection{\textcolor{blue}{Proof of Theorem~\ref{bound_cucb_bl}}}
Let 
$\mathcal{I}_t$ denote the independent set at time $t$ and  $\alpha_t^{*}\in \mathcal{I}_t$ denotes the arm whose neighborhood include the optimal arm $i_t^{*}$.  The regret can also be divided into two parts:
\begin{equation}
    \label{new_temp5}
\begin{aligned}
\mathbb{E}\Bigg[\sum_{t=1}^{T}\sum_{i \in K(t)} \Delta_t(i)\mathbbm{1}\{i_t=i\} \Bigg]= \underbrace{\mathbb{E}\Bigg[ \sum_{t=1}^{T}\sum_{i\notin N_{\alpha_t^{*}}}\Delta_t(i)\mathbbm{1}\{i_t=i\}\Bigg]}_{(A)} + \underbrace{ \mathbb{E}\Bigg[ \sum_{t=1}^{T}\sum_{i\in N_{\alpha_t^{*}}}\Delta_t(i)\mathbbm{1}\{i_t=i\}\Bigg]}_{(B)}.
\end{aligned}
\end{equation}
Similar to the proof of Theorem~\ref{bound_ducb_bl}, part (A) can be bounded as 
\begin{equation}
\label{new_temp6}
    (A) \leq \sqrt{\mathbb{E}\big[(\alpha(G_T^{\mathcal{P}}))^2\big] \mathbb{E}\big[ (\Delta_{\max}^T)^2\big]} ( \frac{4\log(\sqrt{2}/\delta)}{\epsilon^2} + T\delta^2 ).
\end{equation}

Now we consider part $(B)$:
\begin{equation}
    \label{(new_B)}
    \begin{aligned}
        (B)
        &=\mathbb{E}_{v \sim \mathcal{P}}\Bigg[\mathbb{E}\Bigg[ \sum_{t=1}^{T}\sum_{i\in N_{\alpha_t^{*}}}\Delta_i\mathbbm{1}\{i_t=i\}\Big|v \Bigg]\Bigg]. \\
    \end{aligned}
\end{equation}

Given a fixed instance $v$, we divide the rounds into $H+1$ parts (where $t_0=1,t_{H+1}=T$):
\[[1,t_1],(t_1,t_2],...,(t_{H},T].\]  
This partition ensures that for any $t \in (t_j, t_{j+1}]$,
$i_t^{*}$ is stationary. Thus, for any $t \in (t_j, t_{j+1}]$, $\alpha_t^{*}$ is also stationary, and we denote it by $\alpha_j$.
We first analyze the regret within the interval $(t_j, t_{j+1}]$, and then sum over all intervals to obtain the total regret.
The subgraph $N_{\alpha_t^{*}}$ may start as a complete graph and later become non-complete as new arms arrive. As we will show below, our analysis applies to both cases, and the regret upper bound is smaller when $N_{\alpha_t^{*}}$ is a complete graph than when it is not. Without loss of generality, we may assume that $N_{\alpha_t^{*}}$ remains non-complete throughout the entire interval $(t_j, t_{j+1}]$.

Under the ballooning setting, a result similar to Lemma~\ref{lemma_layer} still holds. The following lemma is an immediate consequence of Lemma~\ref{lemma_layer}.

\begin{lemma}
\label{new_lemma_layer}
 Let $V^{*}_j$ denote  an arbitrary complete subgraph that contains the optimal arm in $(t_j,t_{j+1}]$. Let $\delta=\frac{1}{T}$.  Then 
    \begin{equation}
        \sum_{t=t_j}^{t_{j+1}}\sum_{i \in V_j^{*} }\mathbb{P}(i_t=i)\Delta_i \leq  8\sqrt{t_{j+1}-t_j}\log(\sqrt{2}T) + \epsilon.
    \end{equation}
\end{lemma}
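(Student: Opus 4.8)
\textbf{Proof proposal for Lemma~\ref{new_lemma_layer}.}

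The plan is to reduce this statement directly to Lemma~\ref{lemma_layer} by treating the sub-horizon $(t_j,t_{j+1}]$ as if it were a stand-alone bandit instance. Within the interval $(t_j,t_{j+1}]$ the optimal arm $i_t^{*}$ is stationary by construction of the partition, so there is a well-defined gap $\Delta_i = \mu(i_t^{*})-\mu(i)$ for every arm $i$, and $V_j^{*}$ is a fixed complete subgraph containing that optimal arm. The only cosmetic differences from the stationary setting of Lemma~\ref{lemma_layer} are (i) the clock starts at $t_j$ rather than at $1$, and (ii) some arms in $V_j^{*}$ may not yet have arrived at the start of the interval. Point (i) is harmless: the proof of Lemma~\ref{lemma_layer} only uses the length of the horizon, here $t_{j+1}-t_j$, not the absolute time index; the confidence-width constant $\log(\sqrt2/\delta)$ is taken with the global $\delta=1/T$, which only makes the bound looser (it is the quantity appearing on the right-hand side), so nothing breaks. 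Point (ii) is also harmless because, following the convention used elsewhere in the paper, we set $\mathbbm{1}\{i_t=i\}=0$ for an arm $i$ that has not yet arrived, so a not-yet-present arm contributes zero to the sum and can only reduce the count of observations $O_t(i^{*})$ relative to the full-information case, which again only helps (the union-bound / failure-probability arguments in Lemma~\ref{lemma_layer} are monotone in $O_t$).

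The steps I would carry out are: first, fix the interval $(t_j,t_{j+1}]$ and note that $i_t^{*}\equiv i_j^{*}$ is constant on it, so the gaps $\Delta_i$ are well-defined; second, observe that $V_j^{*}$ is a complete graph containing $i_j^{*}$, hence every pull of an arm in $V_j^{*}$ reveals the reward of $i_j^{*}$ — this is exactly the structural hypothesis that drives the proof of Lemma~\ref{lemma_layer}; third, replay the layering argument of Lemma~\ref{lemma_layer} verbatim on the $t_{j+1}-t_j$ rounds of this interval: partition the suboptimal arms of $V_j^{*}$ into the near-optimal set $S_1$ with $\Delta_i\le \Delta$ (taking $\Delta = 4\log(\sqrt2 T)/\sqrt{t_{j+1}-t_j}$) and the dyadic layers $V_\phi$ with $\Delta_i\in(2^{-\phi}\epsilon, 2^{-\phi+1}\epsilon)$, define the stopping time $\tau_\phi$ as the first round in $(t_j,t_{j+1}]$ after which the arms of $V_\phi$ have collectively been pulled more than $4\log(\sqrt2/\delta)/(2^{-2\phi}\epsilon^2)$ times, use Lemma~\ref{lemma1} to show that after $\tau_\phi$ each such arm is pulled with probability at most $\delta^2$, and sum the per-layer contributions together with $\epsilon T^2\delta^2$ and the $T\Delta$ term from $S_1$; fourth, use $\delta=1/T$ and $\log(\epsilon/\Delta)/\log 2 \le \log T$ (valid for $T$ large, exactly as in Lemma~\ref{lemma_layer}) to collapse the bound to $8\sqrt{t_{j+1}-t_j}\,\log(\sqrt2 T)+\epsilon$.

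I do not expect any genuine obstacle here; the statement is explicitly flagged in the text as ``an immediate consequence of Lemma~\ref{lemma_layer}.'' The one point that deserves a careful sentence rather than hand-waving is the treatment of arms that arrive partway through the interval: I would make explicit that the observation counter $O_t(i^{*})$ used inside the confidence bounds is only ever increased by pulls within $V_j^{*}$, that all such counted pulls occur at times when the puller has arrived, and that arms absent from the system simply do not appear in any indicator — so the inequality chain of Lemma~\ref{lemma_layer} goes through with the counts only smaller, hence the failure-probability bounds only better. With that remark in place, the proof is a one-paragraph citation of Lemma~\ref{lemma_layer} applied on the shifted horizon of length $t_{j+1}-t_j$.
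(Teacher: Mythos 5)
Your proposal is correct and matches the paper's treatment: the paper gives no separate proof, stating only that the lemma is ``an immediate consequence of Lemma~\ref{lemma_layer},'' and your argument is precisely the intended instantiation — replay the layering proof on the sub-horizon of length $t_{j+1}-t_j$ with $\Delta = 4\log(\sqrt{2}T)/\sqrt{t_{j+1}-t_j}$, keeping the global $\delta = 1/T$ so the residual term stays $\epsilon T^{2}\delta^{2}=\epsilon$. Your explicit remark about arms arriving mid-interval (indicators set to zero, observation counts only smaller, failure probabilities monotone) is a useful clarification the paper leaves implicit.
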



Since we assume that $N_{\alpha_j}$ is not a complete graph and the optimal arm $i_t^{}$ remains stationary in $(t_j, t_{j+1})$, we can denote $\mathcal{I}_{j_t}$ as ${\alpha_1^{*}, \alpha_2^{*}}$.
Both $\alpha_1^{*}$ and $\alpha_2^{*}$ depend on $j$, but we omit the explicit dependence for brevity.   If arm $i$ has not arrived at time $t$, we set $\mathbbm{1}\{i_t =i\}=0$ and $\mathbbm{1}\{j_t'=i\}=0$. 
Thus we have
\begin{align}
\label{qh_temp33}
 \sum_{t = t_j}^{t_{j+1}} \sum_{i \in N_{\alpha_j}} \Delta_i  \mathbbm{1}\{i_t = i\} 
 &=  \sum_{t = t_j}^{t_{j+1}}\sum_{i \in  N_{\alpha_j}} \Delta_i  \mathbbm{1}\{i_t = i,j_t' = \alpha_1^{*}\} 
   + \sum_{t = t_j}^{t_{j+1}} \sum_{i \in N_{\alpha_j}} \Delta_i \,\mathbbm{1}\{i_t = i, j_t'=\alpha_2^{*}\} \nonumber \\
&\leq  2\epsilon \sum_{t = t_j}^{t_{j+1}}  \mathbbm{1}\{j'_t = \alpha_1^{*}\} 
   + \sum_{t = t_j}^{t_{j+1}} \sum_{i \in N_{\alpha^{*}_2} \cap N_{\alpha_j}} \Delta_i \,\mathbbm{1}\{i_t = i\}. 
\end{align}

The regret of the first part can be bounded by 
\begin{align}
    2\epsilon \sum_{t=t_j}^{t_{j+1}} \mathbb{P}(j_t'=\alpha_1^{*}) &\leq \frac{8\log(\sqrt{2}/\delta)}{\epsilon} +  2\epsilon \sum_{t=t_j}^{t_{j+1}}\mathbb{P}(j_t'=\alpha_1^{*},O_t(\alpha_1^{*}) > 4\log(\sqrt{2}/\delta)/\epsilon^2)  \nonumber\\
    &\leq \frac{8\log(\sqrt{2}/\delta)}{\epsilon} + 2\epsilon (t_{j+1}-t_j)\delta^2.
\end{align}
Using Lemma~\ref{new_lemma_layer},  the second part in \cref{qh_temp33} can be bounded by
\begin{equation}
    \sum_{t=t_j}^{t_{j+1}}\sum_{i\in N_{\alpha^{*}_2} \bigcap N_{\alpha_j} }\Delta_i\mathbb{P}(i_t=i) \leq    8\sqrt{t_{j+1}-t_j}\log(\sqrt{2}T) + \epsilon.
\end{equation}

Substitute into \cref{qh_temp33}  we have
\begin{align}
    \sum_{t = t_j}^{t_{j+1}} \sum_{i \in N_{\alpha_j}} \Delta_i  \mathbb{P}(i_t = i)  &\leq  8\sqrt{t_{j+1}-t_j}\log(\sqrt{2}T)  + \frac{8\log(\sqrt{2}T)}{\epsilon} + 3\epsilon .
\end{align}
If $N_{\alpha_j}$ is a complete graph, the term $\frac{8\log(\sqrt{2}T)}{\epsilon}$ will vanish. 
Summing over all intervals,
\begin{align}
     \sum_{t =1}^{T} \sum_{i \in N_{\alpha_t^{*}}} \Delta_i  \mathbb{P}(i_t = i) &=  \sum_{j=0}^{H}\sum_{t = t_j}^{t_{j+1}} \sum_{i \in N_{\alpha_j}} \Delta_i  \mathbb{P}(i_t = i) \nonumber \\
     &\leq 8\sum_{j=0}^H \sqrt{t_{j+1}-t_j}\log(\sqrt{2}T) + \frac{8H\log(\sqrt{2}T)}{\epsilon} + 3H\epsilon \nonumber \\
     &\leq 8\sqrt{TH}\log(\sqrt{2}T)  + \frac{8H\log(\sqrt{2}T)}{\epsilon} + 3H\epsilon,
\end{align}
where the last inequality uses the Cauchy-Schwarz inequality. 
Thus,
\begin{equation}
    (B) \leq 8\sqrt{T\mathbb{E}[H]}\log(\sqrt{2}T) + \frac{8\mathbb{E}[H]\log(\sqrt{2}T)}{\epsilon} + 3\mathbb{E}[H]\epsilon.
\end{equation}
Based on Assumption~\ref{assumption0},
\[ \mathbb{E}[H] = \sum_{t=1}^T \mathbb{P}\big(\mu(a_t)=\mu(i_t^{*})\big) \leq \sum_{t=1}^T \frac{1}{t}\leq 2\log T. \]
Therefore, let $\delta =\frac{1}{T}$, we have
\begin{align}
    R_T(\pi_{\text{Cons-BL}}) &\leq (A) + (B) \nonumber \\
    &\leq \sqrt{\mathbb{E}\big[(\alpha(G_T^{\mathcal{P}}))^2\big] \mathbb{E}\big[ (\Delta_{\max}^T)^2\big]} ( \frac{4\log(\sqrt{2}T)}{\epsilon^2} + 1 ) + 8\sqrt{2T\log T}\log(\sqrt{2}T) \nonumber\\
    &\quad + \frac{16\log^2(\sqrt{2}T)}{\epsilon} + 6\epsilon \log T.
\end{align}

\subsection{Proof of Theorem \ref{u_ucb}}
\label{proof_bound_new}
(1) Under Assumption \ref{assumption0}, we first bound the regret of U-Double-UCB.

Without loss of generality, we assume that $T$ is an integer multiple of $\tau$.
Given a fixed instance $\mathcal{F}$, we divide the time horizon into $\frac{T}{\tau}$ parts:
\[ (1,\tau),(\tau+1,2\tau),...,(T-\tau+1,T). \]
Let $ \mathcal{T}_1$ denote the time steps used to find the independent set in each interval and $ \mathcal{T}_2 $ denote the intervals where the optimal arm changes, i.e.,   
\[\mathcal{T}_2 := \{[k\tau+1,(k+1)\tau]: k \in [\tau-1] \text{ and } \exists t \in [k\tau+1,(k+1)\tau], \mu(a_t)=\mu(i_t^{*}) \}.  \]
Then we divide the reget as follows:
\begin{equation}
    \label{temp14}
\begin{aligned}
 \mathbb{E}\Bigg[\sum_{t=1}^{T}\Delta_t(i_t) \Bigg]=  \mathbb{E}\Bigg[\sum_{t \in \mathcal{T}_1}\Delta_t(i_t) + \sum_{t \in \mathcal{T}_2}\Delta_t(i_t) +\sum_{t \notin \mathcal{T}_1 \bigcup \mathcal{T}_2}\Delta_t(i_t) \Bigg].
\end{aligned}
\end{equation}
The first part can be bounded as
\begin{equation}
    \begin{aligned}
    \mathbb{E}_{v \sim \mathcal{P}}\Bigg[ \mathbb{E}\Bigg[\sum_{t \in \mathcal{T}_1}\Delta_t(i_t)\Big| v \Bigg]\Bigg] 
    &\leq \frac{T}{\tau}\mathbb{E}\big[ \alpha(G_T^{\mathcal{P}}) \Delta_{\max}^T \big]\\
    &\leq \frac{T}{\tau}\sqrt{\mathbb{E}\big[(\alpha(G_T^{\mathcal{P}}))^2\big] \mathbb{E}\big[ (\Delta_{\max}^T)^2\big]}.
    \end{aligned}
\end{equation}
For the second part, we have 
\begin{equation}
    \begin{aligned}
        \mathbb{E}\Bigg[\sum_{t \in \mathcal{T}_2}\Delta_t(i_t)\Bigg] 
        \leq \tau \mathbb{E}\Bigg[\sum_{t=1}^{T}\mathbbm{1}\{ \mu(a_t)=\mu(i_t^{*}) \} \Delta_{\max}^T\Bigg] 
        \leq \tau \sqrt{\mathbb{E}\big[H^2\big] \mathbb{E}\big[ (\Delta_{\max}^T)^2\big]}.
    \end{aligned}
\end{equation}
Using Lemma \ref{chernoff}, we have $\mathbb{P}(H \geq 5\mathbb{E}[H]) \leq e^{-2\mathbb{E}[H]}$. We also have $ \log T \leq \mathbb{E}[H] \leq \log T+1 $. Thus,
\begin{equation}
    \begin{aligned}
    \mathbb{E}\big[H^2\big]&=\sum_{k=1}^{T}k^2\mathbb{P}(H=k)\\
    &\leq \sum_{k=1}^{5\mathbb{E}[H]}25(\mathbb{E}[H])^2\mathbb{P}(H=k) + \sum_{k=5\mathbb{E}[H]+1}^{T}T^2\mathbb{P}(H=k)\\
    &\leq 25(\mathbb{E}[H])^2+ 1.
    \end{aligned}
\end{equation}
Therefore, the second part can be bounded as 
\begin{equation}
 \mathbb{E}\Bigg[\sum_{t \in \mathcal{T}_2}\Delta_t(i_t)\Bigg]  \leq 11\tau \log T \sqrt{ \mathbb{E}\big[ (\Delta_{\max}^T)^2\big]} .
\end{equation}
Then we focus on the third part. Let $\mathcal{T}_3= [T]\setminus \mathcal{T}_1 \bigcup \mathcal{T}_2$. The regret can also be divided into two parts as \cref{temp5}: 
\[\underbrace{\mathbb{E}\Bigg[ \sum_{t \in \mathcal{T}_3}\sum_{i\notin N_{\alpha_t^{*}}}\Delta_t(i)\mathbbm{1}\{i_t=i\}\Bigg]}_{(A)} + \underbrace{ \mathbb{E}\Bigg[ \sum_{t \in \mathcal{T}_3}\sum_{i\in N_{\alpha_t^{*}}}\Delta_i\mathbbm{1}\{i_t=i\}\Bigg]}_{(B)}.\]
The  regret of the first part comes from selected $j_t \neq \alpha_t^{*}$ and can be bounded exactly the same as \cref{temp6}:
\[(A) \leq \sqrt{\mathbb{E}\big[(\alpha(G_T^{\mathcal{P}}))^2\big] \mathbb{E}\big[ (\Delta_{\max}^T)^2\big]} ( \frac{4\log(\sqrt{2}T)}{\epsilon^2} + 1). \]

If we drop $\mathcal{T}_1 $ and $\mathcal{T}_2$ (drop the blue and green parts in \cref{fig:0}),  we get a partitation of $\mathcal{T}_3$ as $ (t_1,t_2),...,(t_{\tau'},t_{\tau'+1})$, where $\tau' < \frac{T}{\tau}$. 
Since the algorithm finds an independent set at the beginning of each interval and the independent set does not change throughout the entire interval, we have $\alpha_t^{*}$ is also stationary.
Note that the optimal arm only changes in $\mathcal{T}_2$. So each interval in $\mathcal{T}_3$ satisfies  $i_t^{*}$ is stationary. By choosing $\delta=\frac{1}{T}$, we can use the same method as \cref{temp7} to bound the regret as
\[ (B) \leq 4\sqrt{2T\mathbb{E}[M]\log(\sqrt{2}T)}+ 2\epsilon. \]
Therefore, 
\begin{equation}
    \begin{aligned}
    R_T(\pi_{\text{U-Double-BL}}) &\leq \frac{T}{\tau} \sqrt{\mathbb{E}\big[(\alpha(G_T^{\mathcal{P}}))^2\big] \mathbb{E}\big[ (\Delta_{\max}^T)^2\big]}+ 11\tau \log T \sqrt{ \mathbb{E}\big[ (\Delta_{\max}^T)^2\big]} \\
    &\quad +\sqrt{\mathbb{E}\big[(\alpha(G_T^{\mathcal{P}}))^2\big] \mathbb{E}\big[ (\Delta_{\max}^T)^2\big]} ( \frac{4\log(\sqrt{2}T)}{\epsilon^2} + 1)\\
    &\quad + 4\sqrt{2T\mathbb{E}[M]\log(\sqrt{2}T)}+ 2\epsilon \\
    & = \frac{T}{\tau}\sqrt{\mathbb{E}\big[(\alpha(G_T^{\mathcal{P}}))^2\big] \mathbb{E}\big[ (\Delta_{\max}^T)^2\big]}+ 11\tau \log T \sqrt{ \mathbb{E}\big[ (\Delta_{\max}^T)^2\big]} + R_T(\pi_{\text{Double-BL}}).
    \end{aligned}
\end{equation}

(2) Without Assumption \ref{assumption0}, we bound the regret of U-Conservative-UCB. The method is similar to analyzing U-Double-UCB and we still divide the regret into three parts. We also have 
\begin{equation}
    \mathbb{E}\Bigg[\sum_{t \in \mathcal{T}_1}\Delta_t(i_t)\Bigg]
    \leq \frac{T}{\tau} \alpha(G_T) \Delta_{\max}^T ,
\end{equation}
\begin{equation}
    \mathbb{E}\Bigg[\sum_{t \in \mathcal{T}_2}\Delta_t(i_t)\Bigg] 
    \leq \tau \sum_{t=1}^{T}\mathbbm{1}\{ \mu(a_t)=\mu(i_t^{*}) \} \Delta_{\max}^T .
\end{equation}
Then 
\begin{equation}
    R_T(\pi_{\text{U-Cons-BL}}) \leq \frac{T}{\tau} \alpha(G_T) \Delta_{\max}^T + \tau H \Delta_{\max}^T+ R_T(\pi_{\text{Cons-BL}}).
\end{equation}

\bibliographystyle{plainnat}
\bibliography{main.bib}

\end{document}